\pgfplotsset{compat=newest}
\newtheorem{theorem}{Theorem}[section]
\newtheorem{corollary}{Corollary}[section]
\newtheorem{lemma}{Lemma}[section]
\newtheorem{definition}{Definition}[section]
\newtheorem{example}{Example}[section]
\newcommand{\Step}[1]{{\color{ForestGreen} \Statex$\vartriangleright$ #1}}
\Crefname{section}{Sec.}{Secs.}
\Crefname{appendix}{App.}{Apps.}
\Crefname{algorithm}{Alg.}{Algs.}
\Crefname{figure}{Fig.}{Figs.}
\Crefname{table}{Tab.}{Tabs.}
\Crefname{definition}{Def.}{Defs.}
\Crefname{theorem}{Thm.}{Thms.}
\Crefname{lemma}{Lem.}{Lems.}
\Crefname{corollary}{Cor.}{Cors.}
\Crefname{conjecture}{Con.}{Cons.}
\Crefname{example}{Ex.}{Exs.}
\newcommand{\indep}{\perp\kern-6pt\perp}
\newcommand{\dep}{\centernot{\perp\kern-6pt\perp}}
\newcommand{\starleft}{*\kern-5pt}
\newcommand{\starright}{\kern-5pt*}
\acrodef{LOAD}{Local Optimal Adjustments Discovery}
\acrodef{AID}{adjustment identification distance}
\acrodef{ATE}{average treatment effect}
\acrodef{CI}{conditional independence}
\acrodef{CPDAG}{completed partially directed acyclic graph}
\acrodef{DAG}{directed acyclic graph}
\acrodef{KCI}{Kernel-based conditional independence}
\acrodef{LCD}{local causal discovery}
\acrodef{MB}{Markov blanket}
\acrodef{MEC}{Markov equivalence class}
\acrodef{SHD}{structural Hamming distance}
\acrodef{SNAP}{Sequential Non-Ancestor Pruning}
\begin{document}

\twocolumn[
    \aistatstitle{Local Causal Discovery for Statistically Efficient Causal Inference}

    \aistatsauthor{ Mátyás Schubert \And Tom Claassen \And  Sara Magliacane }
    \aistatsaddress{University of Amsterdam \And Radboud University Nijmegen \And Saarland University \& \\ University of Amsterdam}
]

\begin{abstract}

Causal discovery methods can identify valid adjustment sets for causal effect estimation for a pair of \emph{target variables}, even when the underlying causal graph is unknown. Global causal discovery methods focus on learning the whole causal graph and therefore enable the recovery of \emph{optimal adjustment sets}, i.e., sets with the lowest asymptotic variance, but they quickly become computationally prohibitive as the number of variables grows. Local causal discovery methods offer a more scalable alternative by focusing on the local neighborhood of the target variables, but are restricted to statistically suboptimal adjustment sets. In this work, we propose Local Optimal Adjustments Discovery (LOAD), a sound and complete causal discovery approach that combines the computational efficiency of local methods with the statistical optimality of global methods. First, LOAD identifies the causal relation between the targets and tests if the causal effect is identifiable by using only local information. If it is identifiable, it finds the possible descendants of the treatment and infers the optimal adjustment set as the parents of the outcome in a modified forbidden projection. Otherwise, it returns the locally valid parent adjustment sets. In our experiments on synthetic and realistic data LOAD outperforms global methods in scalability, while providing more accurate effect estimation than local methods.

\end{abstract}

\section{INTRODUCTION}
\label{sec:introduction}

Estimating causal effects \citep{pearl2009causality} is an essential task in science and decision making.
A popular method for causal effect estimation is covariate adjustment based on the underlying causal graph.
However, often the causal graph is unknown.
In this case we can learn it using causal discovery \citep{glymour2019review}. We can then use the learned graph to read off valid adjustment sets \citep{perkovic2018complete}.
Global causal discovery methods, i.e., algorithms that learn the full graph, are generally not scalable to large number of variables due to computational demand \citep{mokhtarian2021recursive}, making them inapplicable in many practical scenarios.

On the other hand, if we only care about estimating causal effects between a pair of target variables, then recovering the full causal graph over all variables might be unnecessary and inefficient.
Under the assumption of causal sufficiency, i.e., no unobserved confounders or selection bias, local causal discovery methods estimate the causal effect between a pair of targets by discovering only local information about the causal graph around the treatment \citep{wang2014discovering, gupta2023local} and identifying the parent adjustment set. Recent extensions consider also the causally insufficient case \citep{xie2024local,ling2025local}.
Other approaches recover only coarse-grained information on ancestral relations that are enough to identify valid adjustment sets \citep{watson2022causal, maasch2024local}.

While these methods can effectively reduce computational demands for finding valid adjustment sets, these sets can be suboptimal in terms of asymptotic variance \citep{henckel2022graphical}. Moreover, they require additional assumptions about ancestral relationships.
\citet{schubert2025snap} show that definite non-ancestors of the target variables are not required to discover the optimal adjustment set. They propose SNAP, which progressively identifies these definite non-ancestors and prunes them from the causal discovery process, improving the computational efficiency of the method.
However, SNAP still requires discovering fine-grained causal relations over the remaining, potentially large, number of variables, making it potentially inefficient.

In this paper, we combine the advantages of global and local methods, and propose \ac{LOAD}, a method to identify the optimal adjustment set using only local information around variables.
\ac{LOAD} employs local causal discovery to cheaply identify information over the local neighborhoods of the target variables and their siblings.
Then, using only this local information, it identifies the type of causal relation between the targets and determines whether the causal effect is identifiable. If it is identifiable, it then finds the possible descendants of the treatment and infers the optimal adjustment set as the parents of the outcome in a variant of the forbidden projection \citep{witte2020efficient}. Otherwise, it returns the locally valid parent adjustment sets \citep{maathuis2009estimating}.
Our contributions are:
\begin{compactitem}
    \item We develop a method to determine the identifiability of a causal effect  from local information.
    \item We propose \acf{LOAD}, a sound and complete method to identify the optimal adjustment set, using only local information around variables.
    \item We evaluate \ac{LOAD} on both simulated and realistic data, showing that it can recover high-quality adjustment sets with low computational costs.
\end{compactitem}
\section{PRELIMINARIES}
\label{sec:preliminaries}

We consider graphs $G = (\mathbf{V}, \mathbf{E})$ with nodes $\mathbf{V}$ and edges $\mathbf{E} \subseteq \mathbf{V} \times \mathbf{V}$. We denote undirected edges as $X - Y$ and directed edges as $X \to Y$.  An undirected graph contains only undirected edges, while a directed graph contains only directed edges. A mixed graph can contain both directed and undirected edges.

If two nodes are connected by an edge, then we say that they are \emph{adjacent}, and denote the set of nodes adjacent to a node $X$ as $Adj_G(X)$. 
If two nodes are connected by an undirected edge $X - Y$ then we say that they are siblings and denote the set of siblings of $X$ as $Sib_G(X)$.
If $X \to Y$, we say that $X$ is a parent of $Y$ and $Y$ is a child of $X$, and denote the set of parents and children of $X$ as $Pa_G(X)$ and $Ch_G(X)$ respectively. 
In directed graphs, the Markov blanket $MB(X)$ of $X$ consists of its parents, children and the parents of its children.

A path between two nodes $X$ and $Y$ is a sequence of distinct adjacent nodes starting with $X$ and ending with $Y$.
If every edge on a path between $X$ and $Y$ is directed towards $Y$ then it is a directed path from $X$ to $Y$.
A \ac{DAG} is a graph with only directed edges and no cycles, i.e., no directed paths from a node to itself.
If there is a directed path from $X$ to $Y$ in a DAG $G$, then we say that $X$ is an ancestor of $Y$ and $Y$ is a descendant of $X$, and we denote the set of ancestors and descendants of $X$ as $An_G(X)$ and $De_G(X)$ respectively. By convention, we consider all nodes to be their own ancestors and descendants. We extend the definitions of siblings, parents, children, Markov blanket, ancestors, and descendants to sets of nodes by taking the union of the respective sets, e.g., the parents of variables $\mathbf{S}$ are $Pa_G(\mathbf{S}) = \cup_{X \in \mathbf{S}}Pa_G(X)$.

The data generating process of an observational distribution $p$ over variables $\mathbf{V}$ can be described by a causal DAG $D$ such that if a variable $X$ has a direct causal effect on another variable $Y$ then $X\to Y$.
As standard in causal discovery, we assume that the distribution $p$ is Markov and faithful to $D$ \citep{spirtes2000causation}, which implies that every conditional independence $X \indep Y | \mathbf{S}$ in $p$ is equivalent to a d-separation relation of the same form in $D$. Additionally, we assume causal sufficiency, i.e., no latent confounders or selection bias.

Constraint-based causal discovery methods leverage these assumptions to identify causal relations from conditional independence (CI) tests. In general, we cannot fully identify the true causal graph $D$ from CI tests, as multiple DAGs imply the same set of conditional independences.
This set of graphs is called the \ac{MEC} of $D$ and we can represent it by a mixed graph, the \emph{completed partially directed acyclic graph (CPDAG)}. A CPDAG will have a directed edge $X \to Y$ if all DAGs in the MEC have this directed edge, otherwise if some DAGs have $X \to Y$ and others $Y \to X$, it will have an undirected edge $X - Y$. 

We say that $X$ is a possible ancestor of $Y$, denoted as $X \in PossAn_G(Y)$, and that $Y$ is a possible descendant of $X$, denoted as $Y \in PossDe_G(X)$ in a CPDAG $G$ when $X$ is an ancestor of $Y$ in at least one \ac{DAG} in the \ac{MEC} represented by $G$. Equivalently, $X$ is a possible ancestor of $Y$, iff there exists a \emph{possibly directed path} from $X$ to $Y$, i.e., a path that does not contain directed edges from $Y$ to $X$. If there is no such path, then $X$ is a definite non-ancestor of $Y$ in $G$. If $X$ is an ancestor of $Y$ in all \acp{DAG} in a \ac{MEC}, then $X$ is a definite ancestor $Y$.
However, this does not necessarily mean that there is a directed path from $X$ to $Y$ in the corresponding \ac{CPDAG} \citep{roumpelaki2016marginal}.
If there is a directed path from $X$ to $Y$ in the \ac{CPDAG} then we say that $X$ is an explicit ancestor of $Y$. We denote the set of explicit ancestors of $X$ as $ExplAn_G(X)$. Reversely, we define the explicit descendants of $X$ as the nodes with a directed path starting from $X$ in the CPDAG $G$.

If all \acp{DAG} in the \ac{MEC} share a valid adjustment set, then we say that the causal effect is identifiable.
Amenability provides a sufficient and necessary graphical condition for the causal effect of $X$ on $Y$ to be identifiable from the \ac{CPDAG} $G$ \citep{perkovic2020identifying}.

\begin{definition}[Amenability \citep{perkovic2015complete}]
\label{def:amenability}
    For any two distinct nodes $X$ and $Y$ in a CPDAG $G$,
    $G$ is amenable relative to $(X,Y)$ if every possibly directed path from $X$ to $Y$ starts with a directed edge out of $X$.
\end{definition}

If the causal effect of a treatment $X$ on an outcome $Y$ is identifiable, there might be multiple valid adjustment sets to estimate it.
\citet{perkovic2018complete} show that no valid adjustments contain the \emph{forbidden nodes}, $Forb_G(X, Y)= PossDe_G(PossCn_G(X,Y)) \cup X$, where $PossCn_G(X,Y)$ are the nodes on possibly directed paths from $X$ to $Y$, excluding $X$.
\citet{henckel2022graphical} and \citet{rotnitzky2020efficient} define the optimal adjustment set as the set with the lowest asymptotic variance across all valid adjustment sets, and provide graphical criteria to identify it. 

We take inspiration from \citep{witte2020efficient}, who formulate the optimal adjustment sets as the parents of the outcome in the \emph{forbidden projection} in a general class of graphs, maxPDAGs, which extends CPDAGs to settings with additional background knowledge.
In our case, we focus only on CPDAGs and consider a single treatment and outcome for which the graph is amenable, so Def.~17 in \citep{witte2020efficient} together with Prop.~19 and Lem.~20, simplifies to:
\begin{definition}[Simplified forbidden projection]
\label{def:simp_forbidden_proj}
    Let $G$ be a CPDAG with nodes $\mathbf{V}$, and let $X, Y \in \mathbf{V}$ such that $G$ is amenable relative to $(X, Y)$.    
    Define $\mathbf{F} = \text{Forb}_G(X,Y) \setminus \{X,Y\}$.
    The forbidden projection $\tilde{G}^{X,Y}$ of $G$ is a graph with nodes $\mathbf{V} \setminus \mathbf{F}$ and edges as follows. For distinct nodes $W_i, W_j \in \mathbf{V} \setminus \mathbf{F}$,
    \begin{compactenum}
        \item $\tilde{G}^{X,Y}$ contains a directed edge $W_i \to W_j$ if and only if $G$ contains a directed path $W_i \to \cdots \to W_j$ on which all non-endpoint nodes are in $\mathbf{F}$,
        \item $\tilde{G}^{X,Y}$ contains an undirected edge $W_i - W_j$ if and only if $G$ contains $W_i - W_j$.
    \end{compactenum}
\end{definition}

Based on this definition, we can now define the optimal adjustment set as the parents of the outcome in the forbidden projection, excluding the treatment.

\begin{definition}[Optimal adjustment set \citep{witte2020efficient}]
\label{def:oset}
    For treatment $X$ and outcome $Y$ in an amenable CPDAG $G$ with $\tilde{G}^{X,Y}$ as the forbidden projection,
    the optimal adjustment set $Oset_G(X,Y)$ relative to $X$ and $Y$ in $G$ is given by
    $$
    Oset_G(X,Y) = Pa_{\tilde{G}^{X,Y}}(Y) \setminus \{X\}.
    $$
\end{definition}
As we will see in Lem.~\ref{lemma:possde}, this definition is particularly useful for our method, since we can show that we can slightly modify the projection to include all possible descendants of the treatment, which we can easily estimate from local information, while also reducing the size of the projected graph.

In case the causal effect is not identifiable, we can still estimate a set of possible causal effects by considering all possible DAGs in the MEC of the discovered CPDAG.
While this is computationally infeasible for large graphs, \citet{maathuis2009estimating} showed that local information around the treatment is enough to identify the set of possible \emph{locally valid parent adjustment sets} that can be used to estimate this set of possible effects.
Intuitively, a candidate parent adjustment set of an outcome $Y$ contains its identified parents and a subset of its undirected siblings $\mathbf{S} \subseteq Sib_G(Y)$.
Then, this candidate parent set $Pa_G(Y) \cup \mathbf{S}$ is locally valid, if after orienting $S - Y$ as $S \to Y$ for all $S \in \mathbf{S}$, no new v-structures are created in $G$, which would imply additional orientations and contradict the fact that they are siblings in the CPDAG.
This is ensured when $S \in \mathbf{S}$ are adjacent to all other candidate parents $Pa_G(Y) \cup \mathbf{S}$, because then all new colliders are shielded.
\section{RELATED WORK}
\label{sec:related_work}

We focus on estimating the causal effect of a pair of target variables in a computationally and statistically efficient way, when the causal graph is unknown. A standard way to achieve this is to learn the causal graph over all available variables using standard causal discovery methods \citep{glymour2019review}, e.g., PC \citep{spirtes2000causation}, and then find the optimal adjustment set \citep{henckel2022graphical} through a graphical characterization. We call this approach \emph{global causal discovery}. The issue with this approach is that global causal discovery is an inherently computationally expensive task, and even methods that reduce the computational complexity, e.g., MARVEL \citep{mokhtarian2021recursive}, still struggle with hundreds of variables. Moreover, it is usually unnecessary to recover a complete graph only to estimate a causal effect between a pair of variables.

An alternative approach, \emph{local causal discovery}, instead estimates the graph only in the neighborhood of a target variable, which is inherently a computationally easier task, but it is limited to suboptimal adjustment sets, thus potentially providing a less accurate estimate of the causal effect between targets. 
 
Most work in local causal discovery is limited to a single target variable and the effects of its neighbors \citep{gao2015local}. The adjustment sets are also restricted to the neighborhood of the target, e.g., the locally valid parent adjustment sets. In particular, MB-by-MB \citep{wang2014discovering} employs sequential Markov blanket discovery starting from the target until the orientations of all edges adjacent to the target are determined.
Local Discovery using Eager Collider Checks (LDECC) \citep{gupta2023local} starts by identifying the Markov blanket of the target and then performs CI tests similarly to PC with additional checks to efficiently determine the relation of adjacent variables. LDECC provides complementary computational benefits to the other local causal discovery methods.
However, as we show in \Cref{sec:local_pc}, LDECC and other methods using LocalPC, e.g. \citep{xie2024local, wang2014discovering, li2025local}, are not sound, since they might misidentify a non-adjacent spouse as adjacent, due to only testing for separating sets among the neighbors. We show that this can be fixed by using the Markov Blanket to test for separating sets. Moreover, as we show in 
\Cref{sec:ldecc}, LDECC is not complete, since it may not orient all orientable edges around the treatment in some cases.

In contrast to previous local methods, Local Discovery by Partitioning (LDP) \citep{maasch2024local} discovers adjustment sets that are not restricted to the neighborhood of the target. LDP learns partitions of the nodes in terms of their relation to the target pair. While more scalable than global causal discovery methods, LDP still potentially learns sub-optimal adjustment sets and assumes that we know which target causes the other. Moreover, as we show in \Cref{sec:ldp}, when an identifiable effect between the target pair is zero, LDP might misreport the effect as not identifiable.

Two recent approaches focus on learning adjustment sets in the more general setting, when we do not know how the target variables are causally related, while learning only parts of the causal graph. The Confounder Blanket Learner (CBL) \citep{watson2022causal} learns ancestral relations and adjustment sets for the causal effect between the target variables,
but requires that all other variables are non-descendants of the targets.
SNAP \citep{schubert2025snap} does not have assumptions on the underlying causal graph and recovers the optimal adjustment set without discovering the full CPDAG by progressively identifying and removing
definite non-ancestors of the target variables. While more scalable than global causal discovery methods, SNAP still requires causal discovery over all possible ancestors of the targets, which is unnecessary for learning optimal adjustment sets and is potentially still a computational bottleneck to scaling to larger graphs. 

In this paper, we combine the best of both worlds in terms of global and local causal discovery: similar to global methods, we aim to learn optimal adjustment sets, which allow us to have statistically efficient causal effect estimation, while in the spirit of local methods, we only want to estimate the necessary causal information for this task, thus being computationally efficient.
\section{METHOD}
\label{sec:method}

We first discuss how to extend local causal discovery algorithms to the case in which we do not know the causal relations between two targets $X$ and $Y$. Then, we introduce \acf{LOAD}, a sound and complete method that identifies the optimal adjustment set for the causal effect between the pair of targets in a computationally efficient way.

\subsection{Identifying relations between targets}
\label{sec:extend}

While local causal discovery methods usually assume that we known the causal relation between a pair of targets $X$ and $Y$, in this section we show how we can extend them to the setting in which this relation is unknown. We leverage results by \citet{fang2022local} on how to test explicit and possible ancestry (and conversely, definite non-ancestry) with local information:
\begin{theorem}[Thm. 3 in \citep{fang2022local}]
\label{thm:explicit_ancestor}
    For any two distinct nodes $X$ and $Y$ in a CPDAG $G$, $X \in ExplAn_G(Y)$ iff $X \dep Y | Pa_G(X) \cup Sib_G(X)$.
\end{theorem}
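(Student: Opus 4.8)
The plan is to translate the conditional (in)dependence statement into a purely graphical d-separation statement and then prove the two implications separately. By the Markov and faithfulness assumptions, $X \dep Y \mid Pa_G(X) \cup Sib_G(X)$ holds in $p$ iff $X$ and $Y$ are d-connected given $\mathbf{S} := Pa_G(X) \cup Sib_G(X)$ in the true DAG $D$; since d-separation is invariant across a \ac{MEC}, this is equivalent to d-connection given $\mathbf{S}$ in \emph{every} \ac{DAG} consistent with $G$, a freedom I will use to pick convenient extensions. Throughout I note that $\mathbf{S} = Adj_G(X) \setminus Ch_G(X)$, i.e., it contains every neighbor of $X$ except its definite children, and that in any consistent extension $D'$ the parents of $X$ satisfy $Pa_G(X) \subseteq Pa_{D'}(X) \subseteq \mathbf{S}$.

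For the forward implication, assume $X \in ExplAn_G(Y)$, so there is a directed path $\pi\colon X \to \cdots \to Y$ in $G$. Because every directed edge of a \ac{CPDAG} is shared by all DAGs in its \ac{MEC}, $\pi$ is a directed path in $D$ as well, and a directed path contains no colliders, so it suffices to show that no interior node of $\pi$ lies in $\mathbf{S}$. Each such node $V$ is a proper descendant of $X$; if $V$ were a parent or a sibling of $X$, then some consistent extension would orient the $X$–$V$ edge into $X$, completing the directed path $X \to \cdots \to V$ into a directed cycle and contradicting acyclicity. Hence $\pi$ is active given $\mathbf{S}$, so $X \dep Y \mid \mathbf{S}$.

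For the converse I argue the contrapositive: assuming $X \notin ExplAn_G(Y)$, I must show $\mathbf{S}$ d-separates $X$ and $Y$. Fix a consistent extension $D'$ and take any path $\pi$ from $X$. A clean observation valid for every extension is that any $\pi$ whose first edge points into $X$ is blocked, because that neighbor is a parent of $X$ in $D'$, hence lies in $\mathbf{S}$, and is necessarily a non-collider at the position adjacent to $X$ (its edge toward $X$ points out of it). Thus every potentially active path must leave $X$ through an outgoing edge $X \to Z$, and I then want to show that an active such path forces a directed path from $X$ to $Y$ in $G$, contradicting $X \notin ExplAn_G(Y)$.

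The main obstacle is precisely this last step, and it has two parts. First, conditioning on $\mathbf{S}$ can \emph{activate} colliders: a sibling $Z \in Sib_G(X)$ oriented as a child in $D'$ still lies in $\mathbf{S}$, so a detour $X \to Z \leftarrow W$ is opened rather than blocked, and I must rule out that such collider detours reach $Y$; I expect to handle this by choosing $D'$ carefully within the chordal undirected component of $X$ and by tracking the descendant-in-$\mathbf{S}$ condition at each collider. Second, a purely causal active path $X \to Z_1 \to \cdots \to Y$ in $D'$ only certifies that $X$ is a \emph{possible} ancestor of $Y$, whereas the statement demands an \emph{explicit}, directed-in-$G$ ancestor. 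Bridging this gap is where the real work lies: I plan to use that a surviving causal path must start with a definite child ($X \to Z_1$ in $G$, since a sibling first node would lie in $\mathbf{S}$ and block the chain) and then invoke Meek's orientation rules to propagate this orientation along the path, showing the whole causal connection is in fact directed in $G$. Making this propagation airtight — combining chordality of the undirected component, the activated-collider bookkeeping, and Meek-rule closure — is the step I expect to require the most care.
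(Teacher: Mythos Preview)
This theorem is not proved in the paper at all: it is quoted as Theorem~3 of \citet{fang2022local} and used as a black box (see \Cref{alg:is_explicit_ancestor} and the proof of \Cref{lem:alg_is_expl_an}). So there is no in-paper argument to compare against; any assessment of your sketch has to be on correctness alone.

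Your forward direction is complete and correct: a directed path in $G$ is directed in every consistent DAG, is collider-free, and its interior nodes cannot lie in $\mathbf{S}$ without producing a (partially) directed cycle. The converse is where the sketch is, as you concede, unfinished, and the gaps are substantive. First, ``choosing $D'$ carefully within the chordal undirected component of $X$'' is harder than it sounds: forcing $\mathbf{S}=Pa_{D'}(X)$ by making $X$ a sink requires $X$ to be simplicial in its chain component, which fails already for $A-X-B$ with $A,B$ non-adjacent, so you cannot reduce to the local Markov property in one stroke. Second, in the collider case $X\to Z_1\leftarrow Z_2$ with $Z_1\in Sib_G(X)$ you are right that $Z_2\in Adj_G(X)$ (else the unshielded collider would orient $X\to Z_1$ in $G$) and that $Z_2\in Ch_G(X)$ since it is a non-collider outside $\mathbf{S}$; but the shortcut $X\to Z_2,\dots$ can turn $Z_2$ into a \emph{new} collider, whose activation depends on a descendant in $\mathbf{S}$, and tracking that interacts with the chain-graph constraints (no partially directed cycles, no induced $A\to B-C$) in a way that needs an explicit induction on path length. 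Your Meek-propagation idea handles the purely causal tail once the first node is a definite child, but it does not by itself dispose of these activated-collider detours. The ingredients you list are the right ones; what is missing is the inductive bookkeeping that ties them together, which is where the actual work in \citet{fang2022local} lies.
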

\begin{theorem}[Thm. 2 in \citep{fang2022local}]
\label{thm:possible_ancestor}
    For any two distinct nodes $X$ and $Y$ in a CPDAG $G$, $X$ is a definite non-ancestor of $Y$ iff $X \indep Y | Pa_G(X)$. Otherwise, $X$ is a possible ancestor of $Y$.
\end{theorem}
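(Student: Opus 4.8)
The plan is to prove both directions by passing from the conditional independence to a statement about d-separation. By the Markov and faithfulness assumptions, $X \indep Y \mid Pa_G(X)$ holds in $p$ if and only if $X$ and $Y$ are d-separated given $Pa_G(X)$ in the true \ac{DAG} $D$; since d-separation is an invariant of the \ac{MEC}, I am free to check it in whichever \ac{DAG} of the \ac{MEC} is most convenient. I will also assume $Y \notin Pa_G(X)$, as otherwise $Y \to X$ is already oriented in $G$ and the conditioning set would contain $Y$, making the test degenerate. Throughout I identify ``$X$ is a definite non-ancestor of $Y$'' with ``$X$ is not an ancestor of $Y$ in any \ac{DAG} of the \ac{MEC}''.

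For the direction that dependence implies possible ancestry, I argue the contrapositive: if $X$ is a possible ancestor of $Y$, then $X \dep Y \mid Pa_G(X)$. By the definition of possible ancestry there is a \ac{DAG} $D'$ in the \ac{MEC} in which $X$ is an ancestor of $Y$, i.e.\ a directed path $X \to \cdots \to Y$. I would show this path is d-connecting given $Pa_G(X)$ in $D'$: every intermediate node is a non-collider, and no intermediate node lies in $Pa_G(X)$ because a proper descendant of $X$ cannot also be a parent of $X$ by acyclicity; moreover $Y \notin Pa_G(X)$, since $Y \to X$ together with $X \to \cdots \to Y$ would form a cycle. Hence $X$ and $Y$ are d-connected given $Pa_G(X)$ in $D'$, and by \ac{MEC}-invariance also in $D$, giving $X \dep Y \mid Pa_G(X)$.

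For the converse, suppose $X$ is a definite non-ancestor of $Y$. The key construction is a \ac{DAG} $D^{\ast}$ in the \ac{MEC} in which every undirected edge incident to $X$ is oriented out of $X$, so that $Pa_{D^{\ast}}(X) = Pa_G(X)$ exactly. In $D^{\ast}$, since $X$ is not an ancestor of $Y$, the node $Y$ is a non-descendant of $X$, and the (directed) local Markov property then yields that $X$ is d-separated from $Y$ given its parents $Pa_{D^{\ast}}(X) = Pa_G(X)$. Transferring this d-separation back to $D$ by \ac{MEC}-invariance gives $X \indep Y \mid Pa_G(X)$, as required.

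The main obstacle is justifying the existence of $D^{\ast}$, i.e.\ that $X$ can be made a root of its chain component without introducing new v-structures or directed cycles. This is where I would spend the most care: it relies on the fact that the undirected chain components of a \ac{CPDAG} are chordal, so that any chosen vertex can be taken as the source of an orientation that creates no new v-structure (equivalently, $X$ can be made the last vertex of a perfect elimination ordering, which is always possible since one can repeatedly remove a simplicial vertex other than $X$ until only $X$ remains). Once this rooting lemma is in place, the remaining steps reduce to the local Markov property and the \ac{MEC}-invariance of d-separation, both standard. The degenerate adjacent cases are absorbed by the $Y \notin Pa_G(X)$ convention and by acyclicity.
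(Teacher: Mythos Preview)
The paper does not supply its own proof of this statement: it is quoted verbatim as Theorem~2 of \citet{fang2022local} and used as a black box throughout (e.g., in \Cref{alg:is_possible_ancestor} and in the proof of \Cref{lem:local_amenability}). So there is nothing in the paper to compare your proposal against.

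That said, your plan is a correct and standard reconstruction of the result. The possible-ancestor $\Rightarrow$ dependence direction is clean: a directed $X\to\cdots\to Y$ path in some member \ac{DAG} is collider-free, and no intermediate node can lie in $Pa_G(X)$ by acyclicity, so the path is active given $Pa_G(X)$; \ac{MEC}-invariance of d-separation finishes it. The converse is exactly the right idea: build a \ac{DAG} $D^\ast$ in the \ac{MEC} with $Pa_{D^\ast}(X)=Pa_G(X)$ and invoke the local Markov property. Your justification via chordality of the chain components is the right tool; the step ``repeatedly remove a simplicial vertex other than $X$'' implicitly uses Dirac's theorem that a chordal graph on $\geq 2$ vertices has at least two simplicial vertices (all vertices if complete, two non-adjacent ones otherwise), so you can always avoid eliminating $X$ until last. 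You should also state, or cite, the standard fact that orienting each chain component by such a reverse-PEO extends to a member of the \ac{MEC} without creating cycles or v-structures across components---this is exactly what \Cref{thm:4.1andersson} (items~1 and~3) guarantees. With those two citations in place the argument is complete; the degenerate case $Y\in Pa_G(X)$ is harmless as you note, since then both sides of the biconditional hold trivially.
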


The local information returned by methods such as MB-by-MB \citep{wang2014discovering} is the set of parents, children and siblings, thus we can use it to identify the relation between targets. We show an implementation in \Cref{alg:relate}, which first runs MB-by-MB on each target and then uses  \Cref{thm:explicit_ancestor} and  \Cref{thm:possible_ancestor} implemented by \Cref{alg:is_explicit_ancestor} and \Cref{alg:is_possible_ancestor} with a few optimizations to test explicit or possible ancestry using only local information. 

The output of the Algorithm are $Relation(X,Y)$ and $Relation(Y, X)$, which describe if the relation between $X$ and $Y$ is definite non-ancestral ($DefNonAn$), i.e., they do not cause each other in any DAG in the MEC, if one is an explicit ancestor of the other ($ExplAn$) or if either is a possible ancestor of the other ($PossAn$). In this last case, the possible ancestry is potentially true in both directions. We show that our method is sound and complete in recovering these relations.

\begin{algorithm}[t]
    \centering
    \caption{LocalRelate}
    \label{alg:relate}
    \begin{algorithmic}[1]
    \Require{Targets $X,Y \in \mathbf{V}$ and variables $\mathbf{V}$}
    \Ensure{Causal relation between $X$ and $Y$, $G_X$, $G_Y$}
    \State $Relation(X,Y) \gets Relation(Y,X) \gets DefNonAn$\label{line:localrelate:initialize}
    \State $G_X \gets \text{MB-by-MB}(X, \mathbf{V})$\label{line:localrelate:G_x}
    \State $G_Y \gets \text{MB-by-MB}(Y, \mathbf{V})$\label{line:localrelate:G_y}
    \If{IsExplAn($X,Y,G_X$)}\Comment{\Cref{alg:is_explicit_ancestor}}\label{line:localrelate:explan_start}
        \State $Relation(X,Y) \gets ExplAn$
    \ElsIf{IsExplAn($Y, X, G_Y$)}\Comment{\Cref{alg:is_explicit_ancestor}}
        \State $Relation(Y,X) \gets ExplAn$\label{line:localrelate:explan_end}
    \Else\Comment{Cannot determine causal relation}\label{line:localrelate:possan_start}
        \If{IsPossAn($X, Y, G_X$)}\Comment{\Cref{alg:is_possible_ancestor}}
            \State $Relation(X,Y) \gets PossAn$
        \EndIf
        \If{IsPossAn($Y, X, G_Y$)}\Comment{\Cref{alg:is_possible_ancestor}}
            \State $Relation(Y,X) \gets PossAn$
        \EndIf
    \EndIf\label{line:localrelate:possan_end}
    \State\Return{$Relation, G_X, G_Y$}
    \end{algorithmic}
\end{algorithm}

\begin{restatable}[]{lemma}{relate}
\label{lem:relate}
    \Cref{alg:relate} is sound and complete in finding definite non-ancestral, possible ancestral and explicit ancestral relations between a pair of targets.
\end{restatable}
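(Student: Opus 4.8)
The plan is to prove soundness and completeness separately, after first pinning down the two ingredients that \Cref{alg:relate} relies on: (i) that MB-by-MB correctly recovers the local structure around each target, so that $Pa_{G_X}(X) = Pa_G(X)$ and $Sib_{G_X}(X) = Sib_G(X)$ (and symmetrically for $Y$), and (ii) that IsExplAn and IsPossAn correctly decide the conditional (in)dependence tests of \Cref{thm:explicit_ancestor} and \Cref{thm:possible_ancestor} from this local information. I would justify (i) from the correctness of MB-by-MB using the Markov-blanket-based separating-set test discussed in \Cref{sec:local_pc}, and (ii) by checking that the optimizations in \Cref{alg:is_explicit_ancestor} and \Cref{alg:is_possible_ancestor} do not change the outcome of the tests $X \dep Y \mid Pa_G(X) \cup Sib_G(X)$ and $X \indep Y \mid Pa_G(X)$. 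Under faithfulness each such CI test equals the corresponding d-separation in $G$, so the two theorems then apply verbatim, and I can reason purely about the returned labels.

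For soundness I would argue by cases on the output. A returned $Relation(X,Y)=ExplAn$ means IsExplAn($X,Y,G_X$) succeeded, so $X \in ExplAn_G(Y)$ by \Cref{thm:explicit_ancestor}; the $ExplAn$ case for $(Y,X)$ is symmetric. A returned $PossAn$ is set only in the else branch after IsPossAn succeeded, so by the contrapositive of \Cref{thm:possible_ancestor} the variable is not a definite non-ancestor, i.e. $X \in PossAn_G(Y)$. A retained $DefNonAn$ for $(X,Y)$ is correct in each of the two ways it can survive: if IsExplAn($Y,X,G_Y$) succeeded, then $Y$ is a definite ancestor of $X$, so by acyclicity $X$ is a definite non-ancestor of $Y$; otherwise the else branch was entered and IsPossAn($X,Y$) failed, giving $X \indep Y \mid Pa_G(X)$ and hence definite non-ancestry by \Cref{thm:possible_ancestor}.

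For completeness the key observation is the precedence forced by acyclicity: explicit ancestry implies definite ancestry, and $X$ and $Y$ cannot each be a definite ancestor of the other, so at most one IsExplAn check can succeed and the if/elif branching never discards a true explicit relation (an explicit ancestor is labeled $ExplAn$, the intended refinement of $PossAn$). The only delicate point is the $PossAn$ case: I must show that whenever $X \in PossAn_G(Y) \setminus ExplAn_G(Y)$ the algorithm reaches the else branch. This holds because if $X$ is a possible ancestor of $Y$ then $Y \notin ExplAn_G(X)$ — otherwise $Y$ would be a definite ancestor of $X$, forcing $X$ to be a definite non-ancestor of $Y$, a contradiction — and $X \notin ExplAn_G(Y)$ by assumption, so both explicit tests fail, IsPossAn($X,Y$) is evaluated, and it succeeds by \Cref{thm:possible_ancestor}. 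The symmetric argument handles $Y$, including the case where both directions are possible-ancestral, and the default initialization handles definite non-ancestry.

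The main obstacle I anticipate is the completeness argument for $PossAn$, namely making rigorous the acyclicity-based precedence that guarantees the else branch is reached, together with carefully discharging the assumption that MB-by-MB with the corrected separating-set test returns exactly $Pa_G$ and $Sib_G$ of each target; once these are in hand, everything else reduces to a direct application of \Cref{thm:explicit_ancestor} and \Cref{thm:possible_ancestor} under faithfulness.
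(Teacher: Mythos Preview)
Your proposal is correct and follows essentially the same approach as the paper: both rely on (i) the correctness of MB-by-MB for recovering parents, children, and siblings, and (ii) the soundness and completeness of IsExplAn and IsPossAn (which the paper isolates as separate lemmas proved via an adjacency case split and the no-partially-directed-cycle property of CPDAGs), and then carry out a case analysis on the returned label using \Cref{thm:explicit_ancestor} and \Cref{thm:possible_ancestor}. Your explicit soundness/completeness split and the acyclicity-based argument that at most one IsExplAn test can succeed are exactly the observations the paper makes, just organized a bit more formally.
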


We provide a proof in \Cref{proof:relate}. Besides identifying the causal relations, the algorithm also provides local structures around both targets that can be used to enumerate the locally valid parent adjustment sets \citep{maathuis2009estimating}, as described in \Cref{alg:locally_valid_sets}.

\subsection{Local Optimal Adjustments Discovery}
\label{sec:load}

\acf{LOAD} is a sound and complete method to identify the optimal adjustment set for a target pair using only local information.
\ac{LOAD} leverages standard local causal discovery methods to collect subgraphs $G_V$ around nodes $V$ identifying the parents, children and siblings of $V$.
In our implementation we use MB-by-MB \citep{wang2014discovering} extended with caching, as described in \Cref{alg:mb-by-mb}, but LOAD is agnostic to the local causal discovery method and we evaluate using an alternative method, CMB \citep{gao2015local} in \Cref{sec:lcd_algorithm}.
We describe the pseudocode of LOAD in \Cref{alg:load}.

LOAD starts in Step 1 by determining the causal relation between the two targets by employing \Cref{alg:relate}.
If a variable is a definite non-ancestor of the other, then its causal effect on the other is trivially identifiable as zero.
If one of the variables is a possible ancestor of the other, we prove a necessary but not sufficient condition for the causal effect to be identifiable.

\begin{restatable}[]{lemma}{identifiabilityimpliesexplan}
\label{lem:identifiability_implies_expl_an}
    For any two distinct nodes $X$ and $Y$ such that $X \in PossAn_G(Y)$ in a CPDAG $G$, the causal effect of $X$ on $Y$ is identifiable only if $X$ is an explicit ancestor of $Y$ in $G$.
\end{restatable}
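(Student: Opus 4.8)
The plan is to reduce identifiability to a purely graphical property and then run an orientation‑propagation argument along a carefully chosen path. By \citet{perkovic2020identifying}, the causal effect of $X$ on $Y$ is identifiable from $G$ if and only if $G$ is amenable relative to $(X,Y)$ (\Cref{def:amenability}), so it suffices to prove: if $X \in PossAn_G(Y)$ and $G$ is amenable relative to $(X,Y)$, then there is a directed path from $X$ to $Y$ in $G$, i.e.\ $X \in ExplAn_G(Y)$. Since $X \in PossAn_G(Y)$, there is at least one possibly directed path from $X$ to $Y$; I would fix a \emph{shortest} one, $p = \langle X = V_0, V_1, \dots, V_k = Y\rangle$.

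First I would show that $p$ is unshielded, i.e.\ $V_i$ and $V_j$ are nonadjacent whenever $j \ge i+2$. Suppose some chord $\{V_i,V_j\}$ exists with $j \ge i+2$. If this edge is $V_i \to V_j$ or $V_i - V_j$, then $\langle V_0,\dots,V_i,V_j,\dots,V_k\rangle$ is a strictly shorter possibly directed path, contradicting minimality. The remaining case $V_j \to V_i$ I would rule out using acyclicity: the subpath $\langle V_i,\dots,V_j\rangle$ of $p$ is itself possibly directed, so $V_i \in PossAn_G(V_j)$ and hence $V_i$ is an ancestor of $V_j$ in some DAG $D$ of the MEC; but $V_j \to V_i$ is directed in $G$, hence present in every DAG of the MEC, so $V_j$ is an ancestor of $V_i$ in $D$ as well, producing a directed cycle in $D$, a contradiction.

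Next I would orient $p$ edge by edge. Amenability forces the first edge to be directed, $V_0 \to V_1$. For the inductive step, assume $V_{i-1} \to V_i$; the path edge $\{V_i,V_{i+1}\}$ is either $V_i \to V_{i+1}$ or $V_i - V_{i+1}$, and since $p$ is unshielded, $V_{i-1}$ and $V_{i+1}$ are nonadjacent. If the edge were undirected, then $V_{i-1} \to V_i - V_{i+1}$ with $V_{i-1},V_{i+1}$ nonadjacent would be present; but a CPDAG is closed under Meek's orientation rules, and rule R1 orients precisely this configuration as $V_i \to V_{i+1}$, so the edge cannot remain undirected. Hence $V_i \to V_{i+1}$. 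By induction every edge of $p$ is directed, so $p$ is a directed path from $X$ to $Y$ and $X \in ExplAn_G(Y)$.

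The main obstacle I expect is the unshieldedness step, and specifically excluding the \say{backward} chord $V_j \to V_i$: this is where acyclicity of the MEC together with the characterization of possible ancestry as ancestry in some member DAG do the real work, whereas the subsequent orientation propagation is a routine application of Meek's rules. I would also double‑check the degenerate case $k=1$, where amenability directly gives $X \to Y$ and the claim is immediate.
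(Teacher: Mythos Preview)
Your proof is correct and takes a somewhat different route from the paper's. Both arguments begin by fixing a shortest possibly directed path $p$ and showing it is unshielded; the paper obtains this by invoking \Cref{lem:b1zhang}, while you prove it directly via the chord analysis, which is essentially the same content. After that the two proofs diverge. The paper argues by contradiction at the level of adjustment sets: assuming $p$ is not fully directed, it exhibits two DAGs in the MEC---one in which $p$ is a causal path (so no valid adjustment set may contain a node of $p$) and one in which $p$ is a non-causal open path (so every valid adjustment set must intersect $p$)---and concludes that no common valid adjustment set exists, hence the effect is not identifiable. Your argument instead stays purely graphical: you first reduce identifiability to amenability, use amenability to orient the first edge of $p$, and then propagate the orientation along $p$ via Meek's rule R1 (equivalently item~3 of \Cref{thm:4.1andersson}), concluding that $p$ is fully directed. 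Your approach is more elementary in that it never has to reason about specific member DAGs or adjustment-set validity, relying only on closure of CPDAGs under the orientation rules; the paper's approach, by contrast, makes the obstruction to identifiability concrete by naming the two DAGs with conflicting adjustment requirements.
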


We provide the proof in \Cref{proof:identifiability_implies_expl_an}.
If $X$ is an explicit ancestor of $Y$, then we call $X$ the ``treatment'' $T$ and $Y$ the ``outcome'' $O$, or viceversa if $Y$ is an explicit ancestor of $X$.
If $X$ is not an explicit ancestor of $Y$, then its causal effect on $Y$ is not identifiable and LOAD returns the locally valid parent adjustment sets for $X$.

Even if $X$ is an explicit ancestor of $Y$, its causal effect on $Y$
might still not be identifiable. We can test this by checking amenability \citep{perkovic2015complete}, but that requires checking all possibly directed paths between $X$ and $Y$, which  potentially requires knowing the whole CPDAG. Instead, we develop a test for amenability that uses only local information around $X$ and its siblings.

\begin{restatable}[]{lemma}{localamenability}
\label{lem:local_amenability}
    For any two distinct nodes $X$ and $Y$
    such that $X \in PossAn_G(Y)$ in a CPDAG $G$, $G$ is adjustment amenable relative to $(X,Y)$ iff
    $$
    \forall V \in Sib_G(X):V \indep Y | Pa_G(V) \cup \{X\}.
    $$
\end{restatable}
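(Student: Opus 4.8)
The plan is to prove the equivalence through its negation, reducing amenability to a path-existence statement and then matching that to the conditional independence. By \Cref{def:amenability}, $G$ fails to be amenable relative to $(X,Y)$ exactly when there is a possibly directed path from $X$ to $Y$ whose first edge is undirected, i.e. of the form $X - V$ with $V \in Sib_G(X)$. Since the initial node $X$ of such a path can always be stripped off to leave a possibly directed path from $V$ to $Y$, and conversely any possibly directed path from $V$ to $Y$ that avoids $X$ can be prepended with the sibling edge $X - V$, non-amenability is equivalent to the statement: \emph{there exists} $V \in Sib_G(X)$ admitting a possibly directed path to $Y$ that does not pass through $X$. Using Markov and faithfulness to move between d-connection in the \ac{MEC} and the \ac{CI} statement, I would therefore reduce the lemma to showing, for each $V \in Sib_G(X)$, that such a path exists iff $V \dep Y \mid Pa_G(V) \cup \{X\}$; the two existential quantifiers then line up with $Sib_G(X)$ to give the stated biconditional.

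For the forward direction, suppose $V$ admits a possibly directed path $q : V \cdots Y$ avoiding $X$, and take a shortest one. A possibly directed path has no colliders, since a collider would force an edge oriented back toward the source, which is forbidden; hence $q$ is active unless some internal node lies in $\mathbf{S} = Pa_G(V) \cup \{X\}$. The node $X$ is excluded by hypothesis. If an internal node $V_i$ were a parent of $V$, then $V_i \to V$ would make $V_i$ a definite ancestor of $V$, while the possibly directed prefix $V \cdots V_i$ of $q$ gives $V \in PossAn_G(V_i)$; together these produce a directed cycle in some \ac{DAG} of the \ac{MEC}, a contradiction. Thus no internal node of $q$ lies in $\mathbf{S}$, so $q$ is connecting and $V \dep Y \mid \mathbf{S}$. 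The endpoints satisfy $V, Y \notin \mathbf{S}$, since a node is not its own parent and $Y \to V$ is ruled out by $V \in PossAn_G(Y)$.

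For the converse, suppose $V \dep Y \mid \mathbf{S}$ and let $r$ be a shortest connecting path from $V$ to $Y$ given $\mathbf{S}$. Its initial edge cannot be $V \leftarrow A$ with $A \in Pa_G(V)$, because such an $A$ is a conditioned non-collider that would block $r$; hence $r$ leaves $V$ through an out- or undirected edge. The crux is to show that $r$ is in fact possibly directed and avoids $X$, so that prepending $X - V$ yields the required path. I would establish this by minimality combined with Meek's orientation rules: any collider $C$ on $r$ is activated only through a directed path into $\mathbf{S}$, i.e. into a parent of $V$ or into $X$, and the local \ac{CPDAG} structure around $V$ and $X$ can then be used to reroute $r$ into a strictly shorter connecting path, contradicting minimality unless no such collider is present. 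In particular, since $X - V$ is undirected, Meek's rule~R1 forces every parent of $X$ to be adjacent to $V$, which is what enables the rerouting when $X$ itself occurs as an active collider on $r$. This step mirrors the collider-elimination argument underlying \Cref{thm:possible_ancestor}.

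Finally I would assemble the pieces: $G$ is amenable iff no sibling of $X$ admits a possibly directed path to $Y$ avoiding $X$, iff the \ac{CI} $V \indep Y \mid Pa_G(V) \cup \{X\}$ holds for every $V \in Sib_G(X)$, which is the claimed statement; the degenerate case $V = Y \in Sib_G(X)$ is consistent, since $X - Y$ already witnesses non-amenability and $Y \not\indep Y \mid \mathbf{S}$. I expect the main obstacle to be this converse direction, namely the collider elimination: converting an arbitrary active path, whose colliders may be activated via descendants that reach $Pa_G(V)$ or $X$, into a genuinely possibly directed path that avoids $X$. Getting the induction on path length and the Meek-rule rerouting right, while carefully distinguishing \ac{CPDAG} parents from the extra conditioned node $X$, is where the real work lies.
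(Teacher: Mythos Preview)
Your approach differs substantially from the paper's. You negate both sides and reduce to a per-sibling equivalence: for each $V\in Sib_G(X)$, a possibly directed $V$--$Y$ path avoiding $X$ exists iff $V\dep Y\mid Pa_G(V)\cup\{X\}$. The paper instead proves the two directions of the lemma directly. Its ``if'' direction is close to your forward step. Its ``only if'' direction, however, does \emph{not} establish your per-$V$ converse; it assumes global amenability and runs a four-case analysis on arbitrary $V$--$Y$ paths, crucially invoking \Cref{cor:4.2maathuis} (so that $Pa_G(X)$ is a valid adjustment set) together with the CPDAG fact $Pa_G(X)\subseteq Pa_G(V)$ to handle paths on which $X$ is a collider, and \Cref{thm:possible_ancestor} to handle paths not through~$X$. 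Your decomposition, if it goes through, would yield a cleaner and slightly stronger statement; the paper's route trades that elegance for access to the identifiability machinery.

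Two remarks on the details. First, in your forward direction, ``a possibly directed path has no colliders'' is a CPDAG statement, whereas d-connection must hold in a DAG of the MEC where undirected edges on $q$ are oriented and could in principle create colliders. The fix is that a shortest possibly directed path is unshielded (\Cref{lem:b1zhang}), hence by \Cref{thm:4.1andersson} entirely directed or entirely undirected, so no DAG can place a collider on it; you should make this explicit. Second, and more seriously, your converse is only a sketch, and the per-$V$ claim you are targeting is strictly stronger than what the lemma needs: the paper's Case~1b argument genuinely uses amenability of the \emph{whole} CPDAG to know that $Pa_G(X)$ blocks all back-door $X$--$Y$ paths, a fact unavailable to you when reasoning about a single sibling. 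Your proposed collider elimination must cope with colliders activated via descendants reaching $\mathbf S$, and with $X$ itself appearing as an activated collider (unlike the elements of $Pa_G(V)$, $X$ is a sibling of $V$ and need not be a non-collider on the path). Extending the rerouting argument behind \Cref{thm:possible_ancestor} to accommodate this extra conditioned sibling is not routine, and you have not supplied it; the paper's proof structure suggests the natural route leans on global amenability rather than a purely local shortening argument.
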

We provide a proof in \Cref{proof:local_amenability}.
We implement this lemma through \Cref{alg:is_amenable} and use it in Step 2 of LOAD. This test requires only knowing a local subgraph of the CPDAG, in particular the parents and adjacencies of each sibling of the treatment. We show that with \Cref{lem:identifiability_implies_expl_an}
and \Cref{lem:local_amenability}, LOAD correctly determines the identifiability of the causal effects between the targets.

\begin{restatable}[]{corollary}{loadidentifiability}
\label{lem:load_identifiability}
 LOAD is sound and complete in determining the identifiability of the causal effect between a pair of targets $X$ and $Y$.
\end{restatable}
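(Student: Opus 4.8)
The plan is to obtain the corollary directly from the three preceding results together with the fact, recalled in the preliminaries, that amenability (\Cref{def:amenability}) is a necessary and sufficient condition for identifiability \citep{perkovic2020identifying}, by an exhaustive case analysis on the relation that LOAD computes in Step~1. By \Cref{lem:relate} this relation is recovered soundly and completely, so I would fix the true CPDAG $G$ and condition on which of the mutually exclusive and exhaustive relations holds between the targets: definite non-ancestral, explicit ancestral, or possible-but-not-explicit ancestral.

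First I would dispose of the definite non-ancestral case. If $X$ is a definite non-ancestor of $Y$, then $X$ is not an ancestor of $Y$ in any DAG of the MEC, so the effect of $X$ on $Y$ is zero in each such DAG and is therefore trivially identifiable; LOAD reports identifiable, and the symmetric argument handles $Y$ on $X$. Next I would treat the explicit ancestral case, say $X \in ExplAn_G(Y)$, in which LOAD designates $X$ as treatment and $Y$ as outcome and runs its amenability test. Since explicit ancestry implies $X \in PossAn_G(Y)$, the hypothesis of \Cref{lem:local_amenability} is satisfied, and that lemma shows the local conditional-independence test implemented in Step~2 holds exactly when $G$ is amenable relative to $(X,Y)$. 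By \citet{perkovic2020identifying}, amenability is necessary and sufficient for the effect of $X$ on $Y$ to be identifiable, so LOAD's test returns identifiable precisely when the effect is.

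Finally I would close the possible-but-not-explicit ancestral case: here $X \in PossAn_G(Y)$ while $X \notin ExplAn_G(Y)$ (possibly in both directions). By the contrapositive of \Cref{lem:identifiability_implies_expl_an} the effect cannot be identifiable, so LOAD is correct in reporting non-identifiability and returning the locally valid parent adjustment sets.

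The main obstacle is not any single computation but the alignment across branches: one must verify that LOAD's algorithmic branching coincides with this partition, that the precondition $X \in PossAn_G(Y)$ of \Cref{lem:local_amenability} is indeed available in the explicit-ancestor branch, and, crucially, that the \emph{adjustment amenability} tested locally in \Cref{lem:local_amenability} is the same amenability (\Cref{def:amenability}) that the characterization of \citet{perkovic2020identifying} equates with identifiability. Once that correspondence is secured, the corollary follows by assembling the cases.
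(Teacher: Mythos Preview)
Your proposal is correct and follows essentially the same route as the paper's own proof: case-split on the relation returned by Step~1 (justified by \Cref{lem:relate}), invoke \Cref{lem:identifiability_implies_expl_an} for the possible-but-not-explicit case, and invoke \Cref{lem:local_amenability} together with the amenability characterization of identifiability for the explicit-ancestor case. Your treatment is in fact slightly more careful than the paper's in that you single out the definite-non-ancestor branch (identifiable with zero effect) explicitly, whereas the paper's proof text folds this into the ``no explicit ancestral relation found'' branch; your concern about the coincidence of ``adjustment amenability'' in \Cref{lem:local_amenability} with \Cref{def:amenability} is legitimate but harmless here, as the two are used synonymously in the paper.
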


\begin{algorithm}[t]
    \centering
    \caption{LocalAmenTest}
    \label{alg:is_amenable}
    \begin{algorithmic}[1]
        \Require Targets $X$, $Y$, Sibling $V$, Graph $G$
        \If{$V \in Adj_G(Y)$}
            \State\Return{False}
        \ElsIf{$V \indep Y | Pa_G(V) \cup \{X\}$}
            \State\Return{True}
        \Else
            \State\Return{False}
        \EndIf
    \end{algorithmic}
\end{algorithm}

We provide a proof in \Cref{proof:load_identifiability}.
Similarly to the previous step, if the causal effect of the treatment on the outcome is not identifiable, then LOAD returns the locally valid parent adjustment sets.
If the causal effect is identifiable, then LOAD continues in Step 3 to identify the possible descendants of the treatment using \Cref{alg:is_possible_ancestor}, which we use in a modified forbidden projection \citep{witte2020efficient} that instead of marginalizing the forbidden nodes out, projects over the nodes $\mathbf{V} \setminus PossDe_G(T) \cup \{T, O\}$. We show that the optimal adjustment set is still the parents of the outcome in this smaller projection, as in the original projection:
\begin{restatable}[]{lemma}{possde}
\label{lemma:possde}
    For treatment $T$ and outcome $O$ in an amenable CPDAG $G$, let 
    $\mathbf{D} = PossDe_G(T)) \setminus \{T,O\}$ and
    $G^{T,O}$ be the modified forbidden projection with nodes $\mathbf{V}\setminus \mathbf{D}$. 
    We show that $G^{T,O}$ is a CPDAG and the  optimal adjustment set $Oset_G(X,Y)$ is given by
    $$
    Oset_G(T,O) = Pa_{G^{T,O}}(O) \setminus \{T\}.
    $$
\end{restatable}
We provide a proof in App.~\ref{proof:possde}. In Step 4, LOAD performs local causal discovery on the outcome in this modified forbidden projection by running MB-by-MB only on $\mathbf{V} \setminus PossDe_G(T) \cup \{T, O\}$. The parents of the outcome identified by local discovery are the optimal adjustment set.
We show that LOAD is sound and complete in discovering optimal adjustment sets.

\begin{algorithm}[t!]
\footnotesize{
\caption{\ac{LOAD}}
\label{alg:load}
\begin{algorithmic}[1]
    \Require{Targets $X,Y \in \mathbf{V}$ and variables $\mathbf{V}$}
    \Ensure{Causal relation of $X$ and $Y$ in $Relation$, if the effects of $X$ on $Y$ and $Y$ on $X$ are identifiable in $IsIdent$, and either optimal or locally valid parent adjustment set in $AdjSet$ based on identifiability.}
    \State $IsIdent_{X \to Y} \gets \text{False}$, $IsIdent_{Y \to X} \gets \text{False}$    
    \State $AdjSets_{X \to Y} \gets \emptyset, AdjSets_{Y \to X} \gets \emptyset$
    \Step{\textbf{Step 1:} Determine causal relations between targets}
    \State $Relation, G_X, G_Y \gets$ LocalRelate($X,Y, \mathbf{V}$) \Comment{\Cref{alg:relate}}\label{line:load:relate}
    \If{$Relation(X,Y) = ExplAn$}
        \State T, O $\gets X, Y$
    \ElsIf{$Relation(Y,X) = ExplAn$}
        \State T, O $\gets Y, X$
    \Else\Comment{Cannot determine treatment and outcome}\label{line:load:no_explan_start}
        \If{$Relation(X,Y) = PossAn$}
            \State $AdjSets_{X \to Y} \gets \text{LocalValidSets}(X, Y, G_X)$
        \EndIf
        \If{$Relation(Y,X) = PossAn$}
            \State $AdjSets_{Y \to X} \gets \text{LocalValidSets}(Y, X, G_Y)$
        \EndIf
        \If{$Relation(X,Y) = DefNonAn$}
         \State $IsIdent_{X \to Y} \gets \text{True}$
         \EndIf
        \If{$Relation(Y,X) = DefNonAn$}
         \State $IsIdent_{Y \to X} \gets \text{True}$
        \EndIf
        \State\Return{$Relation, IsIdent, AdjSets$}
    \EndIf\label{line:load:no_explan_end}
    \State $IsIdent_{O \to T} \gets \text{True}$ \Comment{Zero effect in reverse dir.}
    \Step{\textbf{Step 2:} Test identifiability of treatment on outcome}
    \For{$V \in Sib_{G_T}(T)$}\label{line:load:amenability_start}
        \State $G_V \gets \text{MB-by-MB}(V, \mathbf{V})$
        \If{\textbf{not} LocalAmenTest($T,O,V,G_V$)}
            \State $AdjSets_{T \to O} \gets \text{LocalValidSets}(T, O, G_T)$
            \State\Return{$Relation, IsIdent, AdjSets$}
        \EndIf
    \EndFor\label{line:load:amenability_end}
    \State $IsIdent_{T \to O} \gets \text{True}$
    \Step{\textbf{Step 3:} Find possible descendants of treatment}
    \State $PossDe(T) \gets \{T,O\}$\label{line:load:possde_start}
    \For{$V \in \mathbf{V} \setminus \{T,O\}$}
       \If{IsPossAn($T, V, G_T$)}
            \State $PossDe(T) \gets PossDe(T) \cup \{V\}$
        \EndIf
    \EndFor\label{line:load:possde_end}
    \Step{\textbf{Step 4:} Identify $Oset$ via mod. forbidden projection}
    \State $G^{T,O}_O \gets \text{MB-by-MB}(O, \mathbf{V} \setminus PossDe(T) \cup \{T, O\})$\label{line:load:forb_proj}
    \State $Oset(T,O) \gets Pa_{G^{T,O}_O}(O) \setminus \{T\}$\label{line:load:oset}
    \State $AdjSets_{T \to O} \gets \{Oset(T,O)\}$
    \State\Return{$Relation, IsIdent, AdjSets$}
\end{algorithmic}
}
\end{algorithm}

\begin{restatable}[]{theorem}{loadoset}
\label{thm:load_oset}
    LOAD is sound and complete in finding optimal adjustment sets for a pair of target variables.
\end{restatable}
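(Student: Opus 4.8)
The plan is to reduce soundness and completeness of \Cref{alg:load} to three independent claims: (i) LOAD enters the optimal-set computation (Steps 3--5) exactly when the effect is identifiable, with the treatment $T$ and outcome $O$ correctly assigned; (ii) Steps 3--4 recover the mediating set $Cn_G(T,O)$ using only local subgraphs; and (iii) the pruning in Step 5 reproduces \Cref{def:oset}. Claim (i) is immediate from \Cref{lem:relate} (LocalRelate soundly and completely orients the target relation) together with \Cref{lem:load_identifiability}, which guarantees that the amenability test of Step 2 passes iff the effect of the explicit ancestor on the other target is identifiable. In all branches that bypass Step 3 the returned object is a set of locally valid parent adjustment sets, whose correctness is given by \citep{maathuis2009estimating}, so the remaining work lies entirely in the identifiable case.

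For claim (ii) I would argue that Step 3 computes the explicit descendants of $T$ correctly: each query tests $T \in ExplAn_G(V)$ using only $Pa_G(T) \cup Sib_G(T)$ read from $G_T$, which is sound and complete by \Cref{thm:explicit_ancestor}. Step 4 keeps those $V$ with $V \in ExplAn_G(O)$, again correct by \Cref{thm:explicit_ancestor} applied to the locally learned $G_V$. I then need the characterization that a node lies on a directed path from $T$ to $O$ in $G$ iff it is both an explicit descendant of $T$ and an explicit ancestor of $O$: the forward direction splits such a path at the node, and the converse concatenates the two directed paths into a directed walk $T \rightsquigarrow V \rightsquigarrow O$, which in an acyclic graph places $V$ on a directed path from $T$ to $O$. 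Hence $Cn(T,O) \cup \{O\} = Cn_G(T,O)$, and every parent set $Pa_{G_V}(V)$ used in Step 5 equals the true $Pa_G(V)$ since MB-by-MB soundly recovers the local structure of $V$.

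The crux, which I expect to be the main obstacle, is claim (iii): showing that Step 5's output
$$
Pa_G(Cn_G(T,O)) \setminus \bigl(Cn(T,O) \cup \{T\}\bigr)
$$
equals $Oset_G(T,O) = Pa_G(Cn_G(T,O)) \setminus Forb_G(T,O)$, i.e. that pruning only the mediators and $T$ agrees with pruning the full forbidden set $Forb_G(T,O) = De_G(Cn_G(T,O)) \cup \{T\}$. Because $Cn(T,O) \cup \{T\} \subseteq Forb_G(T,O)$, it suffices to prove that no node of $De_G(Cn_G(T,O)) \setminus (Cn(T,O) \cup \{T\})$ is a parent of any node of $Cn_G(T,O)$. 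I would take a $W$ that is simultaneously a parent of some $C \in Cn_G(T,O)$ and a descendant of some $C' \in Cn_G(T,O)$, and concatenate $T \rightsquigarrow C' \rightsquigarrow W \to C \rightsquigarrow O$ to place $W$ on a directed path from $T$ to $O$; acyclicity then forces $W \neq O$ (else $O \to C \rightsquigarrow O$) and $W \neq T$ (else $T \rightsquigarrow C' \rightsquigarrow T$ with $C' \neq T$), so $W \in Cn(T,O)$, contradicting $W \notin Cn(T,O) \cup \{T\}$. Thus the two set differences coincide and $AdjSets_{T \to O} = \{Oset_G(T,O)\}$, establishing both soundness and completeness.

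Beyond this combinatorial identity, the points I would take care to verify are that \Cref{def:oset}'s directed-path definition of $Cn_G$ is the one to which the explicit-ancestry tests apply --- this is exactly what amenability, certified in Step 2, buys us --- and that every parent, descendant, and ancestry query in Steps 3--5 is answerable from the already-collected local subgraphs $G_T$, $G_O$, and the $G_V$, so that the whole computation genuinely uses only local information. These are bookkeeping obligations once the set identity is in hand; the structural argument with the concatenated directed walk is the only genuinely nontrivial step.
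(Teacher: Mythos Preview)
Your proposal is correct and follows essentially the same route as the paper's proof: reduce to identifiability via \Cref{lem:load_identifiability}, recover $Cn_G(T,O)$ by combining the two explicit-ancestry tests, and finish with the set identity $Pa_G(Cn_G(T,O)) \setminus Forb_G(T,O) = Pa_G(Cn_G(T,O)) \setminus (Cn_G(T,O) \cup \{T\})$ via the same concatenated-directed-walk argument you give in claim (iii). You are in fact slightly more careful than the paper in distinguishing the algorithm's $Cn(T,O)$ (which excludes $O$) from the definitional $Cn_G(T,O)$ and in explicitly ruling out $W=T$ and $W=O$.
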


We provide the proof in \Cref{proof:load_oset}.
We show our implementation of \ac{LOAD} in \Cref{alg:load}.
LOAD returns three sets of variables.
$Relation$ variable holds the causal relation between the targets, while $IsIdent$ indicates whether a causal effect is identifiable or not.
If the causal effect is identifiable, then $AdjSets$ contains the optimal adjustment set, which is trivially the empty set for a zero causal effect.
Otherwise, it contains the locally valid parent adjustment sets for targets that are explicit or possible ancestors of the other.
We discuss further optimizations in \Cref{sec:optimizations}, including substituting MB-by-MB in line~\ref{line:load:forb_proj} with just Markov blanket discovery, which did not significantly improve performance.

\paragraph{Computational Complexity}
The computational complexity of \ac{LOAD} in terms of CI tests is mainly determined by the local causal discovery algorithm it uses.
Given a local causal discovery algorithm with complexity $\mathcal{O}(L)$, the worst-case complexity of LOAD is given by applying the local algorithm on all variables, adding up to $\mathcal{O}(L \times |\mathbf{V}|)$, plus performing $\mathcal{O}(|\mathbf{V}|)$ tests to determine causal relations using \Cref{alg:is_explicit_ancestor} and \Cref{alg:is_possible_ancestor}, dominated by the previous term. 
However, we can re-implement most local methods, so that they cache and re-use Markov blankets and local structures from their earlier runs, which allows us to only run a Markov blanket discovery and local structure learning once per variable, reducing the worst-case complexity to $\mathcal{O}(L)$.

In our implementation, we use the MB-by-MB algorithm modified to cache and reuse Markov blankets and local structures from earlier runs in  \Cref{alg:mb-by-mb}.
The complexity of MB-by-MB depends on the complexity of the algorithm for finding Markov blankets, which is a well-known task in literature. 
The worst-case complexity for forward search in Markov blanket discovery is $\mathcal{O}(|MB_{\max}| \times |\mathbf{V}|)$, where $|MB_{\max}|$ is the size of the largest Markov blanket \citep{tsamardinos2003algorithms}.
For learning local structures over the Markov Blankets, we then use PC  \citep{spirtes2000causation} which has a worst-case complexity of $\mathcal{O}(|MB_{\max}|^{d_{\max}+2})$, where $d_{\max}$ is the maximum degree over all nodes.

In the worst-case, MB-by-MB runs both subroutines for each node in the graph, resulting in a total complexity for our implementation of $\mathcal{O}(|MB_{\max}| \times |\mathbf{V}|^2 + |MB_{\max}|^{d_{\max}+2}\times |\mathbf{V}|))$.
On the other hand, when applied to the whole graph, the PC algorithm has a worst-case complexity of $\mathcal{O}(|\mathbf{V}|^{d_{\max}+2})$. Since typically $|MB_{\max}| \ll |\mathbf{V}|$ this showcases the benefits of local causal discovery and our method.
This means that LOAD typically has a much lower complexity compared to running a global method such as PC to obtain a full CPDAG and then collecting adjustments sets using IDA-like methods \citep{maathuis2009estimating}, which we also demonstrate empirically.

\section{EMPIRICAL EVALUATION}
\label{sec:experiments}

\begin{figure*}[ht!]
    \centering
    \includegraphics[width=.9\linewidth]{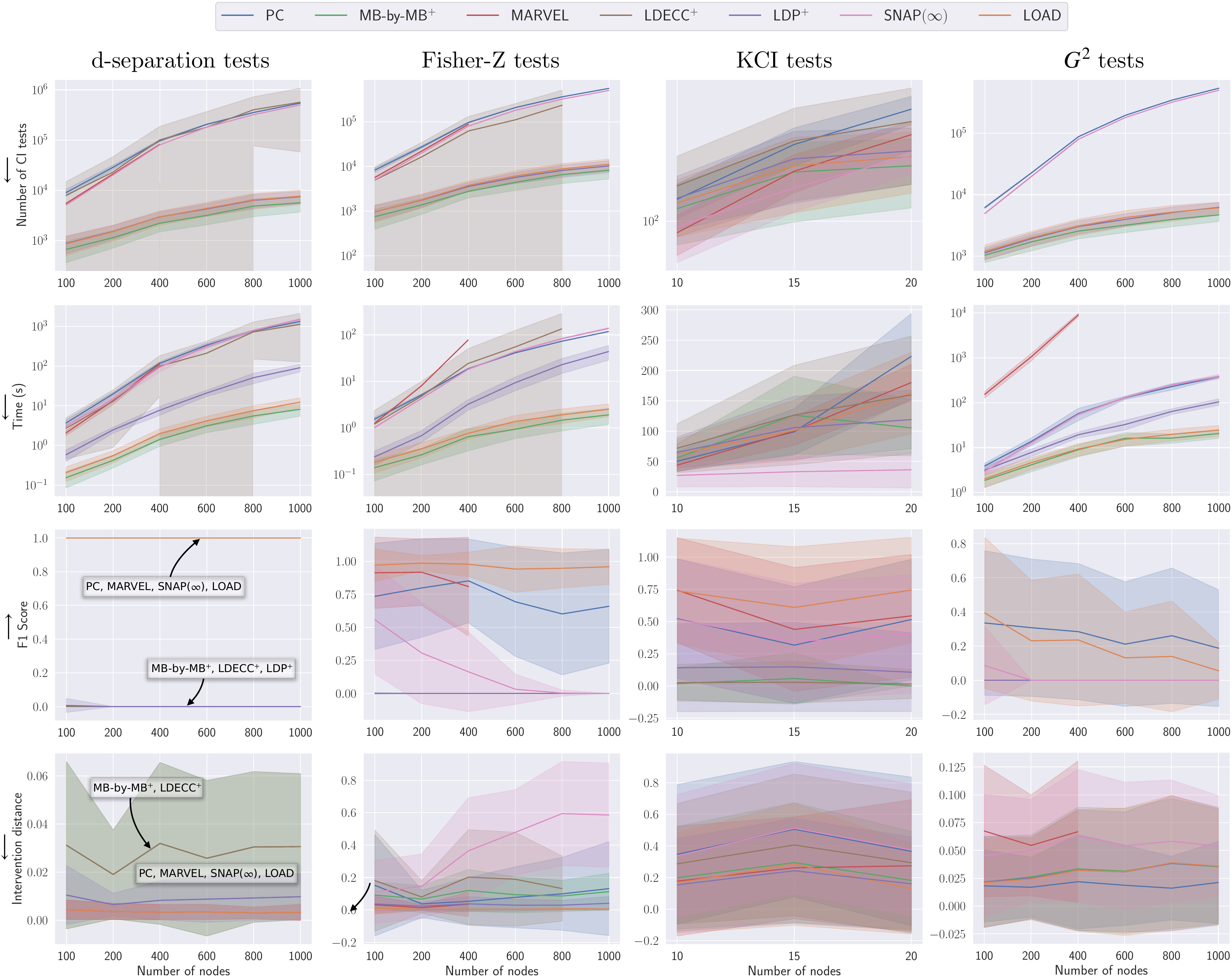}
    \caption{Results over number of nodes with $n_{\mathbf{D}} = 10000$ ($n_{\mathbf{D}} = 1000$ for KCI), $\overline{d} = 2$, $d_{\max} = 10$ and targets such that one is an explicit ancestor of the other.
    The shadow area denotes the the standard deviation.
    We could not run MARVEL on more than 400 nodes in any setting, LDECC$^+$ on 1000 nodes with Fisher-Z tests and LDECC$^+$ on any number of nodes with $G^2$ tests due to memory issues.
    We report all numbers in detail in \Cref{sec:extended_results}.
    }
    \label{fig:main_results}
\end{figure*}

We evaluate the performance of LOAD on synthetic and realistic semi-synthetic data from bnlearn \citep{scutari2010bnlearn}. We compare with global algorithms PC \citep{spirtes2000causation} and MARVEL \citep{mokhtarian2021recursive}, the targeted causal discovery method SNAP($\infty$) \citep{schubert2025snap}, as well as the extended versions of local algorithms MB-by-MB \citep{wang2014discovering}, LDECC \citep{gupta2023local} and LDP \citep{maasch2024local} based on our \Cref{alg:relate}. We describe the extension in \Cref{app:baselines_plus} and refer to the extended methods as MB-by-MB$^+$, LDECC$^+$ and LDP$^+$. 
In this section, we focus on the setting in which the causal relation between the two targets is not known.
In \Cref{sec:bk_experiments}, we present results for the scenario where the treatment-outcome relation is known, and evaluate the original versions of MB-by-MB, LDECC and LDP, showing similar results.
We publish our code at \url{https://github.com/matyasch/load}.

\paragraph{Synthetic data.}
We evaluate \ac{LOAD} using  $n_{\mathbf{D}} = 10000$ data samples generated according to randomly generated Erdős–Rényi graphs with varying number of nodes $n_{\mathbf{V}}$, expected degree of $\overline{d} = 2$ and maximum degree of $d_{\max} = 10$.
We sample pairs of targets such that one is an explicit ancestor of the other, but we do not provide this background knowledge about the causal relation between the targets to the algorithms.
We evaluate algorithms using three types of data and corresponding \ac{CI} tests: oracle d-separation tests, Fisher-Z tests on linear Gaussian data, and $G^2$ tests on binary data.
We also evaluate all algorithms using the non-parametric KCI tests \citep{zhang2011kernel} using $n_{\mathbf{D}} = 1000$ data samples of linear Gaussian data.
For all CI tests we use a significance level $\alpha = 0.01$.
For linear Gaussian data, we sample edge weights from $[-3, -0.5] \cup [0.5, 3]$ with standard Gaussian noises.
For the binary data, we generate a conditional probability table according to the graph structure with probabilities sampled uniformly.
We report the average results over 100 seeds with the best 5 and worst 5 results removed for each method to better show the general trends.
We provide further details of the experimental setup in \Cref{sec:experimental_details}.

\paragraph{Realistic data.}
We evaluate all baselines on realistic data generated according to the MAGIC-NIAB and ANDES networks from bnlearn \citep{scutari2010bnlearn} with ground truth graphs in \Cref{app:real_networks}. 
Similarly to the synthetic setting, we sample target pairs such that one is an explicit ancestor of the other.
For MAGIC-NIAB ($n_{\mathbf{V}}=44$ and  $\overline{d}=3$) we simulate linear Gaussian data and use Fisher-Z tests with significance level $\alpha = 0.01$.
For ANDES ($n_{\mathbf{V}}=223$ and $\overline{d}=3.03$) we generate binary data and use $G^2$ tests.

\paragraph{Metrics.}
We compare methods over three metrics: number of CI tests (which is a hardware and implementation independent measure of computational efficiency), F1 score on the optimal adjustment sets and the intervention distance.
If both the output of an algorithm and the true CPDAG indicates that the optimal adjustment set does not exist, then we consider the F1 score to be 1.
If the output of an algorithm and the true CPDAG does not agree whether the optimal adjustment set should exist, then we consider the F1 score to be 0.
For MB-by-MB$^+$, LDECC$^+$ and LDP$^+$ we use the highest F1 score reached by the adjustment sets they return.
As the recovered adjustment sets should be ultimately used for causal effect estimation, we also report the intervention distance \citep{schubert2025snap}, defined for a single target pair as
$$
    \frac{1}{2} \sum_{T,T' \in \{(X,Y), (Y,X)\}}\frac{1}{|\hat{\Theta}_{T \to T'}|} \sum_{\hat{\theta} \in \hat{\Theta}_{T \to T'}} \left|\theta^*_{T \to T'}-\hat{\theta} \right|,
$$
where $\theta^*_{T \to T'}$ is the true causal effect of $T$ on $T'$ and $\hat{\Theta}_{T \to T'}$ is the set of estimated causal effects.
For all methods, if their output indicates that $T$ is a definite non-ancestor of $T'$, either based on the output CPDAG, the returned relation or missing adjustment sets, we consider its estimated causal effect $\hat{\Theta}_{T \to T'}$ to be $\{0\}$.
For MB-by-MB$^+$, LDECC$^+$, LDP$^+$ and LOAD, we use the adjustment sets they return to estimate the causal effects $\hat{\Theta}_{T \to T'}$.
For PC, MARVEL and SNAP, if their output CPDAG indicates that the causal effect is identifiable, then we use the optimal adjustment set according to the CPDAG for the causal effect estimation. If the output CPDAG indicates that the causal effect is not identifiable, then we use the locally valid parent adjustments, which results in a set of possible causal effects $\hat{\Theta}_{T \to T'}$.
Following \citet{gradu2022valid}, we use 10000 newly generated synthetic samples for causal effect estimation to avoid \emph{double dipping}.

\paragraph{Simulated Data Results.}
Our experimental results in \Cref{fig:main_results} show that LOAD combines the best of both worlds in terms of local and global causal discovery. In particular, it provides a comparable accuracy in causal effect estimation to the slow but accurate global causal discovery methods, while being only slightly more computationally expensive than the fast but less accurate local causal discovery methods.
We could not compare some of the methods in some settings, because they had running time or memory issues. In particular, we could not run MARVEL on more than 400 nodes, LDECC$^+$ on 1000 nodes with Fisher-Z tests and LDECC$^+$ on any number of nodes with $G^2$ tests.

The first row of \Cref{fig:main_results} describes the number of CI tests for each  method in each of the three settings: d-separation, Fisher-Z tests on linear Gaussian data and $G^2$ tests on binary data.
In all settings, the number of CI performed by LOAD is consistently well below global methods, LDECC$^+$ and SNAP($\infty$) and slightly above MB-by-MB$^+$ and LDP$^+$.
This is expected, since LOAD requires fewer tests than global methods due to not recovering the whole graph, but it requires more CI tests than local methods, since it focuses on a more difficult task, optimal adjustment set discovery.

We report the computation time in the second row of \Cref{fig:main_results}.
Our results show similar trends as the number of CI tests, but with LOAD and MB-by-MB$^+$ performing the best, requiring even less computation time than LDP, over all data settings except with KCI tests where SNAP($\infty$) performs best.

The third row of \Cref{fig:main_results} describes the F1 score for learning the optimal adjustment set in each setting. This is the main quality metric our method is targeting. Our results using oracle d-separation tests empirically verify that LOAD is sound and complete in this task the same way as global methods and SNAP($\infty$) are, providing an F1 score of 1. On the other hand, local methods (MB-by-MB, LDECC and LDP) have an F1 score of 0 in terms of recovering the optimal adjustment set, which is expected since this is not their goal.
On linear Gaussian data with Fisher-Z tests, LOAD outperforms all baselines and can consistently recover the optimal adjustment set almost perfectly. PC, MARVEL and SNAP($\infty$) still recover some of the optimal adjustment sets, while as expected local methods are still not able to learn any of the optimal adjustment sets.
LOAD similarly outperforms all baselines with KCI tests.
All methods struggle on binary data using $G^2$ tests, as it is inherently more challenging for CI tests than linear Gaussian data, leading to more CI errors, as we discuss in \Cref{sec:ci_errors}.
Here, PC performs best followed by LOAD, while other methods, including MARVEL and SNAP($\infty$), cannot recover the optimal adjustment set.

The fourth row of \Cref{fig:main_results} shows the intervention distance results.
Intervention distance provides a complementary metric for the quality of the optimal adjustment set discovery, since it focuses on the downstream task of causal effect estimation using the discovered (optimal) adjustment sets.
The theoretical advantages of the optimal adjustment set are showcased when measuring the intervention distance on results for oracle d-separation CI tests.
Here, methods that can recover the optimal adjustment set, including LOAD, achieve not only the lowest intervention distance, but also the lowest variance.
On linear Gaussian data, LOAD outperforms all baselines in intervention distance, with LDP following closely.
On binary data, PC has the lowest intervention distance, while LOAD is the second best with local methods MB-by-MB$^+$ and LDP$^+$.

\paragraph{Ablations.}
The accuracy of LOAD in estimating the optimal adjustment set depends on the quality of the local neighborhoods estimated by the local causal discovery subroutine.
In \Cref{sec:nb_error} we show that the F1 score of LOAD decreases slightly with more structural errors in the estimated local neighborhoods due to lower sample sizes.
Notably, even with an average of almost one erroneous node in every local neighborhood for 500 samples of linear Gaussian data with Fisher-Z CI tests, the F1 score and intervention distance of LOAD is still better than most baselines.
Additionally, we also study how errors in earlier steps of LOAD propagate to later steps in \Cref{sec:steps}.
As expected, the precision and recall scores of the first two steps are generally higher than the
later steps.
Interestingly, LOAD has a very high precision for Step 2 and better recall than for Step 1, meaning that it almost never predicts that the causal effect is identifiable when it is not.

Our implementation of LOAD utilizes MB-by-MB for local causal discovery, which employs the Grow-Shrink algorithm for Markov blanket discovery.
We perform an ablation where we implement LOAD with alternative modules for local causal discovery in \Cref{sec:lcd_algorithm} and Markov blanket discovery in \Cref{sec:mb_algorithm}.
Our results for replacing MB-by-MB with the CMB \citep{gao2015local} algorithm are presented in \Cref{fig:cmb} and show that MB-by-MB outperforms CMB in every metric except for computation time with both d-separation and Fisher-Z CI tests.
We show results for replacing the Grow-Shrink algorithm with Total-Conditioning \citep{pellet2008using} in \Cref{fig:total_conditioning} and with scored-based S$^2$TMB algorithm \citep{GAO2017277} in \Cref{fig:s2tmb}.
While Total-Conditioning performs fewer CI tests and achieves comparable intervention distances, it requires much more computation time.
Additionally, it fails to recover the optimal adjustment set on binary data.
S$^2$TMB takes a prohibitively long time for more than a few dozen nodes and achieves worse F1 scores and intervention distances than Grow-Shrink.
MB-by-MB, LDECC and LDP assume background knowledge of the treatment-outcome relation.
Thus, we also consider the scenario where this is provided to them in \Cref{sec:bk_experiments}.
We compare with LOAD$^*$, a variant of LOAD that can also leverage this information by skipping Step 1 of the algorithm.
Our results show the same trends as in \Cref{fig:main_results}, with LDP improving on its computational performance and LOAD$^*$ surpassing PC on binary data in F1 score and intervention distances.

We also report results on identifiable target pairs for $n_{\mathbf{V}} \leq 500$ (\Cref{sec:identifiable_experiments}), where the overall patterns remain consistent. The main distinction is again in the binary setting, where the F1 score and intervention distance of LOAD is on par with PC.

We evaluate the robustness of LOAD to different parameters of the synthetic data generation and experiment $n_{\mathbf{V}} = 200 $ with various numbers of data samples in \Cref{sec:samples}, and expected degrees for the synthetic graphs in \Cref{sec:degrees}.
As expected, with larger numbers of samples, PC, MARVEL, SNAP($\infty$) and LOAD improve in F1 score and all methods improve in intervention distance.
The number of samples does not seem to affect the computation time for Fisher Z. For $G^2$ the computation time and tests increase with the number of samples, reducing the gap between local methods and LOAD with PC and SNAP($\infty$). As expected, a larger average degree requires more CI tests and computation time, especially for PC and LDECC$^+$ , while the intervention distance and the F1 score worsen.

\paragraph{Realistic Data Results.}

\begin{table}
\caption{Results for realistic data. The best result for each metric is indicated in \textbf{bold}.
We did not run LDECC$^+$ due to memory issues.}
\label{tab:magic-niab}
\centering
\scriptsize{
\begin{tabular}{lrrrr}
\toprule
\multicolumn{1}{c}{} & \multicolumn{1}{c}{CI tests $\times 10^3$} & \multicolumn{1}{c}{F1 of Oset} & \multicolumn{1}{c}{Int. Dist.} \\
\midrule
\multicolumn{4}{c}{MAGIC-NIAB} \\
PC             & $14.94 \pm 0.34$ & $0.32 \pm 0.43$ & $0.016 \pm 0.035$ \\
MB-by-MB$^+$   & $2.36 \pm 1.65$ & $0.03 \pm 0.07$ & $0.007 \pm 0.007$ \\
MARVEL         & $9.66 \pm 4.87$ & $0.60 \pm 0.42$ & $0.011 \pm 0.015$ \\
LDP$^+$        & $2.49 \pm 1.64$ & $0.14 \pm 0.35$ & $0.009 \pm 0.010$ \\
SNAP($\infty$) & $\mathbf{2.08 \pm 1.64}$ & $0.30 \pm 0.29$ & $0.010 \pm 0.014$ \\
LOAD           & $5.31 \pm 6.79$ & $\mathbf{0.62 \pm 0.43}$ & $\mathbf{0.006 \pm 0.006}$ \\
\midrule
\multicolumn{4}{c}{ANDES} \\
PC             & $69.29 \pm 2.80$ & $0.00 \pm 0.00$ & $0.005 \pm 0.005$ \\
MB-by-MB$^+$   & $\mathbf{2.77 \pm 1.13}$ & $0.00 \pm 0.00$ & $0.004 \pm 0.003$ \\
MARVEL         & $24.75 \pm 0.00$ & $0.00 \pm 0.00$ & $0.013 \pm 0.032$ \\
LDP$^+$        & $2.86 \pm 1.12$ & $0.00 \pm 0.00$ & $0.004 \pm 0.003$ \\
SNAP($\infty$) & $24.75 \pm 0.00$ & $0.00 \pm 0.00$ & $0.013 \pm 0.032$ \\
LOAD           & $3.01 \pm 1.20$ & $\mathbf{0.05 \pm 0.15}$ & $\mathbf{0.002 \pm 0.001}$ \\
\bottomrule
\end{tabular}}
\end{table}

We report our results for the MAGIC-NIAB and ANDES networks in \Cref{tab:magic-niab}.
Similarly to synthetic data, the number of CI tests performed by LOAD is between global and local methods.
The quality of the recovered adjustment sets by LOAD is consistently as one of the best among all methods. 
Notably, LOAD achieves the only non-zero F1 score on the ANDES network, which poses a significant challenge for all algorithms due to generating binary data, as discussed in \Cref{sec:ci_errors}.
Furthermore, LOAD achieves the lowest intervention distance in both settings.
\section{CONCLUSIONS}
\label{sec:conclusion}
We propose LOAD, a sound and complete local method to identify optimal adjustment sets to estimate the causal effect of a pair of target variables.
Unlike previous local discovery methods, LOAD can determine the treatment-outcome relation of the targets, and if the causal effect is identifiable.
We show that LOAD is more computationally efficient than global methods, while retaining their statistical efficiency for causal effect estimation.
As future work, we plan to extend LOAD to the causally insufficient setting, for which there are no unique optimal adjustment sets \citep{smucler2022efficient}, which will require defining graphical criteria to identify candidate adjustments.
\section*{Acknowledgements}

We thank SURF (\url{www.surf.nl}) for the support in using the National Supercomputer Snellius. We would like to thank Roel Hulsman for valuable discussions.

\bibliography{references}
\clearpage
\section*{Checklist}

\begin{enumerate}

  \item For all models and algorithms presented, check if you include:
  \begin{enumerate}
    \item A clear description of the mathematical setting, assumptions, algorithm, and/or model. [Yes]
    \item An analysis of the properties and complexity (time, space, sample size) of any algorithm. [Yes]
    \item (Optional) Anonymized source code, with specification of all dependencies, including external libraries. [Yes]
  \end{enumerate}

  \item For any theoretical claim, check if you include:
  \begin{enumerate}
    \item Statements of the full set of assumptions of all theoretical results. [Yes]
    \item Complete proofs of all theoretical results. [Yes]
    \item Clear explanations of any assumptions. [Yes]     
  \end{enumerate}

  \item For all figures and tables that present empirical results, check if you include:
  \begin{enumerate}
    \item The code, data, and instructions needed to reproduce the main experimental results (either in the supplemental material or as a URL). [Yes, we publish our code at \url{https://github.com/matyasch/load}]
    \item All the training details (e.g., data splits, hyperparameters, how they were chosen). [Yes]
    \item A clear definition of the specific measure or statistics and error bars (e.g., with respect to the random seed after running experiments multiple times). [Yes]
    \item A description of the computing infrastructure used. (e.g., type of GPUs, internal cluster, or cloud provider). [Yes]
  \end{enumerate}

  \item If you are using existing assets (e.g., code, data, models) or curating/releasing new assets, check if you include:
  \begin{enumerate}
    \item Citations of the creator if your work uses existing assets. [Yes]
    \item The license information of the assets, if applicable. [Yes]
    \item New assets either in the supplemental material or as a URL, if applicable. [Not Applicable]
    \item Information about consent from data providers/curators. [Not Applicable]
    \item Discussion of sensible content if applicable, e.g., personally identifiable information or offensive content. [Not Applicable]
  \end{enumerate}

  \item If you used crowdsourcing or conducted research with human subjects, check if you include:
  \begin{enumerate}
    \item The full text of instructions given to participants and screenshots. [Not Applicable]
    \item Descriptions of potential participant risks, with links to Institutional Review Board (IRB) approvals if applicable. [Not Applicable]
    \item The estimated hourly wage paid to participants and the total amount spent on participant compensation. [Not Applicable]
  \end{enumerate}

\end{enumerate}

\newpage
\appendix
\clearpage

\onecolumn
\aistatstitle{Local Causal Discovery for Statistically Efficient Causal Inference \\
Supplementary Materials}

\section{ISSUES IN LOCAL CAUSAL DISCOVERY}
\label{sec:issues}

In this section we detail the issues we found in various local causal discovery methods.
In \Cref{sec:local_pc} we discuss LocalPC, a subroutine implemented by several local discovery algorithm, and show that LocalPC is not sound in terms of the edges it returns.
Then, in \Cref{sec:ldecc} and \Cref{sec:ldp} we consider issues specific to the LDECC and LDP algorithms.

\subsection{Issues with LocalPC}
\label{sec:local_pc}
A key component of several local discovery algorithms is to distinguish adjacent variables from spouses (or co-parents) in the Markov blanket of a variable.
\citet{gupta2023local} implement this subroutine for LDECC by following the logic of the PC algorithm as shown in \Cref{alg:localpc}.
\citet{xie2024local} also \say{apply the logic of the PC algorithm to identify the adjacent edges} over the Markov blanket in their MMB-by-MMB algorithm, an extension of the MB-by-MB algorithm \citep{wang2014discovering} to causally insufficient settings the same way.
Similarly, \citet{li2025local} \say{apply the logic of the
PC algorithm to identify the skeleton} over the Markov blanket in their LocICR algorithm, which aims to identify causal relationships from local information.

\begin{algorithm}
\caption{LocalPC}
\label{alg:localpc}
\footnotesize{
\begin{algorithmic}[1]
    \Require Target $X$, Markov blanket $MB(X)$
    \State $Adj(X) \gets MB(X)$
    \State $s \gets 0$
    \While{$Adj(X) > s$}\label{line:localpc:while}
        \For{$V \in Adj(X)$}
            \For{$\mathbf{S} \subseteq Adj(X) \setminus \{V\}$ s.t. $|\mathbf{S}| = s$}\label{line:localpc:candsets}
                \If{$X \perp\kern-6pt\perp V | \mathbf{S}$}
                    \State $Adj(X) \gets Adj(X) \setminus \{V\}$
                    \State \textbf{break}
                \EndIf
            \EndFor
        \EndFor
        \State $s \gets s + 1$
    \EndWhile
\State\Return{$Adj(X)$}
\end{algorithmic}
}
\end{algorithm}

However, the LocalPC algorithm is not sound, as it can \textbf{incorrectly} identify non-adjacent spouses in the Markov blanket as adjacent.
\Cref{example:localpc_counter_example} shows a CPDAG/PAG an example such a scenario.

\begin{example}
\label{example:localpc_counter_example}
Consider the CPDAG and PAG on \Cref{fig:localpc_counter_example} with treatment $X$ and outcome $Y$.
In both graphs, $X$ and $Y$ are non-adjacent.
However, the LocalPC algorithm in \Cref{alg:localpc} with input $X$ incorrectly identifies $Y$ as adjacent, as shown below.
\begin{compactenum}
    \item LocalPC correctly finds $X \indep V_2$ at $s = 0$.
    \item LocalPC removes $V_2$ from $Adj(X)$.
    \item After this, $V_2$ is never considered for conditioning at line~\ref{line:localpc:candsets}.
    \item Thus, LocalPC never finds $X \indep Y | \{V_1, V_2\}$.
    \item LocalPC \textbf{incorrectly} leaves $Y \in Adj(X)$.
\end{compactenum}

\end{example}

\begin{figure}[ht]
\begin{subfigure}{.49\linewidth}
    \centering
    \includegraphics[width=.8\linewidth]{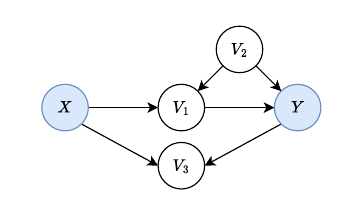}
    \caption{Example CPDAG}
    \label{fig:localpc_cpdag_counter_example}
\end{subfigure}
\begin{subfigure}{.49\linewidth}
    \centering
    \includegraphics[width=.8\linewidth]{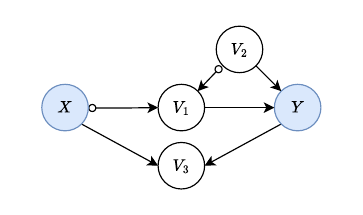}
    \caption{Example PAG}
    \label{fig:localpc_pag_counter_example}
\end{subfigure}
\caption{An example CPDAG and PAG for \Cref{example:localpc_counter_example} where the LocalPC algorithm in \Cref{alg:localpc} incorrectly identifies $Y$ as adjacent to $X$ because it never tests $X \indep Y |\{V_1, V_2\}$}
\label{fig:localpc_counter_example}
\end{figure}

The core issue of the LocalPC algorithm in \Cref{alg:localpc} is that when it tries to separate $X$ from some $V$, it only considers the still remaining adjacencies of $X$ for separating sets, and not the adjacencies of $V$.
This is in contrast with the true logic of the PC algorithm, which would also consider subsets of the adjacencies of $V$ if the adjacencies of $X$ could not separate the two.
Finding separating sets among subsets of the adjacencies, which are a superset of the parents, of both variables is possible due to the Markov condition and Lemma 3.3.9 in \citep{spirtes2000causation}, which we restate in \Cref{lem:pc_skel}.

\begin{lemma}[Lemma 3.3.9 in \citep{spirtes2000causation}]
\label{lem:pc_skel}
    In a DAG $G$, if $Y$ is not a descendant of $X$, and $X$ and $Y$ are not adjacent, then $X$ and $Y$ are d-separated by $Pa(X)$.
\end{lemma}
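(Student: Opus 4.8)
The plan is to prove the d-separation directly by showing that \emph{every} path between $X$ and $Y$ is blocked by the conditioning set $Pa(X)$; by the definition of d-separation this immediately yields $X \indep Y \mid Pa(X)$. I would fix an arbitrary path $p$ from $X$ to $Y$, write it as $X = W_0, W_1, \dots, W_n = Y$, and split into two cases according to the orientation of the first edge between $W_0 = X$ and $W_1$. Observe first that since $X$ and $Y$ are non-adjacent we must have $n \ge 2$, so $p$ always contains at least one interior node that can serve as a blocking node.

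First I would handle the case where $p$ starts with an edge pointing into $X$, i.e.\ $W_1 \to X$. Then $W_1 \in Pa(X)$, and because the edge incident to $W_1$ on the $X$-side points out of $W_1$, the node $W_1$ cannot be a collider on $p$. Hence $W_1$ is a non-collider that lies in the conditioning set $Pa(X)$, which blocks $p$. This case is essentially immediate.

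The remaining case is when $p$ starts with an edge out of $X$, i.e.\ $X \to W_1$. Here I would locate a collider on $p$ and argue that it blocks. Let $k$ be the smallest index for which the edge between $W_k$ and $W_{k+1}$ is oriented as $W_{k+1} \to W_k$ (a ``backward'' edge). Such a $k$ exists with $1 \le k \le n-1$: if no edge were backward, then $X \to W_1 \to \cdots \to W_n = Y$ would be a directed path, making $Y$ a descendant of $X$ and contradicting the hypothesis. By minimality of $k$, all earlier edges point forward, so $X \to W_1 \to \cdots \to W_k$ is a directed path, and the incoming edges $W_{k-1} \to W_k$ and $W_{k+1} \to W_k$ make $W_k$ a collider.

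The hard (though short) part is verifying that this collider actually blocks $p$ under $Pa(X)$, i.e.\ that neither $W_k$ nor any descendant of $W_k$ lies in $Pa(X)$; this is exactly where acyclicity is used. Since $X \to \cdots \to W_k$, the node $W_k$ — and therefore every descendant of $W_k$ — is a descendant of $X$. Suppose some $Z \in De(W_k)$ were also in $Pa(X)$. Then $Z \to X$, while $X \to \cdots \to W_k \to \cdots \to Z$ is a directed path, which together form a directed cycle through $X$, contradicting that $G$ is a DAG. Hence no descendant of $W_k$ (including $W_k$ itself, by the convention that a node is its own descendant) belongs to $Pa(X)$, so the collider $W_k$ blocks $p$. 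Since $p$ was an arbitrary path, every path between $X$ and $Y$ is blocked by $Pa(X)$, and thus $X \indep Y \mid Pa(X)$.
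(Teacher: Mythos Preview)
The paper does not prove this lemma; it merely restates Lemma~3.3.9 from \citet{spirtes2000causation} as an auxiliary result, so there is no in-paper proof to compare against. Your argument is nonetheless correct: the two-case split on the orientation of the first edge, together with locating the first ``backward'' edge in the second case to exhibit a collider whose descendants cannot lie in $Pa(X)$ by acyclicity, is exactly the standard direct path-blocking proof of this fact (and is essentially how Spirtes et al.\ prove it). One could alternatively phrase the result as an instance of the local Markov property---$Pa(X)$ d-separates $X$ from all its non-descendants that are not parents---but that property is itself established by the same path argument you give, so nothing is gained.
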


Crucially, \Cref{lem:pc_skel} does not apply when $Y$ is a descendant of $X$, which is exactly the case in \Cref{example:localpc_counter_example}.
Instead of LocalPC, local algorithms should implement \Cref{lem:nb_in_mb}, which appeared as part of the Grow-Shrink algorithm in \citep{margaritis1999bayesian}, discussed in \citep{pellet2008finding}, stated as a consequence of Lemma~1 in \citep{wang2014discovering}, and also stated as Theorem~1 in \citep{xie2024local} and as Proposition~1 in \citep{li2025local}.

\begin{lemma}
\label{lem:nb_in_mb}
    In a DAG/MAG $G$, for $X$ and $Y \in MB(X)$, $Y$ is adjacent to $X$ if and only if
    $$
    \forall \mathbf{S} \subsetneq MB(X) \setminus \{Y\} : X \dep Y | \mathbf{S}.
    $$
\end{lemma}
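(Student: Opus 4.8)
The plan is to prove the biconditional through its contrapositive in the nontrivial direction. The forward direction is immediate: if $Y$ is adjacent to $X$, the edge between them is a path that no conditioning set can block, so $X \dep Y \mid \mathbf{S}$ for every $\mathbf{S}$, and in particular for every proper subset of $MB(X)\setminus\{Y\}$.

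For the reverse direction I would assume $Y \in MB(X)$ but $Y \notin Adj_G(X)$, and exhibit a proper subset of $MB(X)\setminus\{Y\}$ that separates the two nodes. Since $Y$ lies in the Markov blanket but is non-adjacent, $Y$ must be a pure spouse of $X$, i.e. there is a common child $C$ with $X \to C \leftarrow Y$. I would then take
$$\mathbf{S} = \big(MB(X)\cap An_G(\{X,Y\})\big)\setminus\{Y\}.$$
Two things need checking: that $\mathbf{S}$ is a \emph{proper} subset of $MB(X)\setminus\{Y\}$, and that $X \indep Y \mid \mathbf{S}$. Properness is easy: the common child $C$ is a descendant of both $X$ and $Y$, so by acyclicity $C\notin An_G(\{X,Y\})$; hence $C\in (MB(X)\setminus\{Y\})\setminus\mathbf{S}$ and the containment is strict. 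This is the one place the hypothesis $Y\in MB(X)$ is genuinely used.

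The core of the argument is the separation claim, which I would establish with the standard moral-graph criterion for d-separation: $X\indep Y\mid\mathbf{S}$ iff $X$ and $Y$ are graph-separated by $\mathbf{S}$ in $H := (G[An_G(\{X,Y\}\cup\mathbf{S})])^m$, the moralization of the subgraph induced on the ancestral closure. Because $\mathbf{S}\subseteq An_G(\{X,Y\})$, the relevant ancestral set is just $An_G(\{X,Y\})$. I would then show that \emph{every} neighbour of $X$ in $H$ already lies in $\mathbf{S}$, so that deleting $\mathbf{S}$ isolates $X$ and trivially disconnects it from $Y$. Concretely, a neighbour of $X$ in $H$ arises either from a $G$-edge (a parent of $X$, always an ancestor, or a child of $X$ lying in $An_G(\{X,Y\})$) or from a moral edge (a co-parent sharing a child that survives in the induced subgraph); in every case the neighbour is a parent, child, or spouse of $X$ inside $An_G(\{X,Y\})$, hence in $MB(X)\cap An_G(\{X,Y\})$. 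Finally I must rule out $Y$ being an $H$-neighbour of $X$: $Y$ is non-adjacent in $G$, and a moral edge $X-Y$ would require a common child inside $An_G(\{X,Y\})$, which the acyclicity argument above forbids. Thus $N_H(X)\subseteq\mathbf{S}$, and the separation follows.

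The main obstacle is precisely this separation step, together with the subtlety it conceals: when $Y\in De_G(X)$, conditioning on children of $X$ can reopen colliders, so one cannot simply fall back on $Pa_G(X)$ as in \Cref{lem:pc_skel}. The moral-graph formulation is what tames this — every potentially reopened collider corresponds to a co-parent of $X$, which is a spouse and therefore sits in $MB(X)$; if that co-parent lies in the ancestral set it is in $\mathbf{S}$ and reblocks the path, and if it does not it never enters $H$ at all. For the MAG case I would run the identical argument with m-separation and the augmented-graph separation criterion, using the MAG Markov blanket (parents, children, district, and parents of the district); the same observation that all augmented-graph neighbours of $X$ within the ancestral set belong to $MB(X)$ yields the separator.
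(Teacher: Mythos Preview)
The paper does not prove this lemma; it is stated with citations to prior work (Margaritis 1999, Pellet--Elisseeff 2008, Wang et al.\ 2014, Xie et al.\ 2024, Li et al.\ 2025) and invoked as a known fix for LocalPC, so there is no in-paper argument to compare against.

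Your DAG argument is correct and essentially the standard one found in those references. The separator $\mathbf{S} = (MB(X)\cap An_G(\{X,Y\}))\setminus\{Y\}$ is exactly the right object: the common child $C$ witnesses strict containment (and, as you note, this is the only place $Y\in MB(X)$ is used), and the moral-graph analysis is clean --- every $H$-neighbour of $X$ is a parent, child, or co-parent of $X$ lying in the ancestral set, hence in $MB(X)\cap An_G(\{X,Y\})$, and cannot equal $Y$ because a common child of $X$ and $Y$ in $An_G(\{X,Y\})$ would force a directed cycle. Your remark that conditioning on children can reopen colliders, so $Pa_G(X)$ alone fails when $Y\in De_G(X)$, pinpoints precisely why \Cref{lem:pc_skel} is insufficient and why the Markov-blanket restriction is needed.

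The MAG extension is only sketched. The augmented-graph route is the right idea, but the MAG Markov blanket is more delicate than ``parents, children, district, parents of district'' (one typically needs the district of $X$ within a suitable induced subgraph, together with its parents, and must track bidirected edges when enumerating augmented-graph neighbours of $X$). If you want that case to be rigorous rather than plausible, the bookkeeping for bidirected adjacencies and the corresponding analogue of the ``co-parent in the ancestral set'' step should be written out explicitly.
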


\subsubsection{Solution}
In practice, LocalPC in \Cref{alg:localpc} can be fixed by replacing $Adj(X)$ with $MB(X)$ at lines \ref{line:localpc:while} and \ref{line:localpc:candsets}.

\subsubsection{Code to Reproduce Ex.~\ref{example:localpc_counter_example}}
The following code snippet reproduces \Cref{example:localpc_counter_example} for LDECC using the public implementation from \citet{gupta2023local}.
The assertion in the final line fails because node 4 ($Y$ in the example) is returned as a child of node 0 ($X$ in the example).

\inputminted{python}{code_examples/localpc_ldecc_counter_example.py}

The following code snippet implements \Cref{example:localpc_counter_example} for MMB-by-MMB using the public implementation\footnote{GitHub repository: \url{https://github.com/fengxie009/MMB-by-MMB}.} from \citet{xie2024local}.
Similarly to LDECC, the assertion in the final line fails because node 4 ($Y$ in the example) is returned as adjacent to node 0 ($X$ in the example).

\inputminted{python}{code_examples/localpc_mmb_by_mmb_counter_example.py}

In the case of LocICR, even though $Y$ is considered adjacent by the algorithm, the returned causal relation is correct.

\subsection{Issues with LDECC beyond LocalPC}
\label{sec:ldecc}
In the previous section we showed that LDECC might erroneously consider spouses as adjacent due to the LocalPC algorithm.
In this section we show that even if the LocalPC algorithm is fixed or replaced, LDECC is still not complete, as it might not orient all adjacent edges to the target, even though they are oriented in the true CPDAG.

In particular, Theorem 5 in \citep{gupta2023local} aims to show that \say{every orientable neighbor of the treatment $X$ will get oriented correctly by LDECC}. However, \Cref{example:ldecc_counter_example} shows a CPDAG where LDECC fails to orient an orientable parent of the treatment.

\begin{figure}[ht]
\begin{subfigure}{.49\linewidth}
    \centering
    \includegraphics[width=.8\linewidth]{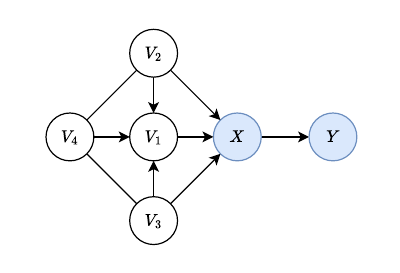}
    \caption{CPDAG for \Cref{example:ldecc_counter_example}}
    \label{fig:ldecc_counter_example}
\end{subfigure}
\begin{subfigure}{.49\linewidth}
    \centering
    \raisebox{10mm}{\includegraphics[width=.8\linewidth]{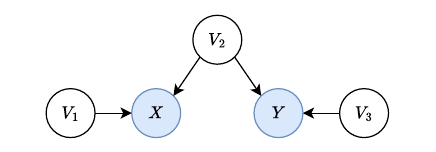}}
    \caption{DAG/CPDAG for \Cref{example:ldp_counter_example}}
    \label{fig:ldp_counter_example}
\end{subfigure}
\caption{(\subref{fig:ldecc_counter_example}): Example CPDAG for \Cref{example:ldecc_counter_example}, where LDECC fails to orient $V_1 \to X$ and identify $V_1$ as a parent of $X$. (\subref{fig:ldp_counter_example}): Example DAG/CPDAG for \Cref{example:ldp_counter_example} where LDP fails to identify a valid adjustment set for treatment $X$ and outcome $Y$, even though the causal effect of 0 and a valid adjustment set of $V_2$ is identifiable.}
\label{fig:ldp_ldecc_counterexamples}
\end{figure}

\begin{example}
\label{example:ldecc_counter_example}
Consider the CPDAG on \Cref{fig:ldecc_counter_example} with treatment $X$ and outcome $Y$. In this CPDAG, all parents of $X$ are orientable. However, LDECC fails to orient $V_1$ as a parent, as shown below.

\begin{compactenum}
    \item LDECC correctly identifies that Markov blanket of $X$ as $MB(X) = \{Y, V_1, V_2, V_3\}$.
    \item LDECC correctly identifies the adjacencies of $X$ as $Adj(X) = \{Y, V_1, V_2, V_3\}$.
    \item LDECC correctly finds $Y \indep V_{1,2,3,4} | X$.
    \item LDECC correctly finds $V_2 \indep V_3 | V_4$.
    \begin{compactenum}
        \item The condition at line 7 is satisfied, correctly identifying $V_2$ and $V_3$ as parents.
        \item The condition at line 19 is satisfied, correctly identifying $Y$ as a children.
    \end{compactenum}
    \item No other independence exists and LDECC \textbf{incorrectly} leaves $V_1$ as unoriented.
\end{compactenum}
\end{example}

Running PC on \Cref{example:ldecc_counter_example} would orient $V_1 \to X$ by first orienting the v-structure $V_2 \to V_1 \gets V_3$, then applying Meek rule 3 to orient $V_4 \to V_1$ and finally applying Meek rule 1 to orient $V_1 \to X$.
Crucially, in the v-structure $V_2 \to V_1 \gets V_3$ that originally starts the propagation of these orientation rules, both $V_2$ and $V_3$ are adjacent to $X$ and hence cannot be separated from it.
LDECC fails on \Cref{example:ldecc_counter_example} because it does not contain a condition to handle this scenario.
In particular, this structure contradicts the proof of Theorem 5 for Meek rule 1 in \citep{gupta2023local}, which states that both $V_2$ and $V_3$ can be separated from $X$ with separating sets that should include $V_1$.

\subsubsection{Code to Reproduce Ex.~\ref{example:ldecc_counter_example}}

\Cref{example:ldecc_counter_example} can be reproduced using the public implementation\footnote{GitHub repository: \url{https://github.com/acmi-lab/local-causal-discovery}.} of LDECC from \citet{gupta2023local} as follows, where the assertion in the final line fails because node 2 ($V_1$ in the example) is not recognized as a parent of node 0 ($X$ in the example) by LDECC.

\inputminted{python}{code_examples/ldecc_counter_example.py}

\subsection{Issues with LDP}
\label{sec:ldp}
Local Discovery by Partitioning (LDP) is a causal discovery algorithm that identifies a valid adjustment set for a treatment-outcome pair by partitioning other variables according to their causal relationships to this pair \citep{maasch2024local}.
LDP requires the following assumptions to hold to be able to discover a valid backdoor adjustment set for the causal effect of treatment $X$ on outcome $Y$:
\begin{compactenum}
    \item $X$ and $Y$ are marginally dependent.
    \item $Y$ is not an ancestor of $X$.
    \item There exists a non-descendant of $Y$ that is marginally dependent on $Y$, but marginally independent of $X$.
    \item There exists a non-descendant of $X$ whose causal effect on $Y$ is \textit{fully mediated} by $X$, shares no confounder with $Y$, and is marginally independent of every non-descendant of $X$ that lie on active backdoor paths between $X$ and $Y$.
\end{compactenum}

\citet{maasch2024local} state that \say{the causal effect of $X$ on $Y$ can be of arbitrary strength or \textbf{null}}. Thus, as long as the conditions above hold $X$ does not need to be an ancestor of $Y$ (although it is then not clear what \textit{fully mediated} means in the fourth condition). However, in this case, \Cref{example:ldp_counter_example} shows a causal DAG with corresponding CPDAG from which a valid backdoor adjustment set and the true causal effect of 0 is identifiable, but LDP incorrectly raises a warning that a valid adjustment set is not identifiable.

\begin{example}
\label{example:ldp_counter_example}
Consider the causal DAG, with identical CPDAG, in \Cref{fig:ldp_counter_example} with treatment $X$ and outcome $Y$.
The DAG satisfies all conditions listed above, as
\begin{compactenum}
    \item $X$ and $Y$ are marginally dependent.
    \item $Y$ is not an ancestor of $X$.
    \item $V_3$ is a non-descendant of $Y$ that is marginally dependent on $Y$, but marginally independent of $X$.
    \item $V_1$ is a non-descendant of $X$ whose causal effect on $Y$ is fully mediated by $X$, shares no confounder with $Y$, and is marginally independent of $V_2$, the only non-descendant of $X$ that lie on active backdoor paths between $X$ and $Y$.
\end{compactenum}

Furthermore, we can identify a valid adjustment set $\{V_2\}$ and a causal effect of $0$ from the CPDAG. However, LDP fails to identify any valid backdoor adjustment sets as shown below.

\begin{compactenum}
    \item[Step 1] correctly identifies $\mathbf{Z}_{8} = \emptyset$
    \item[Step 2] correctly identifies $\mathbf{Z}_{4} = \{V_3\}$.
    \item[Step 3] \textbf{incorrectly} identifies $\mathbf{Z}_{5,7} = \emptyset$ instead of $\mathbf{Z}_{5,7} = \{V_1\}$, because $Y \indep V_1$.
    \item[Step 4] correctly identifies $\mathbf{Z}_{\text{POST}} = \emptyset$.
    \item[Step 5] correctly identifies $\mathbf{Z}_{\text{MIX}} = \emptyset$.
    \item[Step 6] is skipped because $\mathbf{Z}_{\text{MIX}} = \emptyset$, \textbf{incorrectly} resulting in $\mathbf{Z}_1 = \mathbf{Z}_{1,5} = \emptyset$.
    \item[Step 7] is skipped because $\mathbf{Z}_1 = \mathbf{Z}_{1,5} = \emptyset$ \textbf{incorrectly} resulting in $\mathbf{Z}_{5} = \emptyset$.
    \item[Step 8] \textbf{incorrectly} concludes that a valid adjustment set is not identifiable because $\mathbf{Z}_{5} = \emptyset$.
\end{compactenum}

\end{example}

LDP fails on \Cref{example:ldp_counter_example} because all non-descendants of the treatment that satisfy condition 4 are marginally independent of the outcome, violating the conditions of Step 3. Thus, LDP can be easily fixed by extending condition 4 to also require that at least one non-descendant satisfying all existing conditions is also marginally dependent on the outcome.

\subsubsection{Code to Reproduce Ex.~\ref{example:ldp_counter_example}}

\Cref{example:ldp_counter_example} can be reproduced using the public implementation\footnote{GitHub repository: \url{https://github.com/jmaasch/ldp}.} of LDP from \citet{maasch2024local} as follows, where the assertion in the final line fails due to node 2 ($V_2$ in \Cref{example:ldecc_counter_example}) not getting recognized as a valid adjustment set for treatment node 1 ($X$ in \Cref{example:ldecc_counter_example}) and outcome node 3 ($Y$ in \Cref{example:ldecc_counter_example}) by LDP.

\inputminted{python}{code_examples/ldp_counter_example.py}

\clearpage
\section{EXTENDED LOCAL CAUSAL DISCOVERY ALGORITHMS}
\label{app:baselines_plus}

In this section we implement the extensions to local causal discovery algorithms outlined in \Cref{sec:extend}.
We show
MB-by-MB$^+$ in \Cref{alg:mb_by_mb_plus}, LDECC$^+$ in \Cref{alg:ldecc_plus} and LDP$^+$ in \Cref{alg:ldp_plus}.
In general, all three algorithms use \Cref{alg:relate} to determine ancestral relations and then return adjustment sets according to possible treatment-outcome relationships.
MB-by-MB$^+$ and LDECC$^+$ use MB-by-MB and LDECC respectively to find local information and return the locally valid parent sets for each target determined to be an explicit or a possible ancestor of the other.

\begin{algorithm}
    \centering
    \caption{MB-by-MB$^+$}
    \label{alg:mb_by_mb_plus}
    \begin{algorithmic}[1]
    \Require{Targets $X,Y \in \mathbf{V}$ and variables $\mathbf{V}$}
    \Ensure{Causal relation of $X$ and $Y$ in $Relation$ and the locally valid parent adjustment sets of (possible) treatments among $X$ and $Y$.}
    \State $AdjSets_{X \to Y} \gets \emptyset, AdjSets_{Y \to X} \gets \emptyset$
    \Step{\textbf{Step 1:} Determine ancestral relationships}
    \State $LocalRelate \gets$ LocalRelate($X,Y, \mathbf{V}$) \Comment{\Cref{alg:relate}}
    \If{$Relation(X,Y) = ExplAn$}
        \State $AdjSets_{X \to Y} \gets \text{LocalValidSets}(X, Y, G_X)$
    \ElsIf{$Relation(Y,X) = ExplAn$}
       \State $AdjSets_{Y \to X} \gets \text{LocalValidSets}(Y, X, G_Y)$
    \Else\Comment{Cannot determine treatment and outcome}
        \If{$Relation(X,Y) = PossAn$}
            \State $AdjSets_{X \to Y} \gets \text{LocalValidSets}(X, Y, G_X)$
        \EndIf
        \If{$Relation(Y,X) = PossAn$}
            \State $AdjSets_{Y \to X} \gets \text{LocalValidSets}(Y, X, G_Y)$
        \EndIf
    \EndIf
    \State\Return{$Relation, AdjSets$}
    \end{algorithmic}
\end{algorithm}

\begin{algorithm}
    \centering
    \caption{LDECC$^+$}
    \label{alg:ldecc_plus}
    \begin{algorithmic}[1]
    \Require{Targets $X,Y \in \mathbf{V}$ and variables $\mathbf{V}$}
    \Ensure{Causal relation of $X$ and $Y$ in $Relation$ and the locally valid parent adjustment sets of (possible) treatments among $X$ and $Y$.}
    \State $AdjSets_{X \to Y} \gets \emptyset, AdjSets_{Y \to X} \gets \emptyset$
    \Step{\textbf{Step 1:} Determine ancestral relationships}
    \State $LocalRelate \gets$ LocalRelate($X,Y, \mathbf{V}$) \Comment{\Cref{alg:relate}, but MB-by-MB replaced by LDECC at lines \ref{line:localrelate:G_x} and \ref{line:localrelate:G_y}.}
    \If{$Relation(X,Y) = ExplAn$}
        \State $AdjSets_{X \to Y} \gets \text{LocalValidSets}(X, Y, G_X)$
    \ElsIf{$Relation(Y,X) = ExplAn$}
       \State $AdjSets_{Y \to X} \gets \text{LocalValidSets}(Y, X, G_Y)$
    \Else\Comment{Cannot determine treatment and outcome}
        \If{$Relation(X,Y) = PossAn$}
            \State $AdjSets_{X \to Y} \gets \text{LocalValidSets}(X, Y, G_X)$
        \EndIf
        \If{$Relation(Y,X) = PossAn$}
            \State $AdjSets_{Y \to X} \gets \text{LocalValidSets}(Y, X, G_Y)$
        \EndIf
    \EndIf
    \State\Return{$Relation, AdjSets$}
    \end{algorithmic}
\end{algorithm}

LDP$^+$ uses MB-by-MB to find local information and uses LDP to determine a valid adjustment set for each target determined to be an explicit or a possible ancestor of the other.

\begin{algorithm}
    \centering
    \caption{LDP$^+$}
    \label{alg:ldp_plus}
    \begin{algorithmic}[1]
    \Require{Targets $X,Y \in \mathbf{V}$ and variables $\mathbf{V}$}
    \Ensure{Causal relation of $X$ and $Y$ in $Relation$ and the causal partitions determined by LDP for the approriate (possible) treatment-outcome pairs.}
    \State $AdjSets_{X \to Y} \gets \emptyset, AdjSets_{Y \to X} \gets \emptyset$
    \Step{\textbf{Step 1:} Determine ancestral relationships}
    \State $LocalRelate \gets$ LocalRelate($X,Y, \mathbf{V}$) \Comment{\Cref{alg:relate}}
    \If{$Relation(X,Y) = ExplAn$}
        \State $AdjSets_{X \to Y} \gets \text{LDP}(X, Y)$
    \ElsIf{$Relation(Y,X) = ExplAn$}
       \State $AdjSets_{Y \to X} \gets \text{LDP}(Y, X)$
    \Else\Comment{Cannot determine treatment and outcome}
        \If{$Relation(X,Y) = PossAn$}
            \State $AdjSets_{X \to Y} \gets \text{LDP}(X, Y)$
        \EndIf
        \If{$Relation(Y,X) = PossAn$}
            \State $AdjSets_{Y \to X} \gets \text{LDP}(Y, X)$
        \EndIf
    \EndIf
    \State\Return{$Relation, AdjSets$}
    \end{algorithmic}
\end{algorithm}

\clearpage
\section{ADDITIONAL ALGORITHMS}
\label{app:algos}

This section provides pseudocode for all subroutines implemented by \ac{LOAD}.
\Cref{alg:is_possible_ancestor} implements the negation of \Cref{thm:possible_ancestor}, i.e., that if $X \dep Y | Pa_G(X)$, then $X$ is a possible ancestor of $Y$.
Before testing this dependence, the algorithm checks whether possible ancestry can already be determined from the local information around $X$, when $Y$ is part of it.
\Cref{alg:is_explicit_ancestor} implements \Cref{thm:explicit_ancestor}, and also checks the local information around $X$ before testing the corresponding dependence.

\begin{minipage}{0.45\textwidth}
\begin{algorithm}[H]
    \centering
    \caption{IsExplAn}
    \label{alg:is_explicit_ancestor}
    \begin{algorithmic}[1]
        \Require Nodes $X, Y$ and graph $G$
        \If{$Y \in Ch_G(X)$}\label{line:isexplan:test_ch}
            \State\Return{True}\label{line:isexplan:is_ch}
        \ElsIf{$Y \in Pa_G(X) \cup Sib_G(X)$}\label{line:isexplan:test_pa_sib}
            \State\Return{False}\label{line:isexplan:is_pa_sib}
        \ElsIf{$X \dep Y | Pa_G(X) \cup Sib_G(X)$}\label{line:isexplan:test_explanc}
            \State\Return{True}
        \Else
            \State\Return{False}
        \EndIf\label{line:isexplan:end}
    \end{algorithmic}
\end{algorithm}
\end{minipage}
\hfill
\begin{minipage}{0.45\textwidth}
\begin{algorithm}[H]
    \centering
    \caption{IsPossAn}
    \label{alg:is_possible_ancestor}
    \begin{algorithmic}[1]
        \Require Nodes $X, Y$ and graph $G$
        \If{$Y \in Ch_G(X) \cup Sib_G(X)$}\label{line:ispossan:test_ch_sib}
            \State\Return{True}\label{line:ispossan:is_ch_sib}
        \ElsIf{$Y \in Pa_G(X)$}\label{line:ispossan:test_pa}
            \State\Return{False}\label{line:ispossan:is_pa}
        \ElsIf{$X \dep Y | Pa_G(X)$}\label{line:ispossan:test_possan}
            \State\Return{True}
        \Else
            \State\Return{False}
        \EndIf\label{line:ispossan:end}
    \end{algorithmic}
\end{algorithm}
\end{minipage}

\Cref{alg:locally_valid_sets} implements a slight variation of Algorithm 3 in \citep{maathuis2009estimating}.
In particular, it returns all locally valid parent adjustment sets (instead of the estimated causal effects using them). 

Intuitively this algorithm checks if considering a subset of the siblings $\mathbf{S}$ as parents of $T$ in addition to $Pa_G(T)$ would create new v-structures and hence orientations, which would contradict the fact that they are actually siblings and not parents in the original CPDAG. A simple way to avoid this is to only add new candidate parents that are adjacent to the currently considered parents.

\begin{algorithm}[H]
    \centering
    \caption{LocalValidSets (slight variation of Algorithm 3 in \citep{maathuis2009estimating} that returns all valid sets instead of causal effects)}
    \label{alg:locally_valid_sets}
    \begin{algorithmic}[1]
        \Require Treatment $T$, Outcome $O$ and graph $G$
        \State $ValidSets \gets \emptyset$
        \For{$\mathbf{S} \subseteq Sib_G(T)\setminus \{O\}$}
            \Step{Ensure that any new candidate parent $S \in \mathbf{S}$ is adjacent to all candidate parents $Pa_G(T) \cup \mathbf{S} \setminus \{S\}$.}
            \State{$Valid \gets$  True}
            \For{$S \in \mathbf{S}$}
                \If{$(Pa_G(T) \cup \mathbf{S} \setminus \{S \})\centernot\subseteq Adj_G(S)$}
                    \State{$Valid \gets$  False}
                    \State{\textbf{break}}
                \EndIf
            \EndFor
            \If{$Valid$}
                \State $ValidSets \gets ValidSets \cup \{Pa_G(T) \cup \mathbf{S}\}$
            \EndIf
        \EndFor
        \State\Return{$ValidSets$}
    \end{algorithmic}
\end{algorithm}

\Cref{alg:mb-by-mb} implements the MB-by-MB algorithm by \citet{wang2014discovering}.
MB-by-MB is a local causal discovery algorithm that identifies the parents, children and siblings of a target variable by sequentially discovering the Markov blankets of variables until some stopping criteria is met.
Similarly to the original paper, in this algorithm we use the notation of $MB^+(X) = MB(X) \cup \{X\}$.

We allow MB-by-MB to cache and reuse already identified separating sets, Markov blankets and local structures by previous runs of the algorithm, as these might be identified during earlier runs of MB-by-MB on different targets.
This allows us to avoid unnecessarily identifying the same relationships multiple times.
When the cached objects are empty, the algorithm behaves exactly the same as the original algorithm.

The local structure $L_X$ of $X$ is learned at line~\ref{line:mb_by_mb:learn_local_structure} by running the skeleton step of the PC algorithm \citep{spirtes2000causation} over $MB^+(X)$ and then orient v-structures.
While this structure might contain erroneous edges due to latent confounding, \citet{wang2014discovering} prove that the edges connected to $X$ and the v-structures containing $X$ in $L_X$ are correct and thus safe to add to the learned graph over all considered variables $G$ at line~\ref{line:mb_by_mb:update_G}.

\begin{algorithm}
\caption{Slight variation of MB-by-MB \citep{wang2014discovering} with caching}
\label{alg:mb-by-mb}
\footnotesize{
\begin{algorithmic}[1]
    \Require Target $T \in \mathbf{V}$, variables $\mathbf{V}$, cached Markov blankets $MB$ and cached local structures $L$
    
    \Procedure{Orient-Undirected-Edges}{$G$}
        \For{$(a \to b - c) \in G$}
            \If{exists a separating set for $(a,c)$ and $b \in sepset(a,c)$}
                \State Orient $b \to c$ in $G$
            \EndIf
        \EndFor
        \For{$(a \to b \to c - a) \in G$}
            \State Orient $a \to c$ in $G$
        \EndFor
        \For{$a - b, a - c \to b$ and $a - d \to b \in G$}
            \If{exists a separating set for $(c,d)$ and $a \in sepset(c,d)$}
                \State Orient $a \to b$ in $G$
            \EndIf
        \EndFor
        \State\Return{$G$}
    \EndProcedure

    \State Take cached separating sets $sepset$, Markov blankets $MB$ and local structures $L$ from previous runs of MB-by-MB.
    \State $\text{DoneList} = \emptyset$
    \State $\text{WaitList} = [T]$
    \State $G = \{\mathbf{V}, \emptyset\}$
    \Repeat
        \State $X \gets$ take a node from the head of WaitList
        \If{$MB(X) \in MB$}
            \State Use cached $MB(X)$ from an earlier run of MB-by-MB
        \Else
            \Step{Design choice: We use Grow-Shrink for Markov blanket discovery instead of IAMB in \citep{wang2014discovering}}
            \State Find $MB(X)$ using the Grow-Shrink Markov blanket algorithm (Figure 2 in \citep{margaritis1999bayesian})
            \State Add $MB(X)$ to cached $MB$
        \EndIf
        \State Add $[MB(X) \setminus \text{DoneList} \setminus \text{WaitList}]$ to the tail of WaitList
        \State Add $X$ to DoneList
        
        \If{$L_X \in L$}
            \State Use cached $L_X$ from an earlier run of MB-by-MB
        \ElsIf{$MB^+(X) \subseteq MB^+(X')$ for some $X' \in \text{DoneList}$}
            \State Set $L_X$ equal to the substructure of $L_{X'}$ over $MB^+(X)$
        \ElsIf{$MB(X) \subseteq \text{DoneList}$}
            \State Set $L_X$ equal to the substructure of $G$ over $MB^+(X)$
        \Else
            \Step{Design choice: we run the PC skeleton search and orient v-structures instead of IC in \citep{wang2014discovering}}
            \State Learn $L_X$ from observed data of $MB^+(X)$ using PC \citep{spirtes2000causation}  and updating cached $sepset$\label{line:mb_by_mb:learn_local_structure}
            \State Add $L_X$ to cached $L$
        \EndIf
        \State Put the edges connected to $X$ and the v-structures containing $X$ in $L_X$ to $G$\label{line:mb_by_mb:update_G}
        \State Call \textsc{Orient-Undirected-Edges($G$)} to orient undirected edges in $G$
        \State Remove all nodes from WaitList whose paths to $T$ in $G$ are blocked by directed edges
    \Until{WaitList is empty}
    \State\Return{$G, MB, L$}
\end{algorithmic}
}
\end{algorithm}

\clearpage
\section{PROOFS}
\label{app:proofs}

\subsection{Useful Results from Literature}

We use the following result for the correctness of the MB-by-MB algorithms from \citep{wang2014discovering} in our proof for \Cref{lem:relate} in \Cref{proof:relate}.

\begin{theorem}[Theorem 3 in \citep{wang2014discovering}]\label{thm:mb_by_mb}
Suppose that a causal network is causal sufficient and faithful to a probability distribution and that all conditional independencies are correctly checked.
Then the MB-by-MB algorithm can correctly discover the edges connected to the target node $T$.
Further these edges can be correctly oriented as the partially directed network representing the Markov equivalence class of the underlying global causal network.
\end{theorem}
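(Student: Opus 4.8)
The plan is to prove the two assertions of \Cref{thm:mb_by_mb} separately: first, that \Cref{alg:mb-by-mb} recovers exactly the edges incident to the target $T$, and second, that it orients those edges as they appear in the \ac{CPDAG} of the underlying \ac{DAG}. Throughout I use the stated hypotheses — causal sufficiency, faithfulness, and a perfect \ac{CI} oracle — so that every tested (in)dependence coincides with the corresponding d-separation statement in the true graph.

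For the edge-discovery claim, I would first invoke correctness of the Grow--Shrink subroutine \citep{margaritis1999bayesian}: under faithfulness and causal sufficiency it returns the true Markov blanket $MB(X)$ of any node $X$, comprising its parents, children, and spouses. It then remains to separate the genuine neighbors of $X$ from its spouses inside $MB^+(X)$. Here I would apply \Cref{lem:nb_in_mb}: a node $Y \in MB(X)$ is adjacent to $X$ iff no subset of $MB(X)\setminus\{Y\}$ d-separates the pair. Since \Cref{alg:mb-by-mb} builds $L_X$ by running the PC skeleton phase over $MB^+(X)$, which searches precisely these conditioning sets, every edge incident to $X$ in $L_X$ is correct. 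The subtlety to flag is that the induced structure over $MB^+(X)$ need not be faithfully representable as a sub-DAG, because two children of $X$ may share a confounder lying outside $MB^+(X)$; however, by \Cref{lem:pc_skel} a separating set for $X$ and any non-neighbor can always be taken within $MB(X)$, so the \emph{edges touching $X$} are immune to such outside confounding. This is exactly why the algorithm commits only the edges connected to $X$ (and the v-structures through $X$) from each $L_X$.

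For the orientation claim I would argue in two stages. Locally, a v-structure $A \to X \gets B$ lies in the \ac{CPDAG} iff $A$ and $B$ are non-adjacent and $X$ is in no separating set of the pair; this is decided entirely within $MB^+(X)$, so all v-structures through $X$ are oriented correctly and none are spurious. Globally, the remaining orientations are forced by Meek's rules, applied here by \textsc{Orient-Undirected-Edges}. The key observation is that each processed node adds its blanket to the WaitList, so every v-structure capable of triggering a rule-propagation that reaches an edge at $T$ is eventually discovered; soundness of Meek's rules then makes every committed orientation agree with the \ac{CPDAG}. For the converse I would show that when the WaitList empties, every orientable edge at $T$ has been oriented: the stopping criterion drops a node once all its paths to $T$ are blocked by directed edges, and Meek-rule propagation travels only along undirected segments, so a directed edge acts as a firewall past which no orientation information can reach $T$. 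Hence nothing orientable at $T$ is missed.

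The main obstacle is precisely this completeness half of the orientation argument: establishing that the outward, blanket-by-blanket search, together with the directed-edge stopping rule, recovers \emph{all} orientations at $T$ that a global PC run would produce. The delicate point is that Meek rules $3$ and $4$ depend on configurations several edges away from the edge they orient, so one must verify that the WaitList has grown to include every node participating in such a configuration before it empties, and simultaneously that the firewall argument remains valid for these longer-range rules — i.e., that a directed edge on every path to $T$ genuinely precludes any chain of Meek applications from altering an edge at $T$. Making this joint soundness-and-completeness bookkeeping rigorous, rather than any individual step, is where the real work lies.
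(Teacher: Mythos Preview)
The paper does not prove \Cref{thm:mb_by_mb}: it is quoted verbatim as Theorem~3 of \citet{wang2014discovering} in the ``Useful Results from Literature'' subsection of the appendix, with no accompanying argument, and is subsequently invoked as a black box in the proofs of \Cref{lem:relate}, \Cref{lem:load_identifiability}, and \Cref{thm:load_oset}. There is therefore nothing in this paper to compare your proposal against; the proof lives entirely in the cited reference.

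That said, your sketch is a reasonable reconstruction of the kind of argument one would expect in \citet{wang2014discovering}, and you correctly identify the genuine difficulty: the completeness of the orientation step under the WaitList pruning rule. Your ``firewall'' intuition---that Meek-rule propagation cannot cross a directed edge---is the right heuristic, but as you yourself note, making it rigorous for Meek rules 3 and 4 requires showing that every configuration capable of forcing an orientation at $T$ is reachable before the WaitList empties. That is exactly the step your proposal leaves as an obstacle rather than resolves, so the sketch is a plan rather than a proof. If you want to complete it, you would need to consult the original \citet{wang2014discovering} argument, since the present paper offers no guidance beyond the theorem statement.
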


This result implies that after MB-by-MB, all the nodes adjacent to a target will have their edges oriented as they would be in the underlying CPDAG. In other words, this result correctly identifies the parents, children and siblings (i.e., nodes connected by undirected edges) of each target.

Following \citet{perkovic2018complete} we also define an unshielded path as a path, where each consecutive triple $X,Y,Z$ is unshielded, i.e., $X$ is not adjacent to $Z$, and we use the following result from  \citep{ZHANG20081873} in our proofs for \Cref{lem:identifiability_implies_expl_an} in \Cref{proof:identifiability_implies_expl_an} and \Cref{lem:local_amenability} in \Cref{proof:local_amenability}.

\begin{lemma}[Lemma B.1 in \citep{ZHANG20081873}]\label{lem:b1zhang}
    Let $X$ and $Y$ be distinct nodes in a CPDAG $G$. If $p$ is a possibly directed path from $X$ to $Y$ in G, then some subsequence of $p$ forms an unshielded possibly directed path from $X$ to $Y$ in $G$.
\end{lemma}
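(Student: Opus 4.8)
The plan is to argue by a minimal‑length shortcut. Among all subsequences of $p$ that still form a possibly directed path from $X$ to $Y$ in $G$ --- where a subsequence keeps the nodes of $p$ in their original order, requires consecutive kept nodes to be adjacent in $G$, and requires every traversed edge to be $\to$ or $-$ --- I would pick one, call it $q$, with the fewest nodes. Such a $q$ exists because $p$ itself is a candidate, and by construction $q$ starts at $X$, ends at $Y$, and is a subsequence of $p$. I claim this minimal $q$ is automatically unshielded, which immediately gives the lemma. (Paths of length at most one are vacuously unshielded, so the content is in the case that $q$ has an interior node.)

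First I would reduce the whole statement to a claim about a single shielded triple. Suppose $q$ were not unshielded; then it contains a consecutive triple $A, B, C$ in which $A$ and $C$ are adjacent in $G$. Because $q$ is possibly directed, the incident edges satisfy $A \to B$ or $A - B$, and $B \to C$ or $B - C$. The key sub‑claim is that the chord between $A$ and $C$ cannot be oriented $C \to A$. Granting this, the chord is $A \to C$ or $A - C$, so deleting $B$ from $q$ yields a strictly shorter subsequence of $p$ that is still a path (since $A$ and $C$ are adjacent) and still possibly directed (the replacement edge is $\to$ or $-$, and every other edge of $q$ is untouched). This contradicts the minimality of $q$, so $q$ contains no shielded triple and is therefore unshielded.

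The crux, and the step I expect to be the main obstacle, is the chord sub‑claim. I would derive it from the standard structural fact that a CPDAG is a chain graph and hence contains no partially directed cycle, i.e.\ no cycle on which every edge is traversable in one cyclic direction and at least one edge carries an arrowhead. Concretely, if the chord were $C \to A$, then the triangle traversed as $A \to B$ (or $A - B$), then $B \to C$ (or $B - C$), then the directed chord $C \to A$, is a cycle whose every edge is traversable in the cyclic direction $A,B,C,A$ and which contains the genuine arrowhead $C \to A$; that is precisely a partially directed cycle, a contradiction. The elegance here is that this single observation disposes of all orientations of the two incident edges simultaneously, so no case split on whether $A\text{--}B$ and $B\text{--}C$ are directed or undirected is needed. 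The only care required is to confirm that the convention for ``possibly directed'' used in the excerpt --- no edge points back toward $X$ --- is exactly what makes both incident edges traversable as $A \to B$ and $B \to C$, which it is; and, if one wishes to avoid citing the chain‑graph property as a black box, to justify it via acyclicity together with the observation that an undirected edge of a CPDAG can be oriented either way within the Markov equivalence class, which would otherwise force a directed cycle in some member DAG.
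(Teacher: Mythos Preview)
The paper does not prove this lemma; it is quoted verbatim as a known result from \citet{ZHANG20081873} and used as a black box. Your argument is correct and is essentially the standard one: among subsequences of $p$ that are possibly directed paths, take one of minimal length, and use the chain-graph property of CPDAGs (no partially directed cycles, \Cref{thm:4.1andersson} item~1) to exclude a backward-oriented chord on any shielded triple, so that shortcutting would contradict minimality.
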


We use the following results from \citep{andersson1997characterization} in our proofs for \Cref{lem:alg_is_expl_an}, \Cref{lem:sib_notadj_outcome} and \Cref{lem:local_amenability} in \Cref{proof:local_amenability}.

\begin{theorem}[Theorem 4.1 in \citep{andersson1997characterization}]\label{thm:4.1andersson}
    A graph $G=(\mathbf{V}, \mathbf{E})$ is equal to the CPDAG for some DAG $D$ if and only if $G$ satisfies the following four conditions.
    \begin{compactenum}
        \item $G$ is a chain graph.
        \item For every chain component $\tau$ of $G$, $G_\tau$ is chordal.
        \item The structure $X \to Y - Z$ does not occur as an induced subgraph of $G$.
        \item Every directed edge $X \to Y \in G$ is strongly protected in $G$.
    \end{compactenum}
\end{theorem}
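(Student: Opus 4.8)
The plan is to prove the two directions of the biconditional separately, relying on the standard description of a Markov equivalence class: by the Verma--Pearl theorem two DAGs are Markov equivalent iff they share the same skeleton and the same v-structures (unshielded colliders), and an edge of a member DAG is \emph{compelled} (directed identically in every equivalent DAG) iff it is forced by Meek's orientation rules. With this in hand, the CPDAG $G$ of $D$ carries $X \to Y$ exactly on the compelled edges and $X - Y$ on the \emph{reversible} edges, and the chain components of $G$ are precisely the connected components of its undirected part. I would fix this dictionary first, since both directions are really statements about compelled versus reversible edges.

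\textbf{Necessity} ($\Rightarrow$). Assume $G$ is the CPDAG of a DAG $D$ and verify each of the four conditions in turn. For condition 1, a semi-directed cycle in $G$ would combine compelled directed edges with reversible undirected ones; reorienting the reversible edges along the cycle would yield a directed cycle in some equivalent DAG, contradicting acyclicity, so no such cycle exists and $G$ is a chain graph. For condition 2, within a chain component every edge is reversible, and an undirected graph Markov equivalent to a DAG must be decomposable; concretely, a chordless induced cycle of length $\ge 4$ would force an unshielded collider, contradicting reversibility, so each $G_\tau$ is chordal. For condition 3, if $X \to Y - Z$ occurred with $X,Z$ nonadjacent, then orienting $Z \to Y$ (permissible since $Y-Z$ is reversible) would create the new unshielded collider $X \to Y \gets Z$, changing the equivalence class; hence $Y-Z$ would in fact be compelled, a contradiction. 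Condition 4 is exactly Meek's characterization of compelled edges: every directed edge of a CPDAG lies in one of the four protective local configurations, which is how the completion rules produced it.

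\textbf{Sufficiency} ($\Leftarrow$). This is the substantive direction. Given $G$ satisfying the four conditions, I would construct a DAG $D$ and show $G$ is its CPDAG. First orient the undirected edges: by condition 2 each chain component $\tau$ is chordal, hence admits a perfect elimination ordering, and orienting every undirected edge from earlier to later in this ordering gives an acyclic orientation of $G_\tau$ that introduces no new unshielded collider inside $\tau$. Condition 3 guarantees that combining these orientations with the pre-existing directed edges creates no mixed unshielded collider (one directed, one undirected edge), and condition 1 together with the acyclic intra-component orientations guarantees the combined graph $D$ is acyclic. It then remains to verify that the CPDAG of $D$ equals $G$. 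Since no new v-structure was introduced, $D$ has the intended skeleton and v-structures, so $G$ and $D$ determine the same equivalence class; condition 4 lets Meek's rules certify that every directed edge of $G$ is compelled in $D$; and chordality lets me reorient within each component by choosing an alternative elimination ordering, exhibiting for each undirected edge of $G$ an equivalent DAG that flips it, thereby proving it reversible. Hence the directed part of $G$ is exactly the compelled set and the undirected part exactly the reversible set, i.e.\ $G$ is the CPDAG of $D$.

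\textbf{Main obstacle.} The hard part is sufficiency, and within it the simultaneous bookkeeping that in the single constructed DAG every directed edge stays compelled while every undirected edge stays reversible. Reversibility requires producing, component by component, alternative acyclic orientations that flip a chosen edge without creating a v-structure or a directed cycle (leaning on chordality and a careful choice of elimination ordering), whereas compelledness requires propagating the four strong-protection configurations through Meek's rules; reconciling these globally is naturally handled by induction on $|\mathbf{V}|$, peeling a simplicial vertex of a chain component, together with \Cref{lem:b1zhang} to reduce arguments about possibly directed paths to unshielded ones.
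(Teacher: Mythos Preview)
The paper does not prove this statement at all: \Cref{thm:4.1andersson} is quoted in the appendix under ``Useful Results from Literature'' as Theorem~4.1 of \citet{andersson1997characterization}, and the paper explicitly says it only \emph{uses} items~1 and~3 of the characterization in its own proofs, referring the reader to the original paper for the full result. So there is no ``paper's own proof'' to compare your proposal against.

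That said, your sketch is a reasonable outline of the original Andersson--Madigan--Perlman argument (necessity via the Verma--Pearl v-structure characterization plus Meek's rules for compelled edges; sufficiency via a perfect elimination ordering on each chordal chain component to build a consistent DAG extension). Two small remarks: first, your appeal to \Cref{lem:b1zhang} at the end is misplaced---that lemma is about extracting unshielded possibly directed subpaths in a CPDAG and plays no role in the inductive construction you describe; the induction on $|\mathbf{V}|$ by peeling a simplicial vertex already handles the bookkeeping without it. Second, in condition~3 the induced-subgraph requirement does \emph{not} force $X$ and $Z$ to be nonadjacent in the ambient graph; the condition is that the subgraph induced on $\{X,Y,Z\}$ is exactly $X\to Y - Z$, which does imply $X$ and $Z$ are nonadjacent, but your parenthetical ``if $X,Z$ nonadjacent'' reads as if you are adding a hypothesis rather than unpacking the definition. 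Neither of these is a genuine gap in the argument, but both would need tightening if you were actually writing out the proof.
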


In our proofs, we only use items 1 and 3 in \Cref{thm:4.1andersson}, while for the terms used in the other items, we refer to the original paper \citep{andersson1997characterization}.
As discussed by \citet{andersson1997characterization}, item 1 implies that any CPDAG is a chain graph, i.e. it may have both directed and undirected edges but it may contain \emph{no partially directed cycles}, which are formed by a possibly causal path from $X$ to $Y$ with a directed path from $Y$ to $X$. Item 3 implies that no CPDAGs can contain the the structure $X \to Y - Z$.

We use the following results by \citet{maathuis2015generalized} in our proof for \Cref{lem:local_amenability}.

\begin{corollary}[Corollary 4.2 in \citep{maathuis2015generalized}]\label{cor:4.2maathuis}
    Let $X$ and $Y$ be two distinct vertices in a CPDAG $C$. Let $C_{\underline{X}}$ be the graph obtained from $C$ by removing all directed edges out of $X$. Then there exists a generalized back-door set relative to $(X, Y)$ and $C$ if and only if $Y \in Pa_C(X)$ and $Y \notin PossDe_{C_{\underline{X}}}(X)$. Moreover, if such a generalized back-door set exists, then $Pa_C(X)$ is such a set.
\end{corollary}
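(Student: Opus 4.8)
The plan is to prove both directions by exhibiting $Pa_C(X)$ as the canonical witness and reducing everything to the structure of the back-door graph $C_{\underline{X}}$. Recall that a set $\mathbf{Z}$ disjoint from $\{X,Y\}$ is a generalized back-door set relative to $(X,Y)$ exactly when (i) $\mathbf{Z}$ contains no possible descendant of $X$ in $C$, and (ii) $\mathbf{Z}$ blocks every proper definite-status back-door path from $X$ to $Y$, i.e., every definite-status path whose first edge is not directed out of $X$. Since the ``moreover'' clause already nominates $Pa_C(X)$, I would structure the argument so that the sufficiency direction verifies (i) and (ii) for $\mathbf{Z} = Pa_C(X)$, while the necessity direction shows that failing either condition produces a back-door path that no admissible set can block. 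In particular, the parent condition rules out a trivial direct back-door edge, and the $PossDe_{C_{\underline{X}}}$ condition rules out a ``causal-looking'' unblockable route.

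For sufficiency, condition (i) is immediate from the chain-graph property of CPDAGs (item 1 of \Cref{thm:4.1andersson}): a definite parent $Z \to X$ cannot also be a possible descendant of $X$, since a possibly directed path $X \rightsquigarrow Z$ together with $Z \to X$ would form a partially directed cycle, which a chain graph forbids. The work is in condition (ii). I would argue by contradiction: suppose some proper definite-status back-door path $p$ from $X$ to $Y$ is left open by $Pa_C(X)$. If $p$ began with an edge $Z \to X$ into $X$, then $Z$ would be a definite non-collider on $p$ lying in $Pa_C(X)$, so conditioning on it blocks $p$; hence $p$ must begin with an undirected edge $X - W$. Tracing $p$, using that it is open and of definite status, and invoking the absence of the induced structure $X \to Y - Z$ (item 3 of \Cref{thm:4.1andersson}) to control the orientations, I would propagate the edge directions into a possibly directed path from $X$ to $Y$ that survives the deletion of all edges out of $X$; extracting an unshielded such path via \Cref{lem:b1zhang} then certifies $Y \in PossDe_{C_{\underline{X}}}(X)$, contradicting the hypothesis. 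Thus $Pa_C(X)$ blocks all back-door paths and is a generalized back-door set.

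For necessity I would contrapose, assuming a condition fails and constructing an unblockable back-door path. If $Y$ is a definite parent of $X$, then $X \leftarrow Y$ is itself a back-door path with no intermediate node, so it cannot be blocked and no admissible $\mathbf{Z}$ works. If instead $Y \in PossDe_{C_{\underline{X}}}(X)$, then $C_{\underline{X}}$ contains a possibly directed path from $X$ to $Y$; since every edge out of $X$ was deleted, this path leaves $X$ along an undirected edge and, read back in $C$, is a definite-status back-door path on which every node is a possible descendant of $X$. Blocking it would require conditioning on one of these nodes, violating (i), so once more no generalized back-door set exists. The main obstacle I anticipate is the orientation bookkeeping in condition (ii) of the sufficiency direction: turning ``open definite-status back-door path not blocked by parents'' into ``possibly directed path in $C_{\underline{X}}$'' requires carefully excluding definite colliders along $p$ and combining the CPDAG structural constraints (\Cref{thm:4.1andersson}) with \Cref{lem:b1zhang} so that the extracted path stays unshielded and consistently oriented.
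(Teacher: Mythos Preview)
The paper does not prove \Cref{cor:4.2maathuis}. It appears in Appendix~D.1 under ``Useful Results from Literature'' as a verbatim restatement of Corollary~4.2 in \citet{maathuis2015generalized}, followed only by a short paragraph explaining what the statement says and that it is invoked in the proof of \Cref{lem:local_amenability}. There is thus no in-paper argument to compare your proposal against; the paper's ``proof'' is a citation.

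For what it is worth, your plan is a sensible reconstruction, and you have silently corrected the evident typo in the paper's restatement (the first condition should read $Y \notin Pa_C(X)$, not $Y \in Pa_C(X)$; your necessity argument treats it that way). The place where your sketch is genuinely thin is the sufficiency direction, condition~(ii): converting ``a definite-status back-door path starting $X - W$ that $Pa_C(X)$ fails to block'' into ``$Y \in PossDe_{C_{\underline{X}}}(X)$'' needs a careful walk along the path establishing that every intermediate node is a definite non-collider with orientations consistently away from $X$. Item~3 of \Cref{thm:4.1andersson} and \Cref{lem:b1zhang} are the right tools, but the propagation step (what happens at the first potential collider, why no conditioned parent opens it, why the resulting subpath remains in $C_{\underline{X}}$) is where the real content lies, and \citet{maathuis2015generalized} handle it via their general possibly-directed-path machinery rather than the ad-hoc trace you describe.
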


A generalized back-door set relative to $(X, Y)$ is a valid adjustment set that allows for the estimation of the causal effect of $X$ on $Y$.
The formal definition of a generalized back-door set is given by Definition 3.7 in \citep{maathuis2015generalized}.
\Cref{cor:4.2maathuis} states that if a generalized back-door set exists relative to $(X, Y)$ in a CPDAG $C$, (which is the case when the causal effect of $X$ on $Y$ is identifiable in $C$), then the definite parents $Pa_C(X)$ are such a valid adjustment set.

\subsection{Proof of Lem.~\ref{lem:relate}}
\label{proof:relate}
We introduce the following Lemmas to show the soundness and completeness of \Cref{alg:is_explicit_ancestor} and \Cref{alg:is_possible_ancestor}, which we use in the proofs of \Cref{lem:relate}.

\begin{lemma}\label{lem:alg_is_expl_an}
    \Cref{alg:is_explicit_ancestor} is sound and complete in finding explicit ancestral relations between a pair of targets.
\end{lemma}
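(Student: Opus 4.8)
The plan is to show that Algorithm \ref{alg:is_explicit_ancestor} returns \texttt{True} if and only if $X \in ExplAn_G(Y)$, by verifying that each of its early-exit branches is correct and that the final CI test faithfully implements Theorem \ref{thm:explicit_ancestor}. The algorithm takes as input the local graph $G$ returned by MB-by-MB, so by Theorem \ref{thm:mb_by_mb} the sets $Ch_G(X)$, $Pa_G(X)$, and $Sib_G(X)$ are exactly the true children, parents, and siblings of $X$ as they appear in the underlying CPDAG. This is the crucial fact I would establish first, since the soundness of the shortcut branches depends entirely on these local neighborhood sets being correct.

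First I would handle the three branches in order. For line 1, if $Y \in Ch_G(X)$ then there is a directed edge $X \to Y$ in the CPDAG, which is itself a directed path, so $X \in ExplAn_G(Y)$ and returning \texttt{True} is correct. For line 3, if $Y \in Pa_G(X) \cup Sib_G(X)$, then either $Y \to X$ or $Y - X$ in the CPDAG; I would argue that in neither case can $X$ be an explicit ancestor of $Y$. If $Y \to X$, then a directed path from $X$ to $Y$ would close a directed cycle, contradicting acyclicity of every DAG in the MEC. If $Y - X$, then a directed path from $X$ to $Y$ together with the edge $Y - X$ would create a partially directed cycle, which is forbidden in a CPDAG by item 1 of Theorem \ref{thm:4.1andersson}; hence returning \texttt{False} is correct. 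The remaining case (line 5) is exactly the CI test of Theorem \ref{thm:explicit_ancestor}: since we have already excluded the possibility that $Y$ is adjacent to $X$ as a parent or sibling or child, the test $X \dep Y \mid Pa_G(X) \cup Sib_G(X)$ correctly decides explicit ancestry, and the final \texttt{False} branch covers its negation.

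For completeness, I would argue the converse direction: whenever $X \in ExplAn_G(Y)$ the algorithm reaches a \texttt{True} branch, and whenever $X \notin ExplAn_G(Y)$ it reaches a \texttt{False} branch. The shortcut branches are mutually exclusive on the adjacency structure, and when none of them fire we fall through to the CI test, which by Theorem \ref{thm:explicit_ancestor} is both necessary and sufficient. The one subtlety worth checking is that the shortcut branches do not preempt a correct answer: that is, when $Y$ is adjacent to $X$, the adjacency type alone determines explicit ancestry, so the CI test is never needed in those cases and the shortcuts agree with what the test would return.

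The main obstacle I anticipate is the line 3 case with $Y - X$: ruling out that $X$ could still be an explicit ancestor of $Y$ via some \emph{other} directed path not using the edge $Y - X$. The cleanest way to close this is the partially-directed-cycle argument via the chain graph property (item 1 of Theorem \ref{thm:4.1andersson}), since any directed path $X \rightsquigarrow Y$ combined with the undirected edge $Y - X$ yields exactly such a forbidden partially directed cycle. Everything else reduces to a careful case analysis that the shortcut branches and the CI test together cover all adjacency configurations of $X$ and $Y$ exhaustively and without overlap.
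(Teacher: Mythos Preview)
Your proposal is correct and follows essentially the same approach as the paper: a case split on whether $Y$ is adjacent to $X$ (child, parent, sibling) handled by the chain-graph/no-partially-directed-cycle property of CPDAGs (Theorem~\ref{thm:4.1andersson}), with the non-adjacent case deferred to Theorem~\ref{thm:explicit_ancestor}. The only cosmetic differences are that the paper treats the parent and sibling subcases together via a single partially-directed-cycle argument (whereas you split them into a directed-cycle argument and a partially-directed-cycle argument), and the paper postpones the appeal to MB-by-MB correctness (Theorem~\ref{thm:mb_by_mb}) to the proof of Lemma~\ref{lem:relate} rather than invoking it inside this lemma.
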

\begin{proof}
    Consider target pair $X$ and $Y$ in a CPDAG $G$.
    We need to prove that IsExplAn($X,Y$) described in \Cref{alg:is_explicit_ancestor} returns True iff $X$ is an explicit ancestor of $Y$ in $G$, i.e., $X$ has a directed path to $Y$ in the CPDAG.

    We consider first the case in which $X$ and $Y$ are adjacent. In this case, $X$ is an explicit ancestor of $Y$ iff $Y$ is a definite child of $X$. The ``if'' direction is trivial: if $Y$ is a child of $X$, then there exists a directed path from $X$ to $Y$, thus $X$ is an explicit ancestor of $Y$ (lines \ref{line:isexplan:test_ch} and \ref{line:isexplan:is_ch}).

    For the ``only if'' direction, if $X$ is an explicit ancestor of $Y$, then by definition there exists a directed path from $X$ to $Y$.
    Then, an edge between $X$ and $Y$ has to be oriented as $X \to Y$, otherwise there would be a partially directed cycle in $G$, which is forbidden in CPDAGs as shown by \Cref{thm:4.1andersson}.
    This means that if $X$ and $Y$ is adjacent, but $Y$ is not a child of $X$, i.e., $Y$ is a parent or a sibling of $X$, then $X$ cannot be an explicit ancestor of $Y$ (lines \ref{line:isexplan:test_pa_sib} and \ref{line:isexplan:is_pa_sib}).

    In case $X$ and $Y$ are not adjacent, then \Cref{alg:is_explicit_ancestor} implements \Cref{thm:explicit_ancestor} by \citet{fang2022local} at lines \ref{line:isexplan:test_explanc} to \ref{line:isexplan:end}, which is a sound and complete criterion for determining explicit ancestry.
\end{proof}

\begin{lemma}\label{lem:alg_is_poss_an}
    \Cref{alg:is_possible_ancestor} is sound and complete in finding possible ancestral relations between a pair of targets.
\end{lemma}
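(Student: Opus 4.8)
The plan is to show that \Cref{alg:is_possible_ancestor} returns \texttt{True} exactly when $X \in PossAn_G(Y)$, equivalently that it returns \texttt{False} exactly when $X$ is a definite non-ancestor of $Y$. I would structure the argument as a case analysis on the output branches, closely mirroring the proof of \Cref{lem:alg_is_expl_an}, splitting first on whether $Y$ is adjacent to $X$ in the local structure $G$, which by \Cref{thm:mb_by_mb} correctly records the parents, children and siblings of $X$.

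For the adjacency branches (lines 1--4) I would argue directly from the definition of a possibly directed path. If $Y \in Ch_G(X)$ then $X \to Y$ is a directed, hence possibly directed, path; if $Y \in Sib_G(X)$ then the undirected edge $X - Y$ is itself a path containing no directed edge from $Y$ to $X$, so it is possibly directed. In both cases $X \in PossAn_G(Y)$, so returning \texttt{True} is correct. Conversely, if $Y \in Pa_G(X)$ while $Y$ is neither a child nor a sibling, the edge is oriented $Y \to X$, and I would show $X$ must then be a definite non-ancestor: a possibly directed path from $X$ to $Y$ together with the directed edge $Y \to X$ would close a partially directed cycle, which is ruled out by the chain-graph property of CPDAGs (item 1 of \Cref{thm:4.1andersson}). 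Hence returning \texttt{False} is correct.

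The remaining branches (lines 5--8) handle the case where $Y$ is not adjacent to $X$, and here I would invoke \Cref{thm:possible_ancestor} directly: $X$ is a definite non-ancestor of $Y$ iff $X \indep Y \mid Pa_G(X)$, so $X \dep Y \mid Pa_G(X)$ is equivalent to $X \in PossAn_G(Y)$, which matches the \texttt{True} and \texttt{False} outputs on lines 6 and 8 respectively. Since $Y$ is non-adjacent, $Y \notin Pa_G(X)$, so the conditioning set does not contain $Y$ and the test is well defined.

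The main obstacle is precisely the case $Y \in Pa_G(X)$: there the conditioning set $Pa_G(X)$ of \Cref{thm:possible_ancestor} contains $Y$ itself, so the underlying d-separation statement is degenerate and the theorem cannot be applied as a black box. This is exactly why line 3 must special-case $Y \in Pa_G(X)$, and the correctness of returning \texttt{False} there rests on the no-partially-directed-cycle argument rather than on a CI test. I would therefore take care to state the definition of a partially directed cycle and cite the chain-graph condition so that this step is airtight; once that branch is settled, every other branch follows routinely from \Cref{thm:possible_ancestor} and the definition of possible ancestry, establishing both soundness and completeness.
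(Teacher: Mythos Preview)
Your proposal is correct and follows essentially the same structure as the paper's proof: case-split on whether $Y$ is adjacent to $X$, handle the child/sibling and parent branches directly, and defer the non-adjacent branch to \Cref{thm:possible_ancestor}. The only minor variation is in the $Y\in Pa_G(X)$ sub-case, where you invoke the no-partially-directed-cycle (chain-graph) property from \Cref{thm:4.1andersson}, whereas the paper argues that a definite parent is a definite ancestor and then appeals to acyclicity of each DAG in the MEC; both are short, standard arguments leading to the same conclusion.
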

\begin{proof}
    Consider target pair $X$ and $Y$ in a CPDAG $G$.
    We need to prove that IsPossAn($X,Y$) described in  \Cref{alg:is_possible_ancestor} returns True iff $X$ is a possible ancestor of $Y$ in $G$, i.e., there is a possibly directed path from $X$ to $Y$.

    In case $X$ and $Y$ are adjacent, then $X$ is a possible ancestor of $Y$ iff $Y$ is a child or sibling of $X$. The ``if'' direction is trivial: if $Y$ is a child or sibling of $X$, then there is a possibly directed path from $X$ to $Y$, thus by definition $X$ is a possible ancestor of $Y$ (lines \ref{line:ispossan:test_ch_sib} and \ref{line:ispossan:is_ch_sib}).

    For the ``only if'' direction, if $X$ is a possible ancestor of $Y$, then $Y$ cannot be a definite parent of $X$, since otherwise $Y$ would also be a definite ancestor, i.e., an ancestor of $X$ in every DAG represented by $G$. By acyclicity this means that in none of these DAGs $X$ can be an ancestor of $Y$, thus $X$ would not be a possible ancestor of $Y$ by definition (lines \ref{line:ispossan:test_pa} and \ref{line:ispossan:is_pa}).

    In case $X$ and $Y$ are not adjacent, then \Cref{alg:is_possible_ancestor} implements \Cref{thm:possible_ancestor} by \citet{fang2022local} at lines \ref{line:ispossan:test_possan} to \ref{line:ispossan:end}, which is a sound and complete criterion for determining explicit ancestry.
\end{proof}

We can now use these results to prove that the algorithm for identifying the relations between targets (\Cref{alg:relate}) is sound and complete in its outputs.

\relate*
\begin{proof}
    At the start of the algorithm, the estimated causal relations for the target pair $X$ and $Y$ are set as definite non-ancestor as default in both directions, denoted as $DefNonAn$ (line \ref{line:localrelate:initialize}).
    We then use MB-by-MB to discover the local information around both $X$ and $Y$ (lines 2-3).
    
    By \Cref{thm:mb_by_mb}, this local information correctly identifies the parents, children and siblings of each of the targets in the CPDAG. This information is enough to use the previous lemmas and algorithms to correctly identify the explicit ancestry in each direction (lines \ref{line:localrelate:explan_start} to \ref{line:localrelate:explan_end}) with \Cref{alg:is_explicit_ancestor}, which we show is sound and complete in \Cref{lem:alg_is_expl_an}, and possible ancestry in each direction (lines \ref{line:localrelate:possan_start} to \ref{line:localrelate:possan_end}) with \Cref{alg:is_possible_ancestor}, which we show is sound and complete in \Cref{lem:alg_is_poss_an}. 
    While for explicit ancestry between $X$ and $Y$ only one direction can hold at the same time, and the other direction defaults to definite non-ancestor, for possible ancestry both directions are possible at the same time (e.g., when $X$ and $Y$ are connected by an undirected path), so we test them both in any case (lines \ref{line:localrelate:possan_start} to \ref{line:localrelate:possan_end}).

    Finally, assume that a target, say $X$, is a definite non-ancestor of the other, say $Y$.
    By definition, $X$ is a definite non-ancestor of $Y$ when it is not a possible ancestor of $Y$.
    By definition, if $X$ is not a possible ancestor of $Y$, then it is also not an explicit ancestor of $Y$.
    Furthermore, if $Y$ is an explicit ancestor of $X$, then  $X$ is not a possible ancestor of $Y$.
    Since \Cref{alg:is_explicit_ancestor} and \Cref{alg:is_possible_ancestor} are sound and complete for testing explicit and possible ancestry, $X$ is a definite non-ancestor of $Y$ iff \Cref{alg:relate}: (i) finds that $Y$ is an explicit ancestor of $X$, or (ii) if it finds that $X$ is neither an explicit nor a possible ancestor of $Y$, in which case it never updates the ancestral relation of $X$ on $Y$, and correctly returns definite non-ancestry.
\end{proof}

\subsection{Proof of Lem.~\ref{lem:identifiability_implies_expl_an}}
\label{proof:identifiability_implies_expl_an}

\identifiabilityimpliesexplan*
\begin{proof}

    We prove this by contradiction.
    Assume that $X$ is a possible but not an explicit ancestor of $Y$, so there are only possibly directed, but no directed paths from $X$ to $Y$.
    Let $p$ be the shortest possibly directed path from $X$ to $Y$ in $G$.
    According to \Cref{lem:b1zhang}, there is a subsequence of $p$ that is an unshielded possibly directed path from $X$ to $Y$.
    However, since $p$ is the shortest possibly directed path from $X$ to $Y$, the only subsequence of $p$ that could be a possibly directed path from $X$ to $Y$ is $p$ itself.
    Hence, by \Cref{lem:b1zhang}, $p$ is an unshielded possibly directed path from $X$ to $Y$.
    If $p$ is unshielded, then there cannot be any colliders in $p$, otherwise there would be an unshielded collider $V \to V' \gets V''$ in $p$, which would also be directed as such in the CPDAG $G$, and thus $p$ would not be a possibly directed path in $G$ since the edge $V' \gets V''$ is directed backwards.
    Thus, every node on $p$ has to be a non-collider in every DAG in the MEC represented by $G$.
    
    However, since $p$ is not a ``fully'' directed path in $G$, there is both a DAG $D_1$ in the MEC in which it is directed from $X$ to $Y$, but also another DAG $D_2$ in the MEC in which it is not directed $X$ to $Y$, but still open when the conditioning set is empty.
    Consequently, in $D_1$ no valid adjustment sets for $X$ and $Y$ can contain a node on $p$ as it is a causal path from $X$ to $Y$.
    However, in $D_2$ all valid adjustments set have to contain at least one node on $p$, as it is an otherwise open, but not causal path from $X$ to $Y$.
    Hence, there is no shared set of nodes that is a valid adjustment set in all DAGs in the \ac{MEC} of $G$ and therefore the causal effect of $X$ on $Y$ is not identifiable in $G$.
\end{proof}

\subsection{Proof of Lem.~\ref{lem:local_amenability}}

\label{proof:local_amenability}

We introduce the following lemma to aid us in the proof of \Cref{lem:local_amenability}. 

\begin{lemma}
\label{lem:sib_notadj_outcome}
    For any two distinct nodes $X$ and $Y$ such that $X \in PossAn_G(Y)$
    in a CPDAG $G$, if $G$ is adjustment amenable relative to $X$ and $Y$, i.e., every possibly directed path from $X$ to $Y$ starts with a directed edge out of $X$, then for all $V \in Sib_G(X)$, $Y \notin Adj_G(V)$. 
\end{lemma}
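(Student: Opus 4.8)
The plan is to argue by contradiction: suppose some $V \in Sib_G(X)$ is also adjacent to $Y$, so that $X - V$ holds in $G$ and there is an edge between $V$ and $Y$. Since $G$ is a CPDAG, that edge is one of $V \to Y$, $V - Y$, or $Y \to V$, and I would rule out each of the three orientations in turn, each time deriving a contradiction either with adjustment amenability (\Cref{def:amenability}) or with the characterization of CPDAGs in \Cref{thm:4.1andersson}.

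The two orientations $V \to Y$ and $V - Y$ are immediate. In the first, $X - V \to Y$ is a path from $X$ to $Y$ whose edges are undirected and then forward-directed, hence a possibly directed path; in the second, $X - V - Y$ is a possibly directed path as well. Both of these paths begin with the undirected edge $X - V$ rather than with a directed edge out of $X$, which directly contradicts the assumption that $G$ is amenable relative to $(X,Y)$.

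The remaining orientation $Y \to V$ is the main obstacle, and here I would split on whether $X$ and $Y$ are adjacent. If they are non-adjacent, then $Y \to V - X$ is exactly the induced subgraph of the form $A \to B - C$ that \Cref{thm:4.1andersson} (item 3) forbids in any CPDAG, a contradiction. If $X$ and $Y$ are adjacent, I would first pin down the orientation of the $X$--$Y$ edge: it cannot be $X - Y$ (the single-edge path $X - Y$ would itself violate amenability) nor $Y \to X$ (this would make $Y$ a definite ancestor of $X$ and hence, by acyclicity, contradict the hypothesis $X \in PossAn_G(Y)$), so it must be $X \to Y$. Then $X \to Y \to V - X$ is a partially directed cycle, which is impossible since every CPDAG is a chain graph (\Cref{thm:4.1andersson}, item 1).

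Having reached a contradiction under every possible edge between $V$ and $Y$, I would conclude that no sibling of $X$ can be adjacent to $Y$. The only delicate point is the case $Y \to V$ with $X$ and $Y$ adjacent, where the forbidden-structure argument does not apply directly; there one must instead use the possible-ancestry hypothesis to fix the $X$--$Y$ orientation and then invoke the no-partially-directed-cycle property of chain graphs.
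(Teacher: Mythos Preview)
Your proof is correct and follows the same overall contradiction strategy as the paper, with identical handling of the $V \to Y$ and $V - Y$ cases. The only difference is in the $Y \to V$ case: you split on whether $X$ and $Y$ are adjacent, invoking the forbidden induced substructure $Y \to V - X$ (item~3 of \Cref{thm:4.1andersson}) in the non-adjacent case and constructing an explicit three-cycle in the adjacent case. The paper instead handles this case in a single stroke: since $X \in PossAn_G(Y)$ there is some possibly directed path $X \cdots Y$, and appending $Y \to V - X$ to it already yields a partially directed cycle, contradicting item~1 of \Cref{thm:4.1andersson} without any case split on adjacency. Your argument is sound but slightly longer; the paper's is more uniform because it exploits the $PossAn$ hypothesis directly rather than re-deriving the $X$--$Y$ orientation.
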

\begin{proof}
    Consider any $V \in Sib_G(X)$, i.e., $X - V$.
    The edges $V - Y$ or $V \to Y$ cannot be in $G$, since then there would be a possibly directed path from $X$ to $Y$ starting with $V$ as $X - V - Y$ or $X - V \to Y$, which would make $G$ not adjustment amenable relative to $X$ and $Y$ by definition, since the first edge is not directed out of $X$.
    
    Since $X \in PossAn_G(Y)$, there is a a possibly directed path from $X$ to $Y$.
    Then, if $V \gets Y$ would be in $G$, there would be a partially directed cycle in $G$ as $X \to \cdots Y \to V - X$, which is forbidden in CPDAGs as shown by \Cref{thm:4.1andersson}.
    Thus, $V$ and $Y$ cannot be adjacent.
\end{proof}

\begin{figure}[ht]
\begin{subfigure}{.49\linewidth}
    \centering
    \includegraphics[width=.8\linewidth]{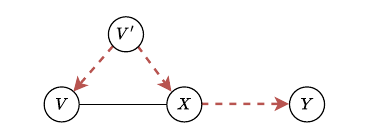}
    \caption{Example path for case 1a.}
    \label{subfig:1a}
\end{subfigure}
\begin{subfigure}{.49\linewidth}
    \centering
    \includegraphics[width=.8\linewidth]{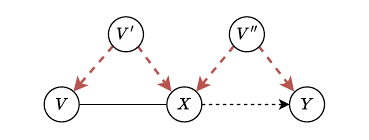}
    \caption{Example path for case 1b.}
    \label{subfig:1b}
\end{subfigure}
\begin{subfigure}{.49\linewidth}
    \centering
    \includegraphics[width=.8\linewidth]{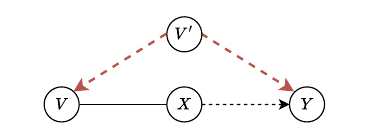}
    \caption{Example path for case 2a.}
    \label{subfig:2a}
\end{subfigure}
\begin{subfigure}{.49\linewidth}
    \centering
    \includegraphics[width=.8\linewidth]{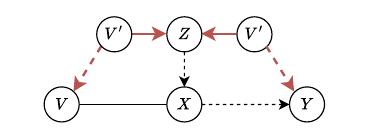}
    \caption{Example path for case 2b.}
    \label{subfig:2b}
\end{subfigure}
\caption{
Example paths, shown in red, for the different cases considered in the ``only if'' direction of \Cref{lem:local_amenability}.
The dashed arrows indicate that there can be a longer, possibly directed path between the two nodes instead of a direct edge.
(\subref{subfig:1a}): A path $p$ between $V$ and $Y$ that goes through $X$ such that $X$ is a non-collider on $p$. (\subref{subfig:1b}): A path $p$ between $V$ and $Y$ that goes through $X$ such that $X$ is a collider on $p$.
(\subref{subfig:2a}): A path $p$ between $V$ and $Y$ that does not go through $X$ and does not contain a node $Z$ that is both an ancestor of $X$ and a collider on $p$.
(\subref{subfig:2b}): A path $p$ between $V$ and $Y$ that does not go through $X$ but contains a node $Z$ that is both an ancestor of $X$ and a collider on $p$.}
\label{fig:localamenability_diagrams}
\end{figure}

\localamenability*
\begin{proof}
    We first consider the ``if'' direction and assume that $V \indep Y | Pa_G(V) \cup \{X\}, \forall V \in Sib_G(X)$. We will now show that all possibly directed paths from $X$ to $Y$ start with a definite child of $X$, which implies that they start with a directed edge out of $X$, i.e., they are amenable relative to $(X,Y)$ according to \citet{perkovic2015complete}. All nodes that are adjacent to $X$ are either siblings, parents or children of $X$. Since a possibly directed path from $X$ to $Y$ cannot start with an edge into $X$, which happens if the first node on the path is a parent of $X$, we just need to show that the first node on all of this path cannot be a sibling of $X$.
    
    Consider any $V \in Sib_G(X)$. Assuming the faithfulness condition, $V$ and $Y$ are not adjacent, since they can be d-separated. 
    If there are open paths between $V$ and $Y$, conditioning on the definite parents $Pa_G(V)$ blocks paths that start with an edge into $V$, since by definition the definite parents cannot be colliders on those paths. If any other open path exists, i.e., starting either with an undirected or a directed edge out of $V$, since we assume it has to be blocked to have d-separation, and it cannot be blocked by $Pa_G(V)$, then it has to be blocked by $X$. This means $X$ is a node on all of those paths. By definition, the possibly directed paths from $V$ to $Y$ are a subset of these paths, so they all contain $X$.
    This implies that no possibly directed path from $X$ to $Y$ goes through any sibling $V$ (since then $X$ would have to appear twice on the path). Since they also cannot go through a parent of $X$, this means that all possibly directed paths from $X$ to $Y$ start with a child of $X$.
    Thus, all possibly directed paths from $X$ to $Y$ start with an edge out $X$.
    By definition, this means that $G$ is amenable relative to $(X,Y)$.

    For the other ``only if'' direction, we now assume that $G$ is adjustment amenable relative to $(X,Y)$, i.e., 
    all possibly directed paths from $X$ to $Y$ start with a directed edge out of $X$. Consider any $V \in Sib_G(X)$.
    By \Cref{lem:sib_notadj_outcome}, $V$ and $Y$ are not adjacent and thus they can be d-separated, which implies a conditional independence given the appropriate separating set, since we assume the causal Markov assumption.
    
    We now show that $Pa_G(V) \cup \{X\}$ blocks all paths between $V$ and $Y$. To do this, we consider the following (exhaustive) cases of possible paths between $V$ and $Y$ and possible DAGs in the MEC represented by $G$ that they can appear in, and show that in all of these cases the paths are blocked by $Pa_G(V) \cup \{X\}$ in the DAGs:
    \begin{compactenum}
        \item Any path $p$ between $V$ and $Y$ that goes through $X$ and
        \begin{compactenum}
            \item any DAG $D$ in the MEC represented by $G$ in which $X$ is a non-collider on $p$ (as shown in \Cref{subfig:1a});
            \item any DAG $D$ in the MEC represented by $G$ in which $X$ is a collider on $p$ (as shown in \Cref{subfig:1b});
        \end{compactenum}
        \item Any path $p$ between $V$ and $Y$ that does not go through $X$ and
        \begin{compactenum}
            \item any DAG $D$ in the MEC represented by $G$ in which $p$ does not contain a node $Z$ that is both an ancestor of $X$ and a collider on $p$ (as shown in \Cref{subfig:2a});
            \item any DAG $D$ in the MEC represented by $G$ in which $p$ contains a node $Z$ that is both an ancestor of $X$ and a collider on $p$ (as shown in \Cref{subfig:2b});
        \end{compactenum}
    \end{compactenum}

    \paragraph{Case 1a.} Consider any path $p$ between $V$ and $Y$ that goes through $X$ in any DAG $D$ in the MEC represented by $G$ where $X$ is a non-collider on $p$, such as the one shown in \Cref{subfig:1a}.
    Then, $p$ is blocked by conditioning on $X$ regardless of whether we also condition on $Pa_G(V)$ or not.
    Thus, for any path $p$ between $V$ and $Y$ that goes through $X$ in any DAG $D$ in the MEC represented by $G$ in which $X$ is a non-collider on $p$, $p$ is blocked by $Pa_G(V) \cup \{X\}$.

    \paragraph{Case 1b.} Consider any path $p$ between $V$ and $Y$ that goes through $X$ in any DAG $D$ in the MEC represented by $G$ where $X$ is a collider on $p$, such as the one shown in \Cref{subfig:1b}.
    Since $X$ is a collider on $p$ in $D$ then conditioning on $X$ can potentially unblock $p$.
    In this case, we have to show that additionally conditioning on the definite parents $Pa_G(V)$ blocks $p$ in $D$.
    If $X$ is a collider on $p$ in $D$ and there are no other colliders on $p$, then the sub-path between $X$ and $Y$ on $p$ in $D$ must be a path between $X$ and $Y$ that starts with an edge into $X$.
    Since $G$ is adjustment amenable relative to $(X,Y)$ then, by Theorem 3.6 in \citep{perkovic2020identifying}, the causal effect of $X$ on $Y$ is identifiable in $G$.
    In this case, by \Cref{cor:4.2maathuis}, the set of definite parents $Pa_G(X)$ of $X$ blocks all paths between $X$ and $Y$ starting with an edge into $X$ in $D$, including $p$.
    
    We can show that this also implies that $Pa_G(V)$ also blocks all paths between $X$ and $Y$ starting with an edge into $X$ in $D$, including $p$, because $Pa_G(X) \subseteq Pa_G(V)$.
    To show this, consider any $P \in Pa_G(X)$.
    $P$ and $V$ must be adjacent in $G$, since the unshielded structure $P \to X - V$ cannot appear in a CPDAG, as shown by \Cref{thm:4.1andersson}.
    Furthermore, $P$ cannot be a child of $V$, because then $V \to P \to X - V$ would create a partially directed cycle which is forbidden in CPDAGs, as shown by \Cref{thm:4.1andersson}.
    For the same reason, $P$ also cannot be an sibling of $V$, as $P \to X - V - P$ also creates a partially directed cycle.
    This means that every $P \in Pa_G(X)$ has to also be a parent of $V$.
    Thus, for any path $p$ between $V$ and $Y$ that go through $X$ and any DAG $D$ in the MEC represented by $G$ in which $X$ is a collider on $p$, $p$ is blocked by $Pa_G(V) \cup \{X\}$.
    
    \paragraph{Case 2.} Before considering cases 2a and 2b, we first show that any path $p$ between $V$ and $Y$ that does not go through $X$ is blocked by $Pa_{G}(V)$ in $G$ (and thus in all DAGs $D$ in the MEC represented by $G$). We will then also have to show that this still holds even if we add $X$ to the conditioning set.
    
    The amenability of the causal effect of $X$ on $Y$ means that all possibly directed paths from $X$ to $Y$ start with an edge out of $X$. This implies that all possibly directed paths from $V$ to $Y$ have to go through $X$, otherwise there would exist a possibly directed path from $X$ to $Y$ through $V$ starting with an undirected edge as $X - V \cdots Y$ in $G$.
    Thus, all paths between $V$ and $Y$ that do not go through $X$, including $p$, are not possibly directed paths from $V$ to $Y$.
    
    Now, consider $G$ with all possibly directed paths from $V$ to $Y$ removed, denoted as $G'$, which allows us to focus on the other, not possibly directed, paths, including $p$.
    Since by construction there are no possibly directed paths from $V$ to $Y$ in $G'$, then $V$ is a definite non-ancestor of $Y$ in $G'$.
    Then, \Cref{thm:possible_ancestor} states that $V \indep Y | Pa_{G'}(V)$, i.e., all remaining paths between $V$ and $Y$, including $p$, are blocked by $Pa_{G'}(V)$ in $G'$.
    Since the incoming edges into $V$ in $G$ and $G'$ are the same, because we only removed the possibly directed paths from $V$ to $Y$, which all start with an outgoing or undirected edge out of $V$, then by definition the definite parents of $V$ are the same, i.e., $Pa_{G'}(V) = Pa_{G}(V)$.
    Hence, all paths that are not possibly directed from $V$ to $Y$ in $G$, including $p$, are blocked by $Pa_{G}(V)$. We now show that adding $X$ to this conditioning set still keeps $p$ blocked by considering the two cases in which $p$ contains or does not contain a node $Z$ that is both an ancestor of $X$ and a collider on $p$. 

    \paragraph{Case 2a.} Consider any DAG $D$ in the MEC represented by $G$ in which $p$ does not contain a node $Z$ that is both an ancestor of $X$ and a collider on $p$, such as the one shown in \Cref{subfig:2a}.
    Then, additionally conditioning on $X$ does not open $p$.
    Thus, for any path $p$ between $V$ and $Y$ that does not go through $X$ and any DAG $D$ in the MEC represented by $G$ in which $p$ does not contain a node $Z$ that is both an ancestor of $X$ and a collider on $p$, $p$ is blocked by $Pa_G(V) \cup \{X\}$.

    \paragraph{Case 2b.} Consider any DAG $D$ in the MEC represented by $G$ in which $p$ contains a node $Z$ that is both an ancestor of $X$ and a collider on $p$, such as the one shown in \Cref{subfig:2b}.
    Then, additionally conditioning on $X$ might open up $p$.
    In the following, we show that this does not happen.
    Since $Z$ is an ancestor of $X$ in the DAG $D$, it is a possible ancestor of $X$ in the CPDAG $G$ and there is a possibly directed path from $Z$ to $X$ in $G$.
    Then, $Z$ cannot be a descendant of $V$ in $D$, because then there would be a possibly directed path from $V$ to $Z$ in $G$, and thus a partially directed cycle $V \cdots Z \cdots X - V$ in $G$, which is forbidden in CPDAGs as shown by \Cref{thm:4.1andersson}.
    If $Z$ cannot be a descendant of $V$ in any $D$, then  $V$ is a definite non-ancestor of $Z$ in $G$.
    Then, by \Cref{thm:possible_ancestor}, $V \indep Z | Pa_{G}(V)$.
    This implies that the sub-path of $p$ between $V$ and $Z$ is already blocked by $Pa_{G}(V)$ in $G$, and also by $Pa_G(V) \cup \{X\}$. Since we just need a subpath of $p$ to be blocked by $Pa_G(V) \cup \{X\}$ for the whole path $p$ to be blocked, this means that  $p$ is blocked by $Pa_G(V) \cup \{X\}$.
    Thus, for any path $p$ between $V$ and $Y$ that does not go through $X$ and any DAG $D$ in the MEC represented by $G$ in which $p$ contains a node $Z$ that is both an ancestor of $X$ and a collider on $p$, $p$ is blocked by $Pa_G(V) \cup \{X\}$.

    In summary, $Pa_G(V) \cup \{X\}$ blocks all paths between $V$ and $Y$ in $G$. By the Causal Markov assumption, this means the conditional independence $V \indep Y|Pa_G(V) \cup \{X\}$ holds for any sibling $V$ of $X$.
\end{proof}

\subsection{Proof of Cor.~\ref{lem:load_identifiability}}
\label{proof:load_identifiability}

\loadidentifiability*
\begin{proof}
    LOAD begins by identifying the causal relations between the target pair using \Cref{alg:relate} (line~\ref{line:load:relate}).
    By \Cref{lem:relate}, the identified causal relations are sound and complete for definite non-ancestry, possible ancestry and explicit ancestry.
    If no explicit ancestral relation is found in either direction, then by \Cref{lem:identifiability_implies_expl_an}, the causal effect is not identifiable and LOAD correctly returns this, as well as the locally valid parent adjustment sets (lines \ref{line:load:no_explan_start} to \ref{line:load:no_explan_end}). 

    If an explicit ancestral relation is found in either direction, then LOAD tests amenability (lines \ref{line:load:amenability_start} to \ref{line:load:amenability_end}) by employing \Cref{alg:is_amenable} for each sibling of the treatment (i.e., the target that is identified as the explicit ancestor of the other).     \Cref{thm:mb_by_mb} ensures that the local information used for testing amenability is correct.
    The test of amenability fails if a sibling is found to be adjacent to the outcome, shown to be correct by \Cref{lem:sib_notadj_outcome}, or if \Cref{lem:local_amenability} does not hold, in which case LOAD again returns the locally valid parent adjustment sets.

    Thus, LOAD returns that the causal effect is identifiable if and only if the CPDAG is amenable relative to the target pair, i.e., if the causal effect between the targets is identifiable.
\end{proof}

\subsection{Proof of Lem.~\ref{lemma:possde}}
\label{proof:possde}

We first report a definition of a general projection for CPDAGs with single treatment and outcome based on a set $\mathbf{F}$, that is inspired by Def.~17 by \citet{witte2020efficient}, but it focuses on CPDAGs and does not specify the set $\mathbf{F}$.

\begin{definition}[Projection for CPDAGs with single treatment and outcome given $\mathbf{F}$]
    Let $G$ be a CPDAG with nodes $\mathbf{V}$, and let $T, O \in \mathbf{V}$, and $\mathbf{F} \subset \mathbf{V} \setminus \{T,O\}$.    
    A projection $\hat{G}^{T,O}$ of $G$ is a graph with nodes $\mathbf{V} \setminus \mathbf{F}$ and edges as follows. For distinct nodes $W_i, W_j \in \mathbf{V} \setminus \mathbf{F}$,
    \begin{compactenum}
        \item $\hat{G}^{T,O}$ contains a directed edge $W_i \to W_j$ if and only if $G$ contains a directed path $W_i \to \cdots \to W_j$ on which all non-endpoint nodes are in $\mathbf{F}$,
        \item $\hat{G}^{T,O}$ contains a bi-directed edge $Wi \leftrightarrow Wj$ if and only if $G$ contains a path, with at least one non-endpoint node, of the form $W_i \gets \cdots \to W_j$ on which all non-endpoints are non-colliders and in $\mathbf{F}$,
        \item $\hat{G}^{T,O}$ contains an undirected edge $W_i - W_j$ if and only if $G$ contains $W_i - W_j$.
    \end{compactenum}
\end{definition}

In our setting with a CPDAG with a single treatment and outcome, the original forbidden projection by \citet{witte2020efficient} is then the projection with $\mathbf{F}= PossDe_G(PossCn_G(T,O)) \setminus \{T, O\}$, where $PossCn_G(T,O)$ are the nodes on possibly directed paths from $T$ to $O$, excluding $T$. If the causal effect between $T$ and $O$ is identifiable, i.e., the CPDAG $G$ is amenable to single treatment and outcome $(T,O)$, Prop.~19 and Lemma~20 in \citet{witte2020efficient} show that there are no bidirected edges in the projection, so the resulting graph is also a CPDAG.  Moreover, the parents of the outcome except the treatment are the optimal adjustment set.
We show that if we consider a larger set $\mathbf{D}=PossDe_G(T)\setminus \{T, O\}$ for the projection, the resulting graphs is also still a CPDAG and that the parents of the outcome except the treatment in this graph are still the optimal adjustment set.

\possde*

\begin{proof}
We start by applying a forbidden projection for treatment $T$ and outcome $O$ on the original amenable CPDAG $G$ and getting a graph $\tilde{G}^{T,O}$ with a node set $\mathcal{V}$. As shown in Prop.~22 by \citet{witte2020efficient} this is still a CPDAG, but without any possible causal nodes and any possible descendants of $O$ (which includes its siblings), except $O$ itself. The optimal adjustment set are the parents of $T$ except $T$ in $\tilde{G}^{T,O}$.
We can further project out the remaining variables in $\tilde{G}^{T,O}$, $\Delta = PossDe_{G}(T) \setminus PossDe_{G}(PossCn_{G}(T,O)) \cup \{T, O\}$. These will be still the remaining possible descendants of $T$ in this new graph, i.e., $PossDe_{\tilde{G}^{T,O}}(T)$, which are definite non-ancestors of $O$ and its possible ancestors, e.g., its parents, in both graphs, since otherwise they would be possible causal nodes in the original graph $G$ and removed in the first projection. As such, following \citet{schubert2025snap}, they can be safely marginalized out in $G^{T,O}$ without changing any orientation in the graph regarding $O$, its possible ancestors, e.g., parents, i.e., without changing the parent set of $O$, and hence the optimal adjustment set.

To prove that $G^{T,O}$ is still a CPDAG, we first show that it does not contain bi-directed edges, and then show that we do not add any other edge to the remaining variables with respect to the forbidden projection $\tilde{G}^{T,O}$.

In order for a bi-directed edge to appear in $G^{T,O}$, there have to exist two nodes $W_i, W_j \notin \mathbf{D}$ such that $W_i \gets \cdots \to W_j$ in $G$ and all non-endpoint nodes on this path are non-colliders and in $\mathbf{D}$. This means that both $W_i$ and $W_j$ have to be possible descendants of a node in $\mathbf{D}$. By transitivity, possible descendants of nodes in $\mathbf{D}$ are also in $\mathbf{D}$, except for $T$ and $O$, which are then the only possible candidates for $W_i$ and $W_j$.
On the other hand, there cannot exists such a path $W_i=T \gets \cdots \to O=W_j$ in $G$ such that all non-endpoint nodes on this path are by definition of $\mathbf{D}$ also possible descendants of $T$, since this would be a contradiction based on the orientation of the path.

We now show that the modified projection does not add any other directed or undirected edge compared to the original forbidden projection $\tilde{G}^{T,O}$.
A fully undirected path or (possibly) directed path from $T$ to other variables $V \in \mathcal{V}\setminus \{\Delta \cup O, T\}$ that passes through some variable $D \in \Delta$ would imply that $V$ is also in $\Delta$ by definition of possible descendants, which is a contradiction. Moreover, it is not possible to have a  directed path from $V$ to $T$ through some $D \in \Delta$, since this would contradict the fact that $\Delta$ are possible descendants of $T$.

We will therefore consider the case when there is a possibly directed path $p=(V, \dots, D_1, \dots, D_k, \dots, T)$ in $\mathcal{G}$ for $V \in \mathcal{V}\setminus \{\Delta \cup \{O, T\}\}$ where all non-endpoint nodes $D_1, \dots, D_k$ are in $\Delta$. The subpath $p'$ from $D_1 - D_2 \dots - D_k - T$ has to undirected, since it cannot be possibly directed from any $D_i$ to $T$ by definition of $\Delta$ and it cannot be possibly directed in the other direction by assumption. Additionally, the subpath $p''$ from $V \dots D_1$ cannot be completely undirected, otherwise $V$ would be part of $\Delta$. In order for $p'$ to stay undirected, we need that the variable $V_j$ on $p''$ closest to $D_1$ that has an orientation is part of a shielded triple $V_i \to V_j - D_1$ with additionally $V_i - D_1$ or $V_i \to D_1$ , otherwise the orientation will propagate further due to Meek's rules \citep{meek1995causal}, but we cannot use $V_i -D_1$ , since this would mean that $V_i \in \Delta$, because it is connected by an undirected path to $T$, which is a contradiction. So the only possible shield is $V_i \to D_1$, but then since we have $V_i  \to D_1 - D_2$ this would imply that there needs to be another shield $V_i \to D_2$, on which we can apply the same argument on each node $D_i$, until we would get a shield $V_i \to T$. This means that in the projection we would not need to add any edge from $V_i$ to $T$. A simplified case is when we only have three variables $V \to D - T$, which would require $V - T$, which is again a contradiction, or $V \to T$ which would mean we do not need to add a new edge after the projection.
The only possible path from $T$ to other variables $V \in \mathcal{V}\setminus \{\Delta \cup O, T\}$ that passes through some variable $D \in \Delta$ would have $D$ as a collider, which means we can safely remove this variable and still have a valid CPDAG, which means we do not need to add any edge to the original $\tilde{G}^{T,O}$.
\end{proof}

\subsection{Proof of Thm.~\ref{thm:load_oset}} 
\label{proof:load_oset}

\loadoset*
\begin{proof}
    By Thm. 3.13 of \citep{henckel2022graphical}, the optimal adjustment set is identifiable iff the causal effect between $T$ and $O$ is identifiable in the full CPDAG $G$. \Cref{lem:load_identifiability} shows that LOAD is sound and complete in determining the identifiability of causal effects.

    When the causal effect of $T$ on $O$ is identifiable, we show that LOAD returns the correct optimal adjustment set.
    To construct the estimated optimal adjustment, LOAD identifies the possible descendants of $T$ using \Cref{alg:is_possible_ancestor} (lines \ref{line:load:possde_start} to \ref{line:load:possde_end}).
    Since \Cref{alg:is_possible_ancestor} is sound and complete in finding possible ancestral relations as shown in \Cref{lem:alg_is_poss_an}, these are exactly $PossDe_G(X)$.
    
    When the causal effect of $T$ on $O$ is identifiable, then, by Lem.~\ref{lemma:possde}, the modified forbidden projection $G^{T,O} = G((\mathbf{V}\setminus PossDe_G(T)) \cup \{T,O\})$ is also a CPDAG and thus we can perform local causal discovery using MB-by-MB.
    We perform local causal discovery on the outcome $O$ in the modified forbidden projection $G^{T,O}$ at line \ref{line:load:forb_proj} to identify the parents of $O$ in the modified forbidden projection $G^{T,O}$ i.e., $Pa_{G^{T,O}}(O)$.
    Then, as shown by \citet{witte2020efficient}, $Pa_{G^{T,O}}(O) \setminus \{T\}$ is exactly the optimal adjustment set.
    The correctness of the local information throughout all of these steps is ensured by \Cref{thm:mb_by_mb}.
    Thus, LOAD is sound and complete in determining the optimal adjustment set.
\end{proof}

\section{POTENTIAL FURTHER OPTIMIZATIONS}
\label{sec:optimizations}

Here we describe a set of potential further optimizations to our method that we did not use in the main paper, due to the limited improvement in our experimental setting.
The LocalRelate (\Cref{alg:relate}) and LOAD (\Cref{alg:load}) algorithms run the local causal discovery sub-routine sequentially on several target variables with information cached and shared between different executions as shown by our implementation of MB-by-MB in \Cref{alg:mb-by-mb}.
Execution time can be further improved by running local causal discovery on multiple target nodes in parallel with the same shared memory already implemented.
In particular, local causal discovery can be run in parallel for the two target variables by \Cref{alg:relate} at lines~\ref{line:localrelate:G_x} and \ref{line:localrelate:G_y}.
Furthermore, local causal discovery on all siblings of the treatment in Step 3 of LOAD at line~\ref{line:load:amenability_start} can also be fully parallelized.
Note, however, that parallelizing Step 3 can result in performing more CI tests, as the current implementation terminates early when amenability fails.

Another potential optimization involves the last step of LOAD at line~\ref{line:load:forb_proj}, which runs local causal discovery around the outcome over the modified forbidden projection.
We conjecture that instead of local causal discovery, recovering only the Markov blanket of the outcome already identifies the parents, and thus the optimal adjustment set, without having to orient any edges.

\begin{lemma}\label{con:load_optimization}
    For a treatment $T$ and outcome $O$ in a graph $G$, the Markov blanket of $O$ in the modified forbidden projection $G^{T,O}$ directly identifies the optimal adjustment as
    $$
    Oset_G(T,O) = MB_{G^{T,O}}(O) \setminus \{T\}.
    $$
\end{lemma}

\begin{proof}
    In Lem.~\ref{lemma:possde} we showed that $Oset_G(T,O) = Pa_{G^{T,O}}(O) \setminus \{T\}$.
    Thus, we need to show that $Pa_{G^{T,O}}(O) = MB_{G^{T,O}}(O)$.
    It holds that $Pa_{G^{T,O}}(O) \subseteq  MB_{G^{T,O}}(O)$ by definition.
    
    For the other direction, we have to show that $MB_{G^{T,O}}(O) \subseteq Pa_{G^{T,O}}(O)$.
    Since $G^{T,O}$ is the modified forbidden projection, it does not contain any possible descendants of $T$.
    Hence, $O$ has no children or undirected siblings in $G^{T,O}$ and thus $MB_{G^{T,O}}(O)$ also does not contain any children or undirected siblings of $O$ in $G^{T,O}$.
    Furthermore, if $O$ has no children or undirected siblings in $G^{T,O}$, then it also does not have co-parents in $G^{T,O}$.
    This means that $MB_{G^{T,O}}(O)$ can only contain the parents of $O$ in $G^{T,O}$, i.e., $MB_{G^{T,O}}(O) \subseteq Pa_{G^{T,O}}(O)$.
\end{proof}

\Cref{con:load_optimization} allows us to run Markov blanket discovery once on the outcome variable at line~\ref{line:load:forb_proj} instead of possibly on multiple variables through MB-by-MB. On the other hand, in our experiments, the computational gain is very limited, so we did not implement this strategy in the main paper, also because this might make any potential extension to the causally insufficient case more complicated.

\section{EXPERIMENTAL DETAILS}
\label{sec:experimental_details}

For our experiments, we used the following libraries: igraph \citep{csardi2006igraph} (GNU GPL version 2 or later), networkx \citep{hagberg2008exploring} (3-Clause BSD License), bnlearn \citep{scutari2010bnlearn} (MIT License), pcalg \citep{kalisch2012pcalg} (GNU GPL version 2 or later), dagitty \citep{textor2016dagitty} (GNU GPL), causal-learn \citep{zheng2024causal} (MIT License) and RCD \citep{mokhtarian2024recursive} (BSD 2-Clause License).
In particular, we used the \ac{CI} test implementations from the causal-learn package \citep{zheng2024causal}.
We publish our code at \url{https://github.com/matyasch/load}.

All experiments were performed on AMD Rome CPUs, using 48 CPU cores and 84 GiB of memory.
We let each experiment run for at most 24 hours.
If an experiment over 100 seeds did not finish in the given time or did not fit in the given memory, then we do not report any results for it.

\subsection{Real-world networks}
\label{app:real_networks}

The MAGIC-NIAB network has 44 nodes, an average degree 3, and is parameterized by linear Gaussian data.
We show the network structure of MAGIC-NIAB in Fig.~\ref{fig:magic-niab}.
The Andes network has 223 nodes, an average degree 3.03, and is parameterized by binary data.
We show the network structure of Andes in Fig.~\ref{fig:andes}.

\begin{figure}
    \centering
    \includegraphics[width=.8\linewidth]{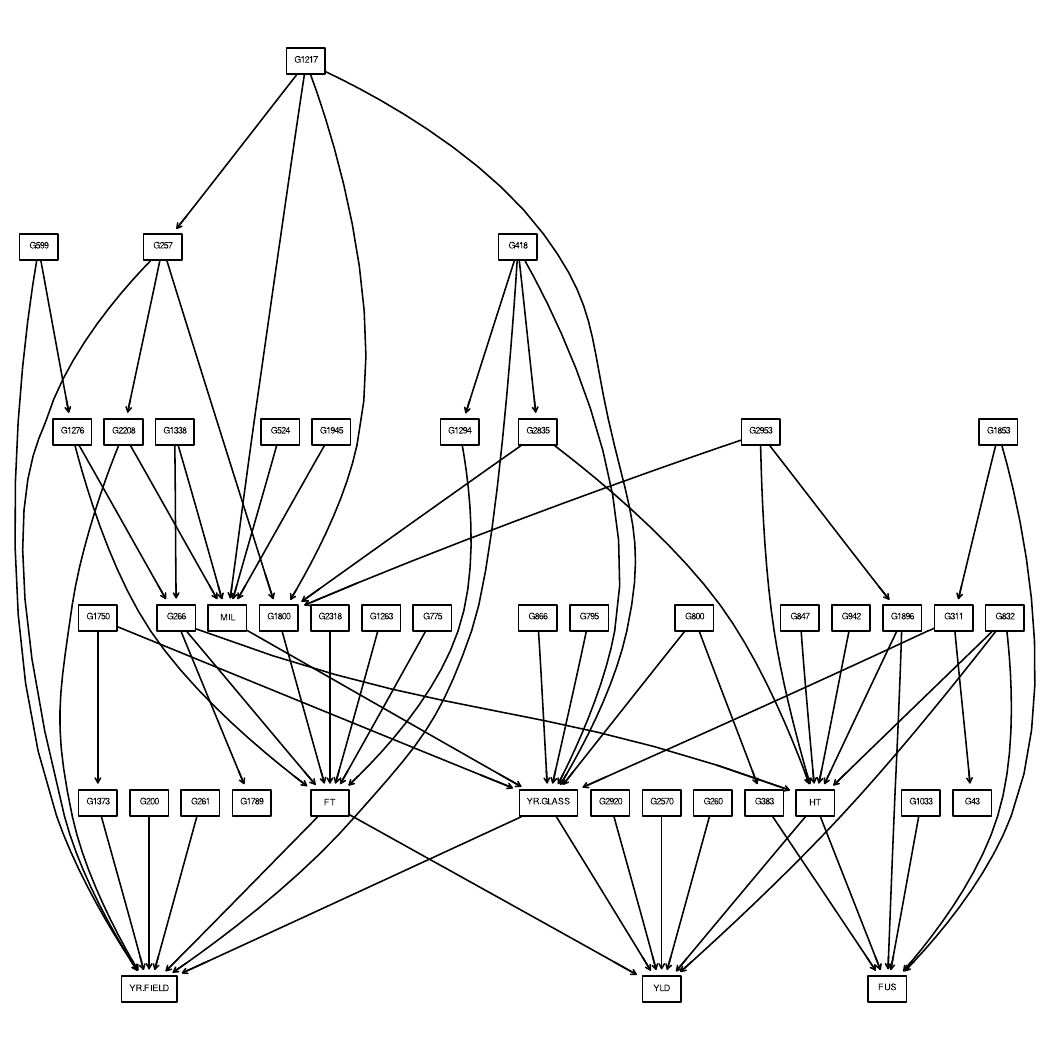}
    \caption{The MAGIC-NIAB network.}
    \label{fig:magic-niab}
\end{figure}

\begin{figure}
    \centering
    \includegraphics[width=\linewidth]{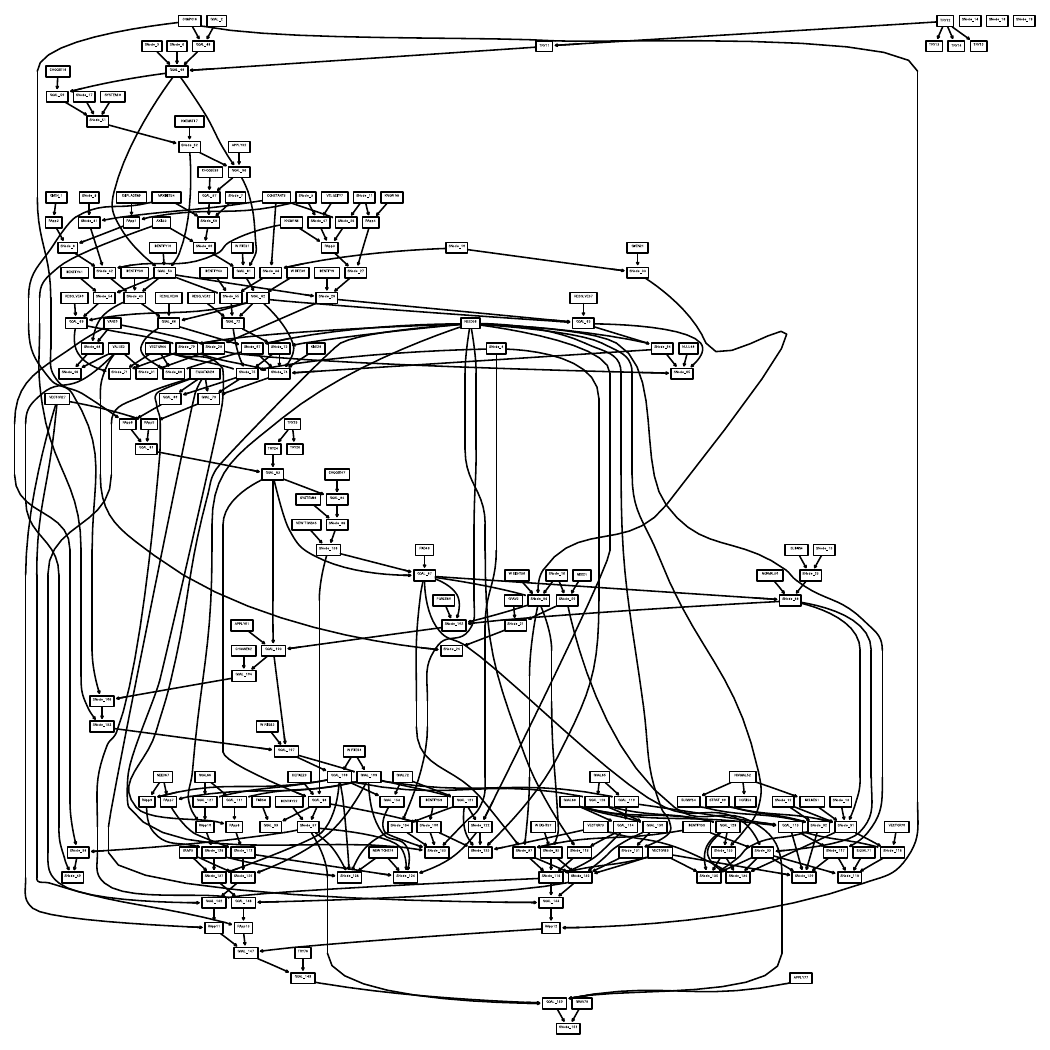}
    \caption{The Andes network.}
    \label{fig:andes}
\end{figure}

\section{EXTENDED RESULTS}
\label{sec:extended_results}

We repeat the main results presented in \Cref{fig:main_results} in tables with numbers shown explicitly for additional clarity, due to several overlapping results.
\Cref{tab:main_results_dsep} reports the main results for d-separation, \Cref{tab:main_results_fshz} for Fisher-Z tests and \Cref{tab:main_results_kci} KCI tests on linear Gaussian data, and \Cref{tab:main_results_gsq} for $G^2$ tests on binary data.

On top of the metrics already shown in \Cref{fig:main_results}, the tables also report the computation time of all algorithms over all experiments.
In the d-separation setting, the computation times show the same trends as the number of CI tests.
However, when using Fisher-Z or $G^2$ tests, the computation time of MARVEL increases substantially and the gap between the local methods (MB-by-MB, LDECC and LDP) and methods like PC and
SNAP($\infty$) gets smaller with the increasing number of nodes. LOAD performs as expected in-between local and global methods for lower numbers of nodes, but is slower than global methods on extremely large graphs.
This is due to MARVEL, MB-by-MB$^+$, LDECC$^+$ and LOAD all employing some form of Markov blanket search, which generally perform CI tests with large conditioning set sizes and require more computations. 

\begin{table*}[th]
\caption{Results over number of nodes with $n_{\mathbf{D}} = 10000$, $\overline{d} = 2$ and $d_{\max} = 10$ and target pairs such that one is an explicit ancestor of the other, for d-separation CI tests.}
\label{tab:main_results_dsep}
\centering
\footnotesize{
\begin{tabular}{@{}lrrrrrr@{}}
\toprule
                & \multicolumn{6}{c}{d-separation CI tests}          \\
\midrule
\# Nodes & \multicolumn{1}{c}{100} & \multicolumn{1}{c}{200} & \multicolumn{1}{c}{400} & \multicolumn{1}{c}{600} & \multicolumn{1}{c}{800} & \multicolumn{1}{c}{1000} \\
\midrule
                & \multicolumn{6}{c}{Number of CI tests ($\times 10^3$) $\downarrow$}   \\
\midrule
PC              & 9.00 $\pm$ 1.46 & 28.61 $\pm$ 3.04 & 96.81 $\pm$ 4.78 & 205.14 $\pm$ 5.19 & 353.48 $\pm$ 5.51 & 542.52 $\pm$ 6.66 \\
MB-by-MB$^+$    & 0.66 $\pm$ 0.30 & 1.14 $\pm$ 0.43 & 2.20 $\pm$ 0.71 & 3.16 $\pm$ 1.11 & 4.84 $\pm$ 1.86 & 5.64 $\pm$ 1.98  \\
MARVEL          & 5.48 $\pm$ 0.14 & 21.11 $\pm$ 0.37 & 82.07 $\pm$ 0.50 & - & - & - \\
LDECC$^+$       & 7.95 $\pm$ 6.90 & 23.00 $\pm$ 23.59 & 100.55 $\pm$ 86.36 & 179.14 $\pm$ 188.47 & 400.75 $\pm$ 324.45 & 566.69 $\pm$ 508.87 \\
LDP$^+$         & 0.88 $\pm$ 0.34 & 1.51 $\pm$ 0.47 & 2.96 $\pm$ 0.75 & 4.25 $\pm$ 1.15 & 6.29 $\pm$ 1.91 & 7.45 $\pm$ 2.06 \\
SNAP($\infty$)  & 5.11 $\pm$ 0.18 & 20.02 $\pm$ 0.14 & 79.98 $\pm$ 0.17 & 179.89 $\pm$ 0.21 & 319.74 $\pm$ 0.13 & 499.67 $\pm$ 0.15 \\
LOAD            & 0.86 $\pm$ 0.36 & 1.49 $\pm$ 0.53 & 2.94 $\pm$ 0.92 & 4.36 $\pm$ 1.39 & 6.41 $\pm$ 2.23 & 7.63 $\pm$ 2.47 \\
\midrule
                & \multicolumn{6}{c}{Computation Time (s) $\downarrow$}   \\
\midrule
PC              & 3.67 $\pm$ 0.71 & 19.67 $\pm$ 2.03 & 117.25 $\pm$ 11.66 & 334.99 $\pm$ 44.21 & 771.44 $\pm$ 43.63 & 1334.79 $\pm$ 200.84 \\
MB-by-MB$^+$    & 0.15 $\pm$ 0.07 & 0.42 $\pm$ 0.15 & 1.41 $\pm$ 0.46 & 3.11 $\pm$ 1.02 & 5.53 $\pm$ 2.13 & 8.13 $\pm$ 2.68  \\
MARVEL          & 2.06 $\pm$ 0.30 & 13.23 $\pm$ 2.70 & 113.14 $\pm$ 11.77 & - & - & -  \\
LDECC$^+$       & 2.71 $\pm$ 2.26 & 12.87 $\pm$ 11.98 & 99.77 $\pm$ 82.67 & 210.67 $\pm$ 212.49 & 725.48 $\pm$ 575.05 & 1120.81 $\pm$ 993.60 \\
LDP$^+$         & 0.58 $\pm$ 0.19 & 2.44 $\pm$ 0.42 & 7.69 $\pm$ 1.95 & 20.83 $\pm$ 3.59 & 50.77 $\pm$ 15.00 & 91.10 $\pm$ 19.53 \\
SNAP($\infty$)  & 3.07 $\pm$ 0.27 & 14.42 $\pm$ 2.33 & 93.44 $\pm$ 22.16 & 313.71 $\pm$ 40.02 & 786.31 $\pm$ 35.68 & 1514.65 $\pm$ 65.48 \\
LOAD            & 0.21 $\pm$ 0.08 & 0.55 $\pm$ 0.19 & 1.99 $\pm$ 0.61 & 4.15 $\pm$ 1.24 & 7.48 $\pm$ 2.57 & 12.31 $\pm$ 3.46 \\
\midrule
                & \multicolumn{6}{c}{F1 score of Oset $\uparrow$} \\
\midrule
PC              & 1.00 $\pm$ 0.00 & 1.00 $\pm$ 0.00 & 1.00 $\pm$ 0.00 & 1.00 $\pm$ 0.00 & 1.00 $\pm$ 0.00 & 1.00 $\pm$ 0.00 \\
MB-by-MB$^+$    & 0.00 $\pm$ 0.00 & 0.00 $\pm$ 0.00 & 0.00 $\pm$ 0.00 & 0.00 $\pm$ 0.00 & 0.00 $\pm$ 0.00 & 0.00 $\pm$ 0.00  \\
MARVEL          & 1.00 $\pm$ 0.00 & 1.00 $\pm$ 0.00 & 1.00 $\pm$ 0.00  & - & - & -  \\
LDECC$^+$       & 0.00 $\pm$ 0.00 & 0.00 $\pm$ 0.00 & 0.00 $\pm$ 0.00 & 0.00 $\pm$ 0.00 & 0.00 $\pm$ 0.00 & 0.00 $\pm$ 0.00 \\
LDP$^+$         & 0.01 $\pm$ 0.04 & 0.00 $\pm$ 0.00 & 0.00 $\pm$ 0.00 & 0.00 $\pm$ 0.00 & 0.00 $\pm$ 0.00 & 0.00 $\pm$ 0.00 \\
SNAP($\infty$)  & 1.00 $\pm$ 0.00 & 1.00 $\pm$ 0.00 & 1.00 $\pm$ 0.00 & 1.00 $\pm$ 0.00 & 1.00 $\pm$ 0.00 & 1.00 $\pm$ 0.00 \\
LOAD            & 1.00 $\pm$ 0.00 & 1.00 $\pm$ 0.00 & 1.00 $\pm$ 0.00 & 1.00 $\pm$ 0.00 & 1.00 $\pm$ 0.00 & 1.00 $\pm$ 0.00 \\
\midrule
                & \multicolumn{6}{c}{Intervention distance $\downarrow$} \\
\midrule
PC              & 0.004 $\pm$ 0.004 & 0.004 $\pm$ 0.003 & 0.003 $\pm$ 0.003 & 0.003 $\pm$ 0.003 & 0.003 $\pm$ 0.002 & 0.003 $\pm$ 0.003 \\
MB-by-MB$^+$    & 0.031 $\pm$ 0.035 & 0.019 $\pm$ 0.018 & 0.032 $\pm$ 0.034 & 0.026 $\pm$ 0.032 & 0.030 $\pm$ 0.031 & 0.031 $\pm$ 0.030 \\
MARVEL          & 0.004 $\pm$ 0.004 & 0.004 $\pm$ 0.003 & 0.003 $\pm$ 0.003  & - & - & - \\
LDECC$^+$       & 0.031 $\pm$ 0.035 & 0.019 $\pm$ 0.018 & 0.032 $\pm$ 0.034 & 0.026 $\pm$ 0.032 & 0.030 $\pm$ 0.031 & 0.031 $\pm$ 0.030 \\
LDP$^+$         & 0.011 $\pm$ 0.012 & 0.007 $\pm$ 0.005 & 0.008 $\pm$ 0.008 & 0.009 $\pm$ 0.010 & 0.009 $\pm$ 0.009 & 0.010 $\pm$ 0.010 \\
SNAP($\infty$)  & 0.004 $\pm$ 0.004 & 0.004 $\pm$ 0.003 & 0.003 $\pm$ 0.003 & 0.003 $\pm$ 0.003 & 0.003 $\pm$ 0.002 & 0.003 $\pm$ 0.003 \\
LOAD            & 0.004 $\pm$ 0.004 & 0.004 $\pm$ 0.003 & 0.003 $\pm$ 0.003 & 0.003 $\pm$ 0.003 & 0.003 $\pm$ 0.002 & 0.003 $\pm$ 0.003 \\
\bottomrule
\end{tabular}
}
\end{table*}

\begin{table*}[th]
\caption{Results over number of nodes with $n_{\mathbf{D}} = 10000$, $\overline{d} = 2$ and $d_{\max} = 10$ and target pairs such that one is an explicit ancestor of the other, for Fisher-Z CI tests on linear Gaussian data.}
\label{tab:main_results_fshz}
\centering
\footnotesize{
\begin{tabular}{@{}lrrrrrr@{}}
\toprule
                & \multicolumn{6}{c}{Fisher-Z CI tests}          \\
\midrule
\# Nodes & \multicolumn{1}{c}{100} & \multicolumn{1}{c}{200} & \multicolumn{1}{c}{400} & \multicolumn{1}{c}{600} & \multicolumn{1}{c}{800} & \multicolumn{1}{c}{1000} \\
\midrule
                & \multicolumn{6}{c}{Number of CI tests ($\times 10^3$) $\downarrow$}   \\
\midrule
PC              & 8.28 $\pm$ 0.95 & 27.16 $\pm$ 1.75 & 96.02 $\pm$ 2.49 & 206.76 $\pm$ 3.04 & 359.64 $\pm$ 3.42 & 555.47 $\pm$ 4.69 \\
MB-by-MB$^+$    & 0.75 $\pm$ 0.35 & 1.37 $\pm$ 0.53 & 2.75 $\pm$ 0.77 & 4.39 $\pm$ 1.51 & 6.53 $\pm$ 2.40 & 8.20 $\pm$ 3.00 \\
MARVEL          & 5.65 $\pm$ 0.16 & 21.99 $\pm$ 0.42 & 87.64 $\pm$ 1.08 & - & - & - \\
LDECC$^+$       & 4.93 $\pm$ 4.86 & 16.29 $\pm$ 18.11 & 61.89 $\pm$ 69.00 & 109.95 $\pm$ 143.96 & 233.69 $\pm$ 273.58 \\
LDP$^+$         & 0.96 $\pm$ 0.37 & 1.75 $\pm$ 0.55 & 3.56 $\pm$ 0.77 & 5.64 $\pm$ 1.54 & 8.11 $\pm$ 2.44 & 10.23 $\pm$ 3.02 \\
SNAP($\infty$)  & 5.01 $\pm$ 0.06 & 19.93 $\pm$ 0.04 & 79.81 $\pm$ 0.01 & 179.70 $\pm$ 0.01 & 319.60 $\pm$ 0.00 & 499.50 $\pm$ 0.00 \\
LOAD            & 0.95 $\pm$ 0.40 & 1.79 $\pm$ 0.62 & 3.77 $\pm$ 0.95 & 6.10 $\pm$ 1.83 & 8.77 $\pm$ 2.78 & 11.15 $\pm$ 3.37 \\
\midrule
                & \multicolumn{6}{c}{Computation Time (s) $\downarrow$}   \\
\midrule
PC              & 1.54 $\pm$ 0.16 & 5.08 $\pm$ 0.28 & 18.66 $\pm$ 0.43 & 40.85 $\pm$ 0.51 & 72.80 $\pm$ 0.62 & 117.75 $\pm$ 0.93 \\
MB-by-MB$^+$    & 0.14 $\pm$ 0.07 & 0.26 $\pm$ 0.10 & 0.65 $\pm$ 0.32 & 0.93 $\pm$ 0.35 & 1.46 $\pm$ 0.60 & 1.91 $\pm$ 0.72 \\
MARVEL          & 1.25 $\pm$ 0.04 & 7.97 $\pm$ 0.10 & 77.41 $\pm$ 0.40 & - & - & -  \\
LDECC$^+$       & 1.21 $\pm$ 1.16 & 4.76 $\pm$ 5.14 & 24.23 $\pm$ 26.15 & 54.63 $\pm$ 69.10 & 134.30 $\pm$ 152.48 & -  \\
LDP$^+$         & 0.24 $\pm$ 0.08 & 0.68 $\pm$ 0.19 & 3.22 $\pm$ 0.83 & 9.29 $\pm$ 3.06 & 22.78 $\pm$ 8.02 & 43.69 $\pm$ 15.19 \\
SNAP($\infty$)  & 1.02 $\pm$ 0.04 & 4.26 $\pm$ 0.08 & 18.16 $\pm$ 0.51 & 43.38 $\pm$ 1.32 & 82.60 $\pm$ 2.82 & 137.85 $\pm$ 5.39 \\
LOAD            & 0.18 $\pm$ 0.08 & 0.35 $\pm$ 0.13 & 0.76 $\pm$ 0.19 & 1.38 $\pm$ 0.46 & 1.90 $\pm$ 0.66 & 2.52 $\pm$ 0.77 \\
\midrule
                & \multicolumn{6}{c}{F1 score of Oset $\uparrow$} \\
\midrule
PC              & 0.74 $\pm$ 0.40 & 0.80 $\pm$ 0.37 & 0.85 $\pm$ 0.32 & 0.69 $\pm$ 0.41 & 0.60 $\pm$ 0.46 & 0.66 $\pm$ 0.43 \\
MB-by-MB$^+$    & 0.00 $\pm$ 0.00 & 0.00 $\pm$ 0.00 & 0.00 $\pm$ 0.00 & 0.00 $\pm$ 0.00 & 0.00 $\pm$ 0.00 & 0.00 $\pm$ 0.00  \\
MARVEL          & 0.91 $\pm$ 0.27 & 0.92 $\pm$ 0.25 & 0.81 $\pm$ 0.37  & - & - & -  \\
LDECC$^+$       & 0.00 $\pm$ 0.00 & 0.00 $\pm$ 0.00 & 0.00 $\pm$ 0.00 & 0.00 $\pm$ 0.00 & 0.00 $\pm$ 0.00 & -  \\
LDP$^+$         & 0.00 $\pm$ 0.01 & 0.00 $\pm$ 0.00 & 0.00 $\pm$ 0.00 & 0.00 $\pm$ 0.00 & 0.00 $\pm$ 0.00 & 0.00 $\pm$ 0.00 \\
SNAP($\infty$)  & 0.56 $\pm$ 0.41 & 0.31 $\pm$ 0.38 & 0.16 $\pm$ 0.30 & 0.03 $\pm$ 0.11 & 0.00 $\pm$ 0.02 & 0.00 $\pm$ 0.00 \\
LOAD            & 0.97 $\pm$ 0.12 & 0.99 $\pm$ 0.06 & 0.98 $\pm$ 0.09 & 0.94 $\pm$ 0.18 & 0.95 $\pm$ 0.15 & 0.96 $\pm$ 0.13 \\
\midrule
                & \multicolumn{6}{c}{Intervention distance $\downarrow$} \\
\midrule
PC              & 0.152 $\pm$ 0.313 & 0.038 $\pm$ 0.085 & 0.053 $\pm$ 0.147 & 0.078 $\pm$ 0.188 & 0.099 $\pm$ 0.225 & 0.131 $\pm$ 0.290 \\
MB-by-MB$^+$    & 0.092 $\pm$ 0.081 & 0.067 $\pm$ 0.083 & 0.120 $\pm$ 0.130 & 0.094 $\pm$ 0.112 & 0.086 $\pm$ 0.096 & 0.113 $\pm$ 0.114 \\
MARVEL          & 0.032 $\pm$ 0.057 & 0.014 $\pm$ 0.017 & 0.036 $\pm$ 0.076  & - & - & - \\
LDECC$^+$       & 0.181 $\pm$ 0.312 & 0.080 $\pm$ 0.100 & 0.202 $\pm$ 0.293 & 0.189 $\pm$ 0.290 & 0.132 $\pm$ 0.201 & -\\
LDP$^+$         & 0.037 $\pm$ 0.044 & 0.023 $\pm$ 0.025 & 0.037 $\pm$ 0.045 & 0.029 $\pm$ 0.037 & 0.029 $\pm$ 0.030 & 0.041 $\pm$ 0.042 \\
SNAP($\infty$)  & 0.111 $\pm$ 0.194 & 0.145 $\pm$ 0.199 & 0.365 $\pm$ 0.327 & 0.479 $\pm$ 0.263 & 0.594 $\pm$ 0.321 & 0.586 $\pm$ 0.321 \\
LOAD            & 0.010 $\pm$ 0.030 & 0.003 $\pm$ 0.003 & 0.004 $\pm$ 0.006 & 0.004 $\pm$ 0.004 & 0.005 $\pm$ 0.007 & 0.005 $\pm$ 0.005 \\
\bottomrule
\end{tabular}
}
\end{table*}

\begin{table*}[th]
\caption{Results over number of nodes with $n_{\mathbf{D}} = 1000$, $\overline{d} = 2$ and $d_{\max} = 10$ and target pairs such that one is an explicit ancestor of the other, for KCI CI tests on binary data.}
\label{tab:main_results_kci}
\centering
\small{
\begin{tabular}{@{}lrrr@{}}
\toprule
                & \multicolumn{3}{c}{KCI tests}          \\
\midrule
\# Nodes & \multicolumn{1}{c}{10} & \multicolumn{1}{c}{15} & \multicolumn{1}{c}{20} \\
\midrule
                & \multicolumn{3}{c}{Number of CI tests $\downarrow$}   \\
\midrule
PC              & 134.33 $\pm$ 31.11 & 282.21 $\pm$ 70.97 & 454.97 $\pm$ 86.31 \\
MB-by-MB$^+$    & 118.46 $\pm$ 46.16 & 194.31 $\pm$ 95.69 & 211.46 $\pm$ 92.08 \\
MARVEL          & 85.41 $\pm$ 22.67 & 196.51 $\pm$ 49.77 & 322.07 $\pm$ 62.74\\
LDECC$^+$       & 160.88 $\pm$ 80.08 & 296.03 $\pm$ 164.97 & 384.71 $\pm$ 220.95\\
LDP$^+$         & 136.20 $\pm$ 47.53 & 232.32 $\pm$ 104.91 & 258.59 $\pm$ 94.33 \\
SNAP($\infty$)  & 92.52 $\pm$ 35.49 & 156.89 $\pm$ 43.81 & 242.79 $\pm$ 53.92 \\
LOAD            & 127.82 $\pm$ 47.10 & 211.42 $\pm$ 99.53 & 239.56 $\pm$ 93.61 \\
\midrule
                & \multicolumn{3}{c}{Computation Time (s) $\downarrow$}   \\
\midrule
PC              & 50.37 $\pm$ 16.94 & 98.77 $\pm$ 38.19 & 223.26 $\pm$ 70.62 \\
MB-by-MB$^+$    & 56.27 $\pm$ 22.72 & 126.40 $\pm$ 64.11 & 105.36 $\pm$ 46.32 \\
MARVEL          & 43.77 $\pm$ 10.80 & 100.21 $\pm$ 23.77 & 180.03 $\pm$ 29.99 \\
LDECC$^+$       & 71.71 $\pm$ 40.32 & 126.50 $\pm$ 81.87 & 159.61 $\pm$ 97.09 \\
LDP$^+$         & 65.01 $\pm$ 25.20 & 105.60 $\pm$ 52.09 & 119.03 $\pm$ 48.46 \\
SNAP($\infty$)  & 26.98 $\pm$ 18.91 & 33.02 $\pm$ 24.64 & 36.24 $\pm$ 30.05 \\
LOAD            & 62.11 $\pm$ 24.26 & 101.09 $\pm$ 49.99 & 162.64 $\pm$ 67.13 \\
\midrule
                & \multicolumn{3}{c}{F1 score of Oset $\uparrow$} \\
\midrule
PC              & 0.52 $\pm$ 0.46 & 0.32 $\pm$ 0.45 & 0.52 $\pm$ 0.47 \\
MB-by-MB$^+$    & 0.02 $\pm$ 0.13 & 0.06 $\pm$ 0.20 & 0.00 $\pm$ 0.00 \\
MARVEL          & 0.74 $\pm$ 0.41 & 0.44 $\pm$ 0.48 & 0.55 $\pm$ 0.48 \\
LDECC$^+$       & 0.03 $\pm$ 0.14 & 0.03 $\pm$ 0.16 & 0.02 $\pm$ 0.11 \\
LDP$^+$         & 0.14 $\pm$ 0.34 & 0.15 $\pm$ 0.35 & 0.11 $\pm$ 0.31 \\
SNAP($\infty$)  & 0.52 $\pm$ 0.47 & 0.35 $\pm$ 0.45 & 0.40 $\pm$ 0.46 \\
LOAD            & 0.74 $\pm$ 0.41 & 0.61 $\pm$ 0.47 & 0.75 $\pm$ 0.41 \\
\midrule
                & \multicolumn{3}{c}{Intervention distance $\downarrow$} \\
\midrule
PC              & 0.35 $\pm$ 0.44 & 0.51 $\pm$ 0.43 & 0.37 $\pm$ 0.47 \\
MB-by-MB$^+$    & 0.20 $\pm$ 0.33 & 0.29 $\pm$ 0.38 & 0.18 $\pm$ 0.31 \\
MARVEL          & 0.18 $\pm$ 0.35 & 0.26 $\pm$ 0.32 & 0.27 $\pm$ 0.42 \\
LDECC$^+$       & 0.29 $\pm$ 0.38 & 0.41 $\pm$ 0.45 & 0.29 $\pm$ 0.45 \\
LDP$^+$         & 0.15 $\pm$ 0.29 & 0.24 $\pm$ 0.33 & 0.16 $\pm$ 0.30 \\
SNAP($\infty$)  & 0.34 $\pm$ 0.39 & 0.51 $\pm$ 0.41 & 0.38 $\pm$ 0.42 \\
LOAD            & 0.17 $\pm$ 0.32 & 0.28 $\pm$ 0.39 & 0.13 $\pm$ 0.29 \\
\bottomrule
\end{tabular}
}
\end{table*}

\begin{table*}[th]
\caption{Results over number of nodes with $n_{\mathbf{D}} = 10000$, $\overline{d} = 2$ and $d_{\max} = 10$ and target pairs such that one is an explicit ancestor of the other, for $G^2$ CI tests on binary data.}
\label{tab:main_results_gsq}
\centering
\footnotesize{
\begin{tabular}{@{}lrrrrrr@{}}
\toprule
                & \multicolumn{6}{c}{$G^2$ CI tests}          \\
\midrule
\# Nodes & \multicolumn{1}{c}{100} & \multicolumn{1}{c}{200} & \multicolumn{1}{c}{400} & \multicolumn{1}{c}{600} & \multicolumn{1}{c}{800} & \multicolumn{1}{c}{1000} \\
\midrule
                & \multicolumn{6}{c}{Number of CI tests ($\times 10^3$) $\downarrow$}   \\
\midrule
PC              & 6.15 $\pm$ 0.21 & 22.77 $\pm$ 0.37 & 87.79 $\pm$ 0.64 & 195.23 $\pm$ 0.77 & 345.68 $\pm$ 0.90 & 539.23 $\pm$ 1.48 \\
MB-by-MB$^+$    & 1.04 $\pm$ 0.25 & 1.71 $\pm$ 0.49 & 2.56 $\pm$ 0.65 & 3.20 $\pm$ 0.78 & 3.95 $\pm$ 0.95 & 4.71 $\pm$ 1.04 \\
MARVEL          & 4.95 $\pm$ 0.00 & 19.90 $\pm$ 0.00 & 79.80 $\pm$ 0.00 & - & - & - \\
LDECC$^+$        & - & - & - & - & - & -  \\
LDP$^+$         & 1.13 $\pm$ 0.25 & 1.93 $\pm$ 0.47 & 3.03 $\pm$ 0.73 & 3.96 $\pm$ 0.93 & 5.11 $\pm$ 1.19 & 6.23 $\pm$ 1.21 \\
SNAP($\infty$)  & 4.95 $\pm$ 0.00 & 19.90 $\pm$ 0.00 & 79.80 $\pm$ 0.00 & 179.70 $\pm$ 0.00 & 319.60 $\pm$ 0.00 & 499.50 $\pm$ 0.00 \\
LOAD            & 1.19 $\pm$ 0.33 & 2.00 $\pm$ 0.55 & 3.08 $\pm$ 0.91 & 4.25 $\pm$ 1.24 & 5.19 $\pm$ 1.50 & 6.10 $\pm$ 1.47
 \\
\midrule
                & \multicolumn{6}{c}{Computation Time (s) $\downarrow$}   \\
\midrule
PC              & 3.93 $\pm$ 0.57 & 13.84 $\pm$ 2.75 & 57.46 $\pm$ 17.63 & 129.28 $\pm$ 11.20 & 229.50 $\pm$ 37.73 & 378.14 $\pm$ 24.23 \\
MB-by-MB$^+$    & 1.87 $\pm$ 0.57 & 4.16 $\pm$ 1.26 & 8.99 $\pm$ 2.71 & 16.22 $\pm$ 5.43 & 16.29 $\pm$ 3.78 & 20.66 $\pm$ 4.55 \\
MARVEL          & 153.59 $\pm$ 26.30 & 1068.25 $\pm$ 206.65 & 9074.14 $\pm$ 800.29 & - & - & -  \\
LDECC$^+$        & - & - & - & - & - & -  \\
LDP$^+$         & 3.16 $\pm$ 0.73 & 7.85 $\pm$ 1.49 & 19.39 $\pm$ 2.91 & 32.80 $\pm$ 8.03 & 64.22 $\pm$ 9.30 & 104.63 $\pm$ 17.39 \\
SNAP($\infty$)  & 2.95 $\pm$ 0.41 & 13.02 $\pm$ 1.34 & 55.38 $\pm$ 4.29 & 130.66 $\pm$ 6.99 & 249.79 $\pm$ 6.83 & 382.60 $\pm$ 48.54 \\
LOAD            & 2.03 $\pm$ 0.74 & 4.65 $\pm$ 1.38 & 9.35 $\pm$ 2.87 & 15.24 $\pm$ 4.40 & 19.83 $\pm$ 5.83 & 24.64 $\pm$ 6.55 \\
\midrule
                & \multicolumn{6}{c}{F1 score of Oset $\uparrow$} \\
\midrule
PC              & 0.34 $\pm$ 0.42 & 0.31 $\pm$ 0.40 & 0.28 $\pm$ 0.40 & 0.21 $\pm$ 0.36 & 0.26 $\pm$ 0.40 & 0.19 $\pm$ 0.34 \\
MB-by-MB$^+$    & 0.00 $\pm$ 0.00 & 0.00 $\pm$ 0.00 & 0.00 $\pm$ 0.00 & 0.00 $\pm$ 0.00 & 0.00 $\pm$ 0.00 & 0.00 $\pm$ 0.00 \\
MARVEL          & 0.00 $\pm$ 0.00 & 0.00 $\pm$ 0.00 & 0.00 $\pm$ 0.00  & - & - & -  \\
LDECC$^+$        & - & - & - & - & - & -  \\
LDP$^+$         & 0.00 $\pm$ 0.00 & 0.00 $\pm$ 0.00 & 0.00 $\pm$ 0.00 & 0.00 $\pm$ 0.00 & 0.00 $\pm$ 0.00 & 0.00 $\pm$ 0.00 \\
SNAP($\infty$)  & 0.09 $\pm$ 0.23 & 0.00 $\pm$ 0.00 & 0.00 $\pm$ 0.00 & 0.00 $\pm$ 0.00 & 0.00 $\pm$ 0.00 & 0.00 $\pm$ 0.00 \\
LOAD            & 0.39 $\pm$ 0.44 & 0.23 $\pm$ 0.35 & 0.24 $\pm$ 0.39 & 0.13 $\pm$ 0.27 & 0.14 $\pm$ 0.32 & 0.06 $\pm$ 0.17 \\
\midrule
                & \multicolumn{6}{c}{Intervention distance $\downarrow$} \\
\midrule
PC              & 0.018 $\pm$ 0.033 & 0.017 $\pm$ 0.027 & 0.022 $\pm$ 0.042 & 0.019 $\pm$ 0.036 & 0.016 $\pm$ 0.029 & 0.021 $\pm$ 0.037 \\
MB-by-MB$^+$    & 0.022 $\pm$ 0.041 & 0.026 $\pm$ 0.038 & 0.033 $\pm$ 0.055 & 0.031 $\pm$ 0.056 & 0.038 $\pm$ 0.058 & 0.035 $\pm$ 0.052 \\
MARVEL          & 0.067 $\pm$ 0.059 & 0.055 $\pm$ 0.045 & 0.067 $\pm$ 0.064  & - & - & - \\
LDECC$^+$        & - & - & - & - & - & -  \\
LDP$^+$         & 0.022 $\pm$ 0.041 & 0.025 $\pm$ 0.037 & 0.032 $\pm$ 0.054 & 0.031 $\pm$ 0.054 & 0.038 $\pm$ 0.061 & 0.036 $\pm$ 0.053 \\
SNAP($\infty$)  & 0.045 $\pm$ 0.055 & 0.050 $\pm$ 0.046 & 0.063 $\pm$ 0.059 & 0.054 $\pm$ 0.057 & 0.058 $\pm$ 0.055 & 0.054 $\pm$ 0.046 \\
LOAD            & 0.021 $\pm$ 0.040 & 0.024 $\pm$ 0.037 & 0.032 $\pm$ 0.055 & 0.031 $\pm$ 0.057 & 0.039 $\pm$ 0.061 & 0.036 $\pm$ 0.052 \\
\bottomrule
\end{tabular}
}
\end{table*}

\section{ABLATIONS}
\label{sec:ablations}

\subsection{CI Testing Errors}
\label{sec:ci_errors}

Our results in \Cref{fig:main_results} and \Cref{sec:extended_results} indicate that identifying the optimal adjustment set is much more challenging in the binary data setting compared to linear Gaussian data for all algorithms.
This can be attributed to the fact that Fisher-Z tests can more easily compute partial correlation based on the inverse of the sample covariance matrix, which is not possible for the $G^2$ tests on binary data.
Instead, tests for discrete data, such as the $G^2$ test, have to use the observed counts of samples with a given combination of values for all of the variables in the conditioning set for each test, resulting in higher sample complexity.
Intuitively, the larger this set is, the smaller the number of samples for each combination, which will give us less accurate results with lower sample sizes

To show empirically that in our settings the $G^2$ tests are less accurate than Fisher-Z tests, we perform an ablation study under the same setting as \Cref{fig:main_results} with 100, 200 and 400 nodes, where we report the percentage of erroneous CI tests results.
Our results in \Cref{fig:ci_err} show that $G^2$ tests indeed have indeed many more errors than Fisher-Z tests in our data.

\subsection{Errors in Local Neighborhoods}
\label{sec:nb_error}

The performance of LOAD decreases slightly with more structural errors in the local neighborhoods estimated by the local causal discovery subroutine.
We show this empirically in \Cref{fig:ci_nb_err} where we report the average number of erroneous relations in the estimated local neighborhoods (i.e., the size of the symmetric difference between the estimated and the true parents, children and undirected neighbors).
We evaluate for both Fisher-Z tests on linear Gaussian data and $G^2$ tests on binary data, over various samples sizes and $n_{\mathbf{V}} = 200$, $\overline{d} = 2$ and $d_{\max} = 10$.
As expected, fewer errors in the local neighborhoods lead to better recovery of the optimal adjustment set (\Cref{fig:ci_f1}).
Notably, even with an average of almost two erroneous node in every estimation of a local neighborhood for 500 samples with Fisher-Z CI tests, the F1 score (and intervention distance as shown in \Cref{fig:samples}) of LOAD is still better than most baselines.

\begin{figure*}[ht!]
    \centering
    \begin{subfigure}[c]{\linewidth}
        \centering
        \includegraphics[width=.25\linewidth]{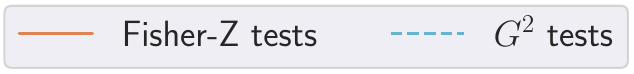}
    \end{subfigure}
    \begin{subfigure}[c]{.32\linewidth}
        \includegraphics[width=\linewidth]{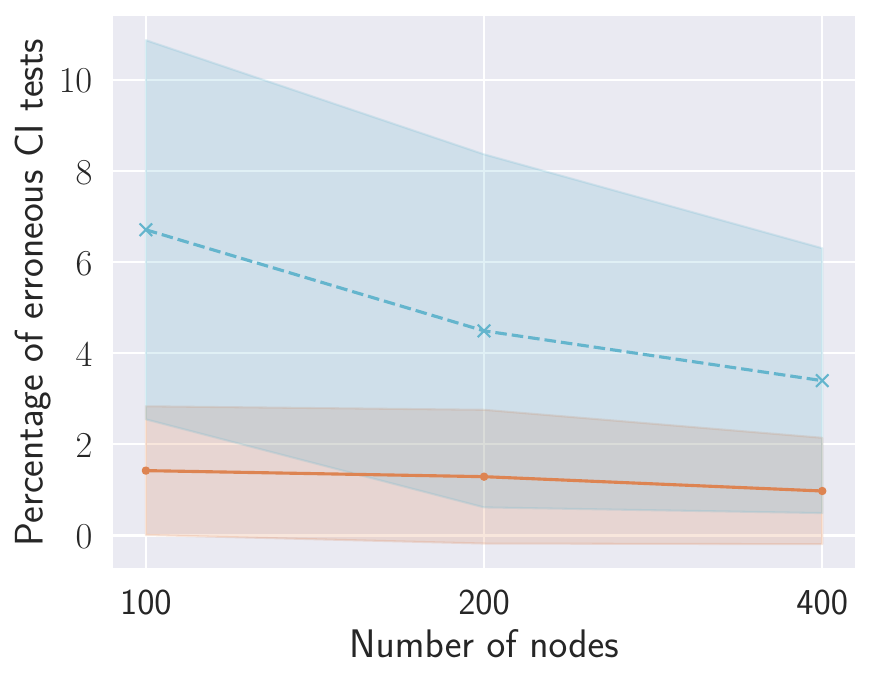}
        \caption{CI test errors.}
        \label{fig:ci_err}
    \end{subfigure}
    \begin{subfigure}[c]{.32\linewidth}
        \includegraphics[width=\linewidth]{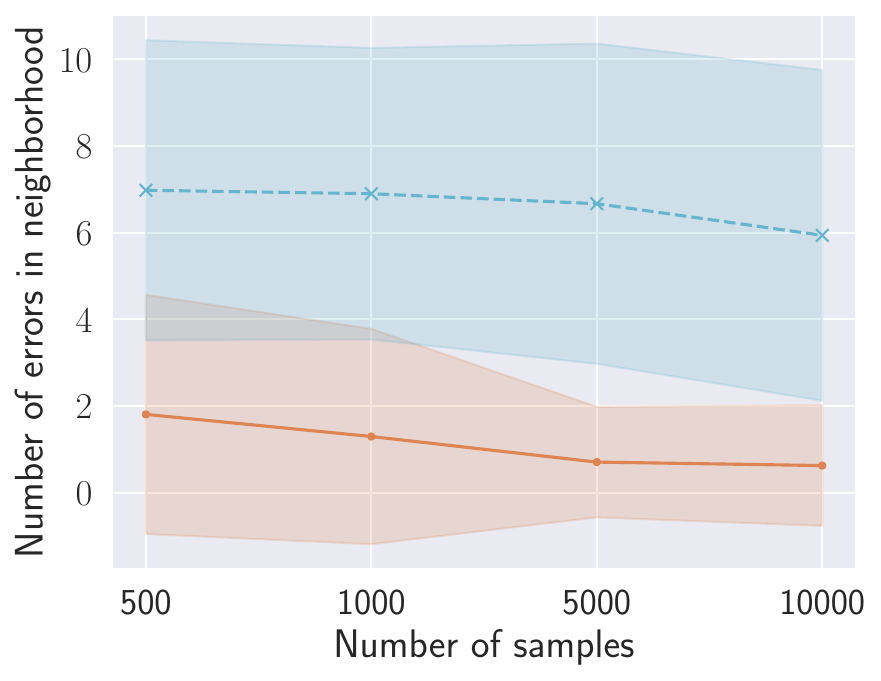}
        \caption{Neighborhood errors.}
        \label{fig:ci_nb_err}
    \end{subfigure}
    \begin{subfigure}[c]{.32\linewidth}
        \includegraphics[width=\linewidth]{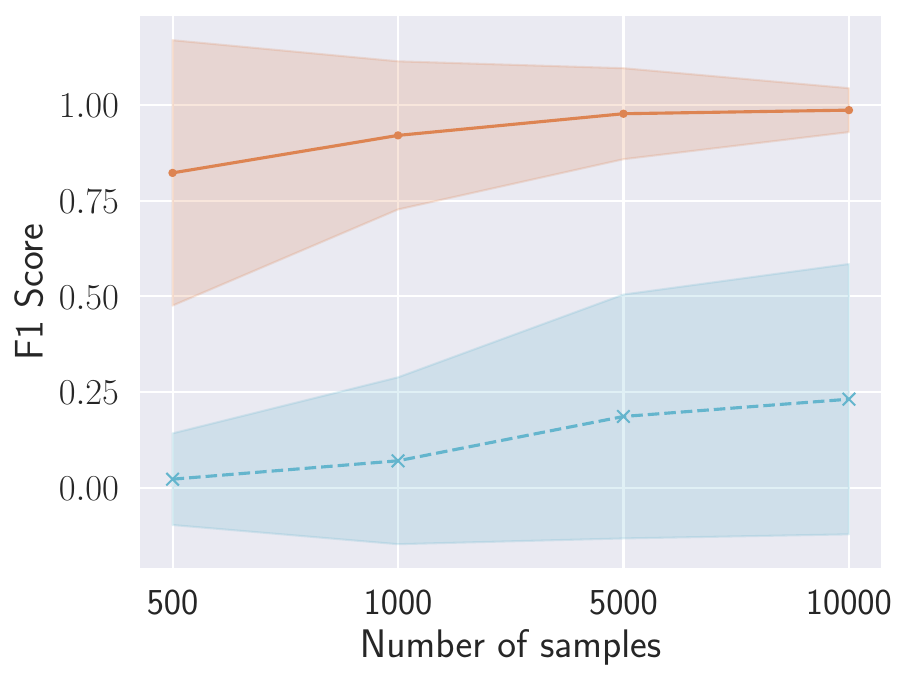}
        \caption{F1 Score.}
        \label{fig:ci_f1}
    \end{subfigure}
    \caption{
    Errors in LOAD for Fisher-Z tests on linear Gaussian data and for $G^2$ tests on binary data.
    \Cref{fig:ci_err}: Percentage of erroneous CI tests results over various number of nodes, with $n_{\mathbf{D}} = 10000$.
    \Cref{fig:ci_nb_err} and \Cref{fig:ci_f1}: Number of erroneous neighbors estimated by local causal discovery and F1 score of estimating the optimal adjustment set over various numbers of samples and $n_{\mathbf{V}} = 200$.
    For all figures $\overline{d} = 2$, $d_{\max} = 10$ and target pairs such that one is an explicit ancestor of the other.
    The shadow area denotes the range of the standard deviation.}
    \label{fig:ci_diff}
\end{figure*}

\subsection{Performance of Steps}
\label{sec:steps}
In this section we study the performance of the different steps of LOAD.
In general, the correctness of the first two steps should highly influence the performance of later steps, since if LOAD identifies the wrong causal relations between the targets in Step 1 or make a mistake in identifying if a causal effect is identifiable in Step 2, this will change the flow of the complete algorithm.
Similarly Steps 3 and 4 which together identify mediators are crucial to identifying the correct optimal adjustment set.
On the other hand, we assume that the last 2 steps are less important in terms of the accuracy of the causal effect estimation because in principle the output could still be potentially a valid set, which would allow the causal effect estimation to still be unbiased, although with a higher variance.

We empirically test this using linear Gaussian data with different sample sizes to study different error rates in each phase.
We report the precision and recall scores of each step.
In particular, we measure how well Step 1 can recover the causal relation between the targets, Step 2 can determine identifiability, how well Steps 3 can identify possible descendants, and finally how Step 4 can recover the optimal adjustment set.
In case LOAD determines the causal relation or the identifiability of the causal effect incorrectly, then later steps automatically have a precision and recall of 0.

Our results in \Cref{fig:steps} show that recovering all and only the correct possible descendants of the treatment in Step 3 is the biggest challenge for LOAD.
Note, that this is generally not because LOAD determines a wrong causal relation or effect identifiability, as the precision and recall of Steps 1 and 2 are high.
In fact, Step 2 achieves a perfect precision at all sample sizes, meaning that it never predicts that a causal effect is identifiable when it is not.
Together with the high recall of Step 2, they demonstrate that \Cref{alg:is_amenable} can robustly determine identifiability across different sample sizes.
Furthermore, the precision and recall of Step 4 is also high, indicating that the recovered parents of the outcome from local causal discovery even under an imperfect forbidden projection can successfully recover optimal adjustment sets.

\begin{figure*}[ht!]
    \centering
    \begin{subfigure}[b]{\linewidth}
        \centering
        \includegraphics[width=.25\linewidth]{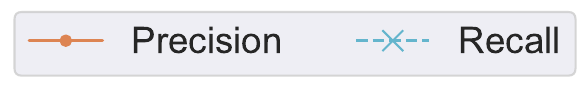}
    \end{subfigure}
    \begin{subfigure}[b]{\linewidth}
        \begin{subfigure}[b]{0.24\linewidth}
            \centering
            \caption*{500 samples}
            \includegraphics[width=.9\linewidth]{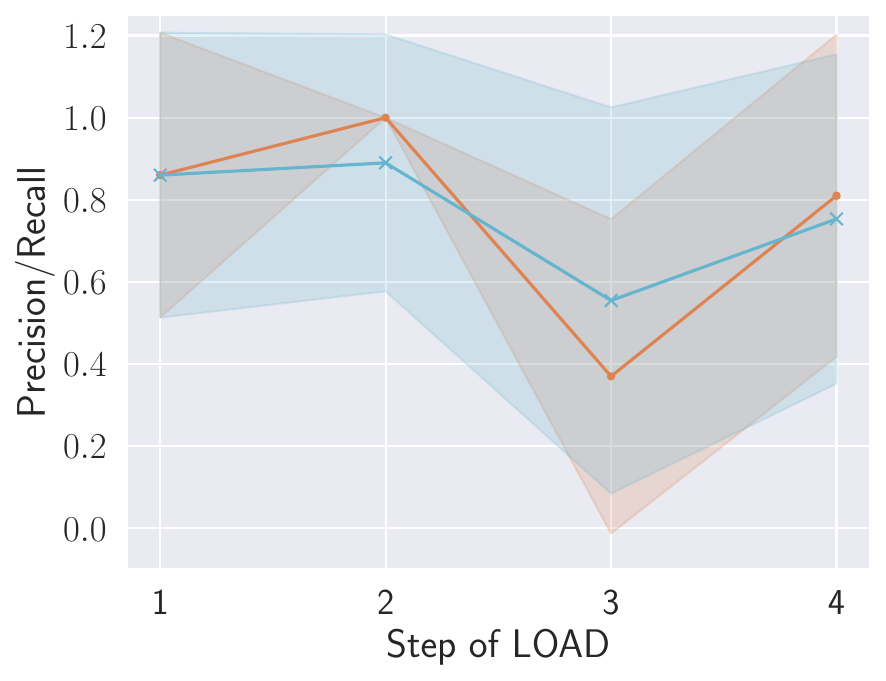}
        \end{subfigure}
        \begin{subfigure}[b]{0.24\linewidth}
            \centering
            \caption*{1000 samples}
            \includegraphics[width=.9\linewidth]{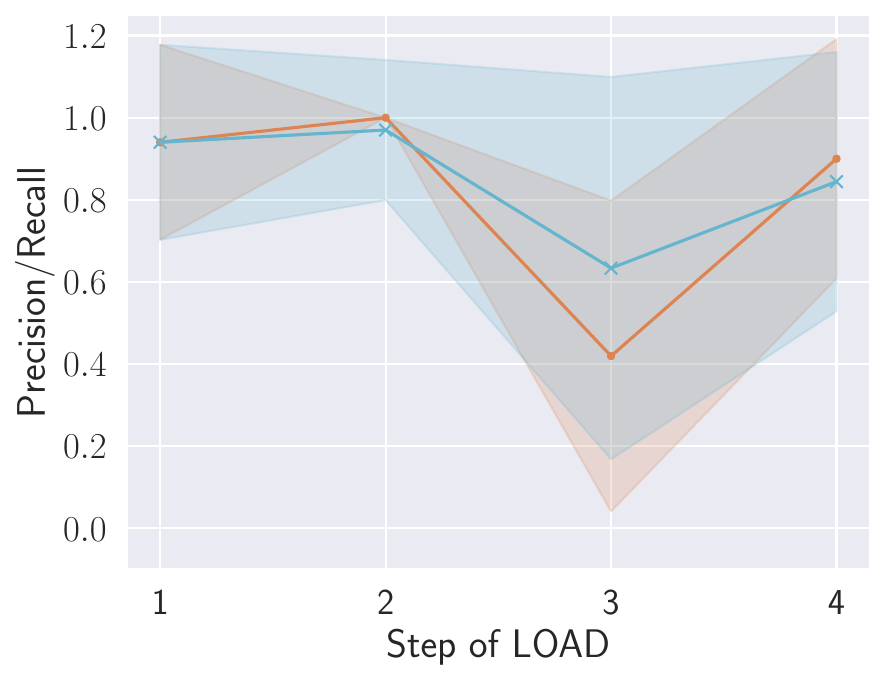}
        \end{subfigure}
        \begin{subfigure}[b]{0.24\linewidth}
            \centering
            \caption*{5000 samples}
            \includegraphics[width=.9\linewidth]{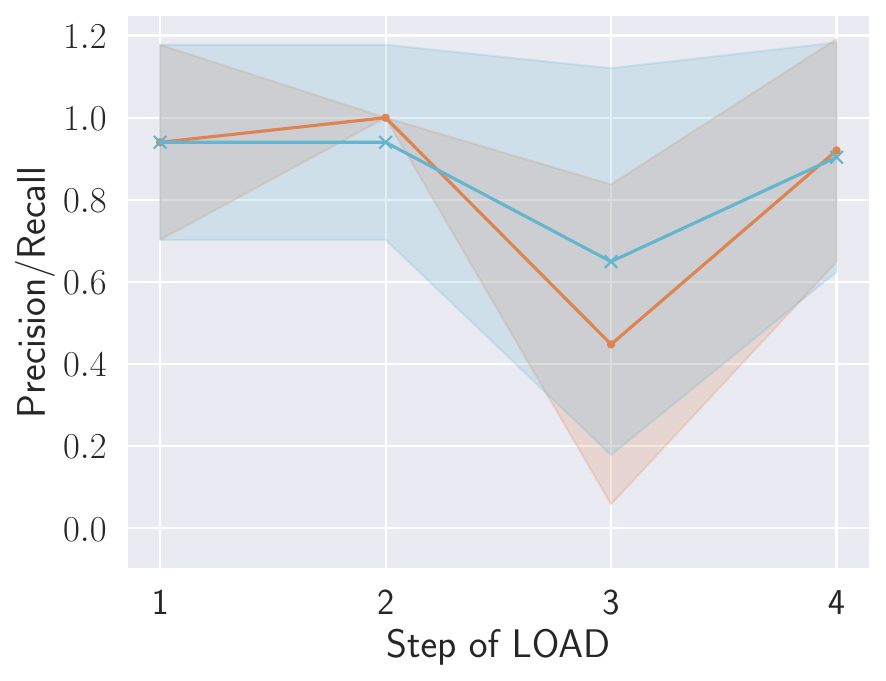}
        \end{subfigure}
        \begin{subfigure}[b]{0.24\linewidth}
            \centering
            \caption*{10000 samples}
            \includegraphics[width=.9\linewidth]{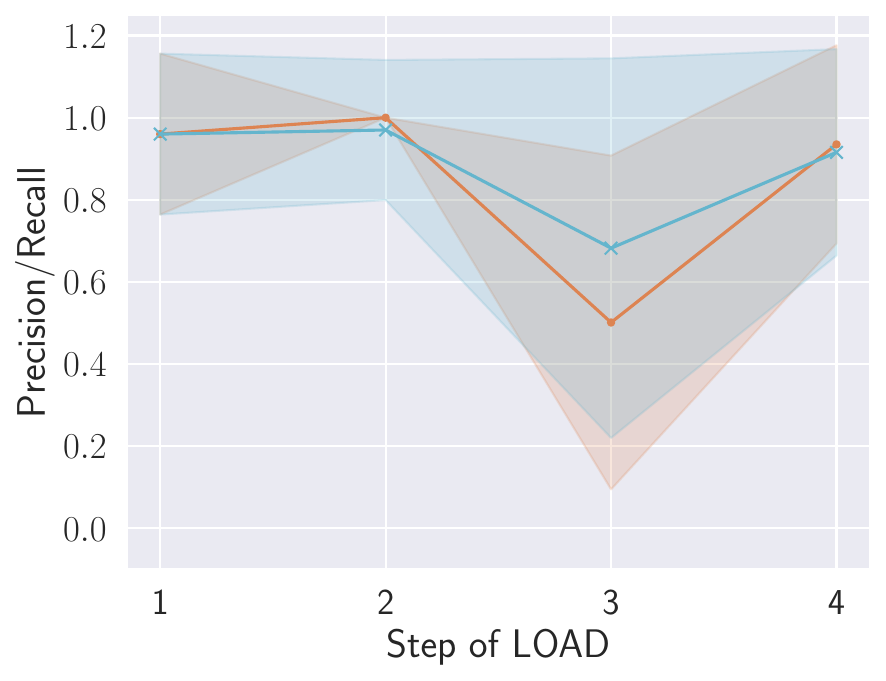}
        \end{subfigure}
    \end{subfigure}
    \caption{Precision and Recall of the different steps of LOAD in \Cref{alg:load} over different sample sizes, with Fisher-Z tests on linear Gaussian data, $n_{\mathbf{V}} = 200$, $\overline{d} = 2$ and $d_{\max} = 10$ and target pairs such that one is an explicit ancestor of the other.
    The shadow area denotes the range of the standard deviation.}
    \label{fig:steps}
\end{figure*}

\subsection{Alternative local causal discovery algorithms}
\label{sec:lcd_algorithm}
We implement LOAD in \Cref{alg:load} with the MB-by-MB local causal discovery subroutine.
In this section, we compare this implementation to an alternative where we replace MB-by-MB by the CMB local causal discovery algorithm \citep{gao2015local}, using the implementation of \citet{yu2020feature}.
Our comparison between the two implementations is shown in \Cref{fig:cmb}.
Our results show that while LOAD with MB-by-MB takes less time with d-separation and Fisher-Z CI tests and performs less CI tests in all data settings.
Similarly, CMB performs comparably on binary data, but achieves much worse F1 scores and intervention distances with Fisher-Z CI tests and even d-separation CI tests.
As the latter should not be possible, since CMB should be also sound and complete similarly to MB-by-MB, we assume there might be a bug in its implementation.

\begin{figure*}[ht!]
    \centering
    \begin{subfigure}[b]{\linewidth}
        \centering
        \includegraphics[width=.4\linewidth]{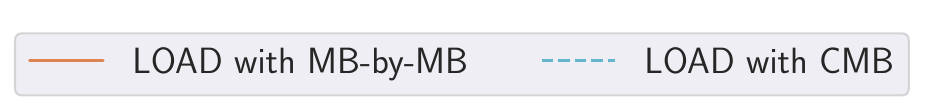}
    \end{subfigure}
    \begin{subfigure}[b]{.8\linewidth}
        \begin{subfigure}[b]{0.32\linewidth}
            \centering
            \caption*{ $\quad$ d-separation tests}
            \includegraphics[width=\linewidth]{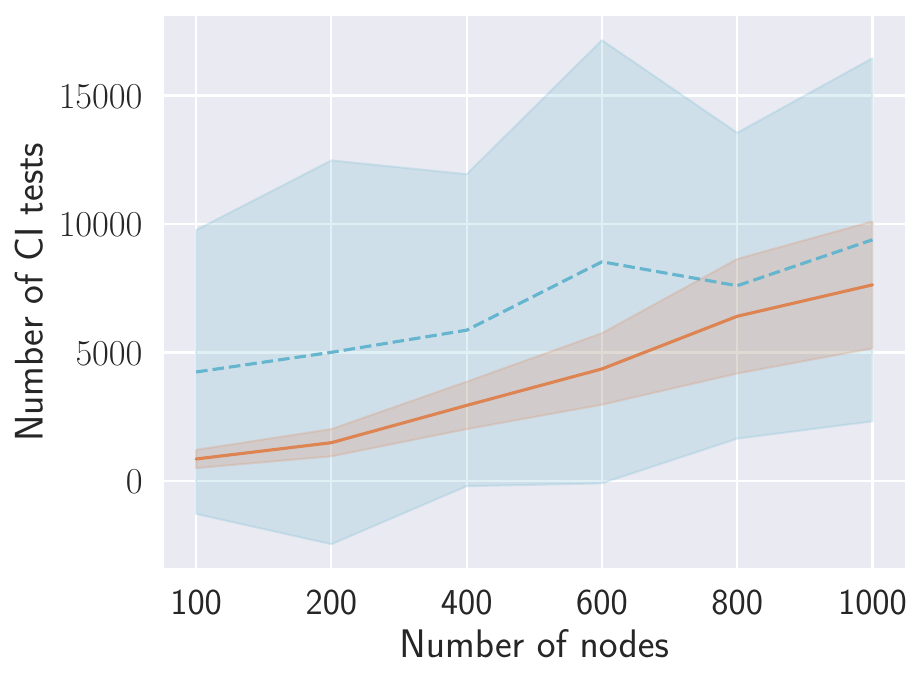}
            \includegraphics[width=\linewidth]{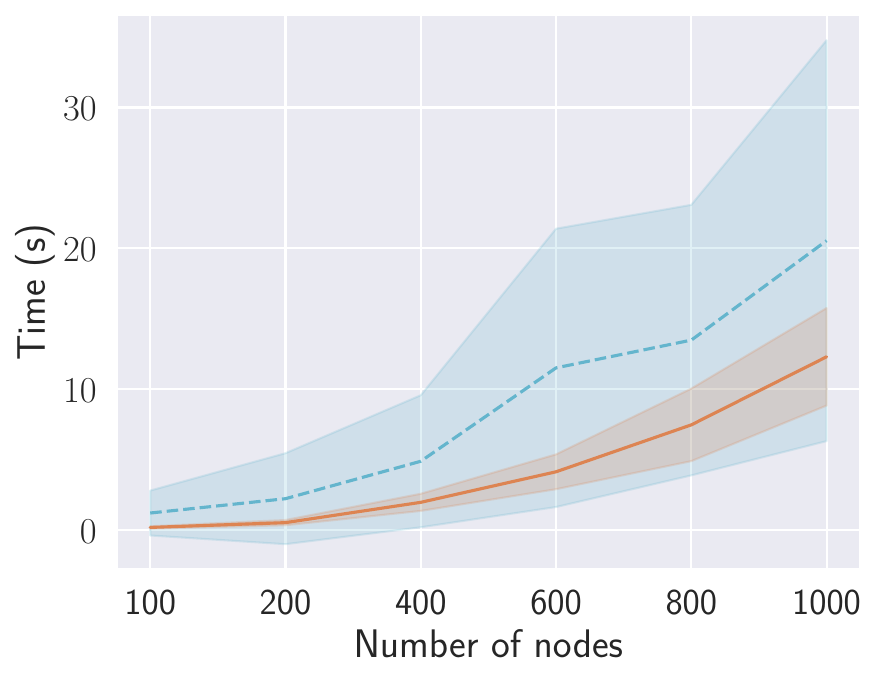}
            \includegraphics[width=\linewidth]{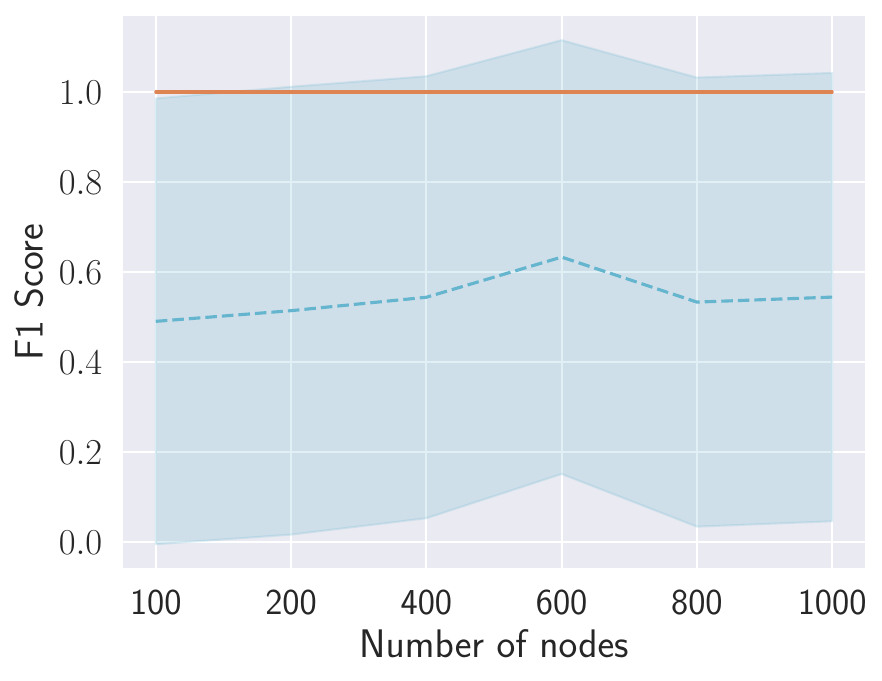}
            \includegraphics[width=\linewidth]{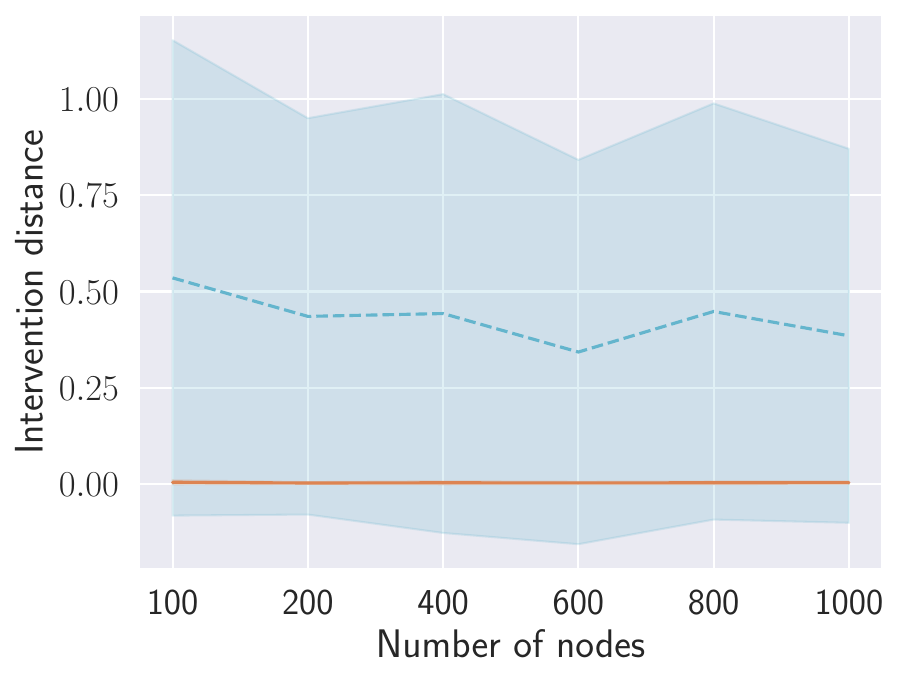}
        \end{subfigure}
        \begin{subfigure}[b]{0.32\linewidth}
            \centering
            \caption*{$\quad$ Fisher-Z tests}
            \includegraphics[width=\linewidth]{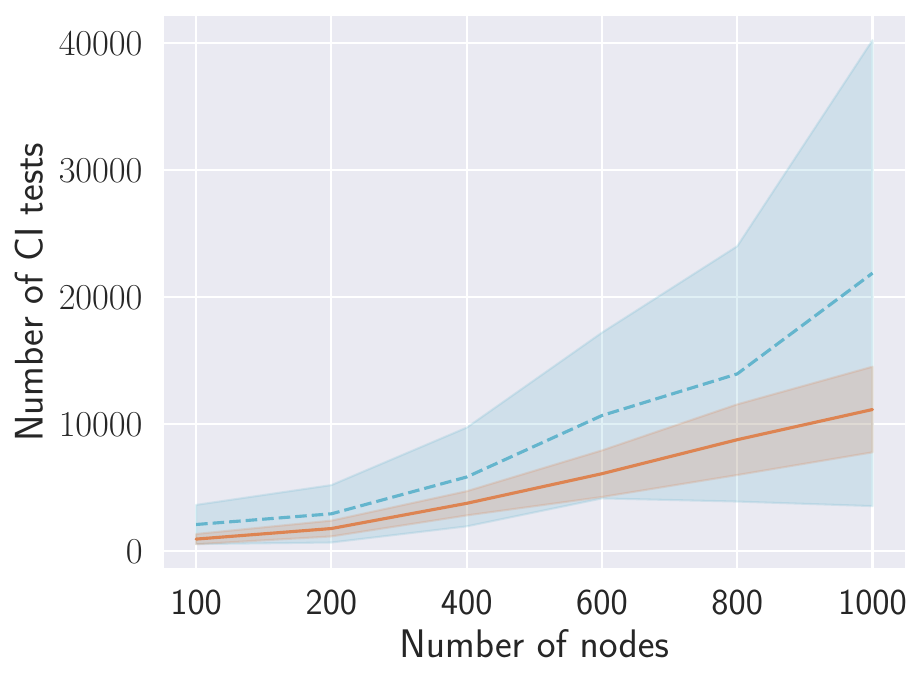}
            \includegraphics[width=\linewidth]{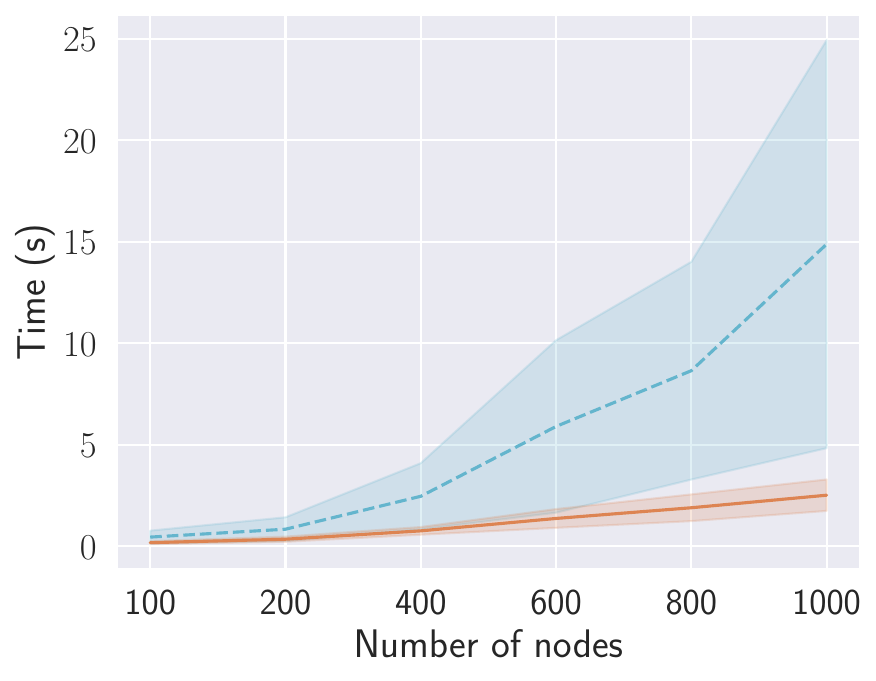}
            \includegraphics[width=\linewidth]{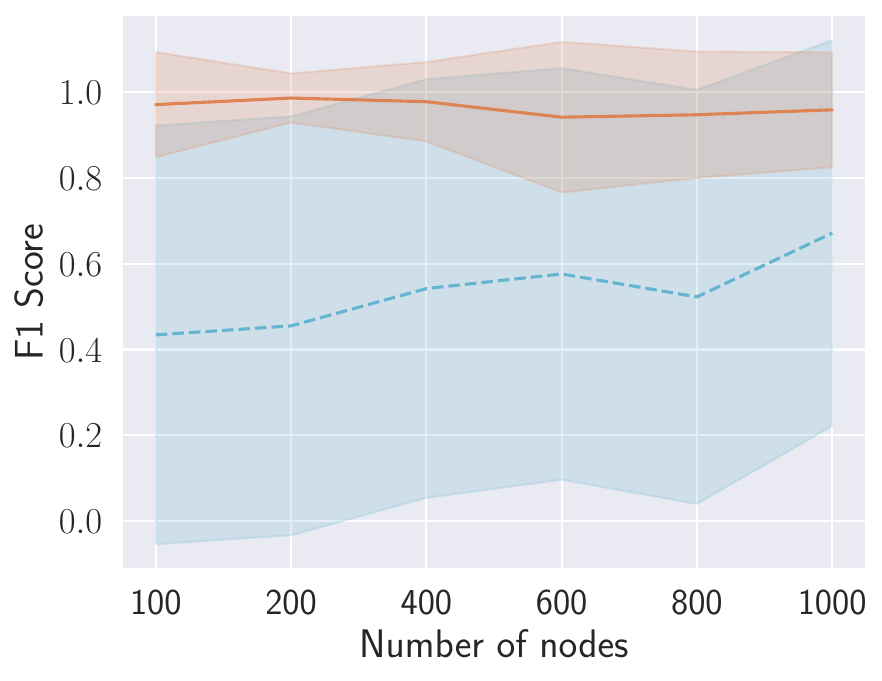}
            \includegraphics[width=\linewidth]{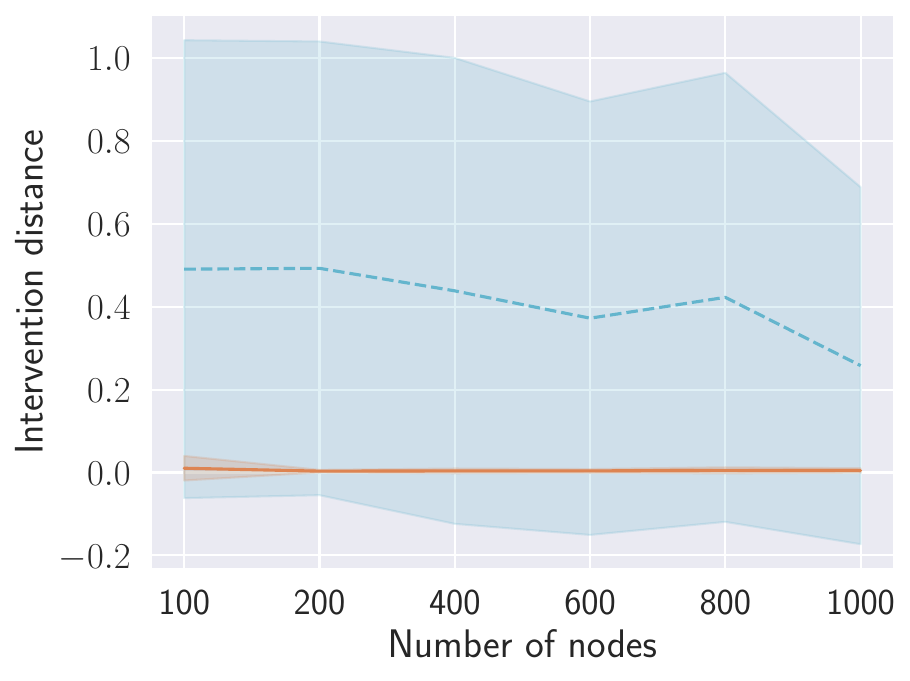}
        \end{subfigure}
        \begin{subfigure}[b]{0.32\linewidth}
            \centering
            \caption*{$G^2$ tests}
            \includegraphics[width=\linewidth]{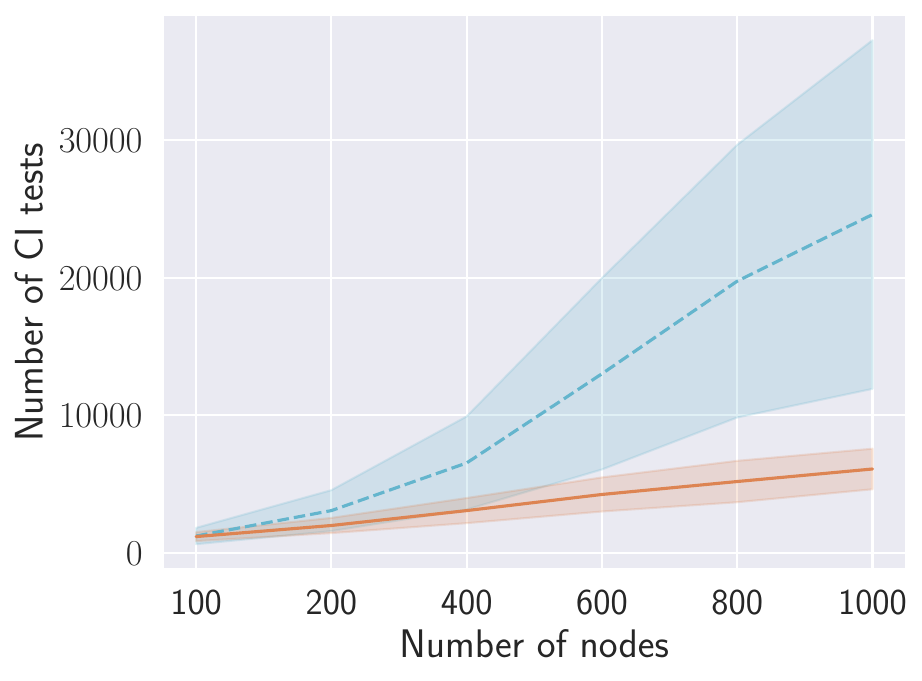}
            \includegraphics[width=\linewidth]{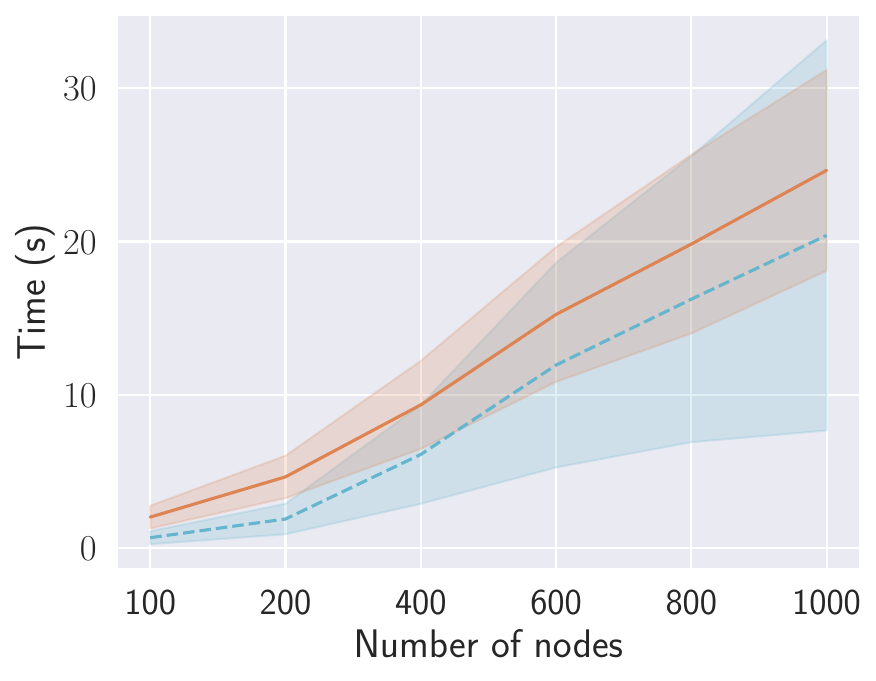}
            \includegraphics[width=\linewidth]{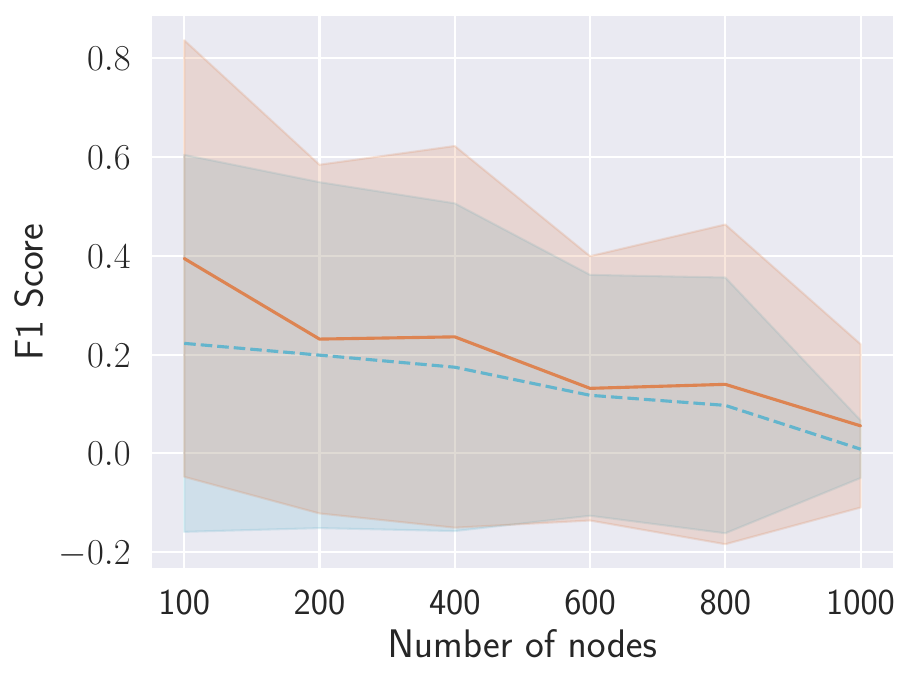}
            \includegraphics[width=\linewidth]{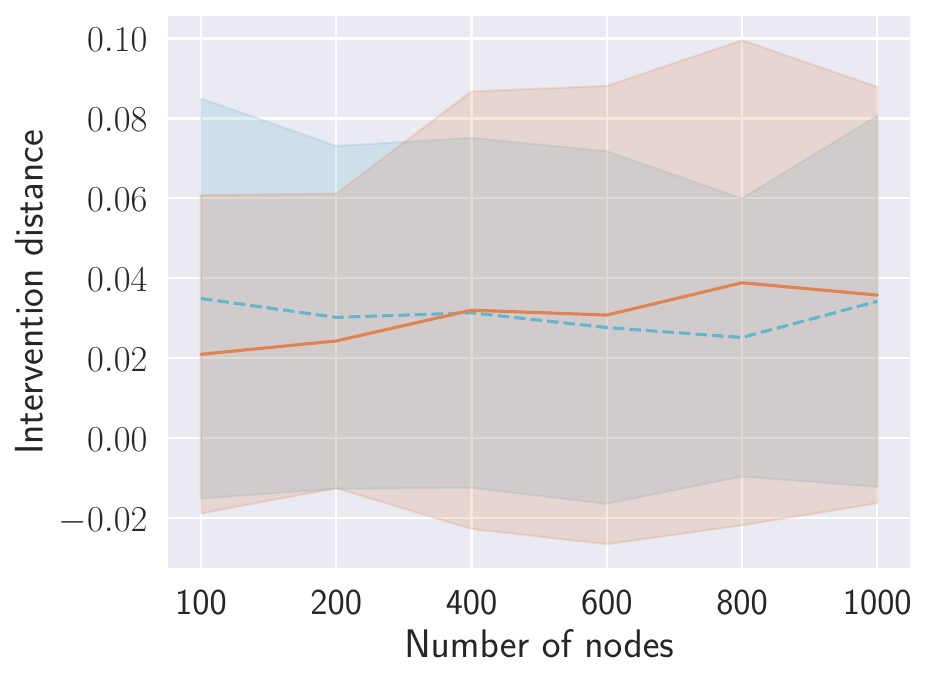}
        \end{subfigure}
    \end{subfigure}
    \caption{Results comparing LOAD using the MB-by-MB and the CMB algorithms for local causal discovery over various number of nodes, $n_{\mathbf{D}} = 10000$, $\overline{d} = 2$ and $d_{\max} = 10$ and target pairs such that one is an explicit ancestor of the other.
    The shadow area denotes the range of the standard deviation.}
    \label{fig:cmb}
\end{figure*}

\subsection{Alternative Markov blanket discovery algorithms}
\label{sec:mb_algorithm}

For the main results in \Cref{fig:main_results}, we implement LOAD using the MB-by-MB local causal discovery algorithm as a sub-routine, which employs the Grow-Shrink algorithm to find the Markov blankets of variables.
In this section, we compare utilizing Grow-Shrink to alternative Markov blanket discovery algorithm employed by MB-by-MB when running LOAD.

In \Cref{fig:total_conditioning}, we show results for replacing Grow-Shrink with Total Conditioning \citep{pellet2008using}.
While Total-Conditioning performs fewer CI tests and achieves comparable intervention distances, it requires much more computation time in finite data settings.
Furthermore, it completely fails to identify any nodes in the optimal adjustment sets correctly on binary data.

\begin{figure*}[ht!]
    \centering
    \begin{subfigure}[b]{\linewidth}
        \centering
        \includegraphics[width=.5\linewidth]{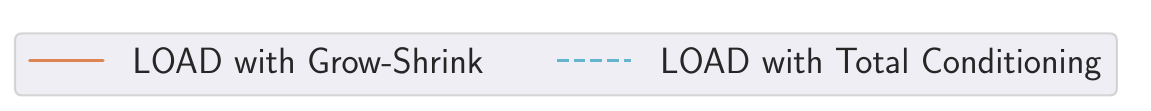}
    \end{subfigure}
    \begin{subfigure}[b]{.8\linewidth}
        \begin{subfigure}[b]{0.32\linewidth}
            \centering
            \caption*{ $\quad$ d-separation tests}
            \includegraphics[width=\linewidth]{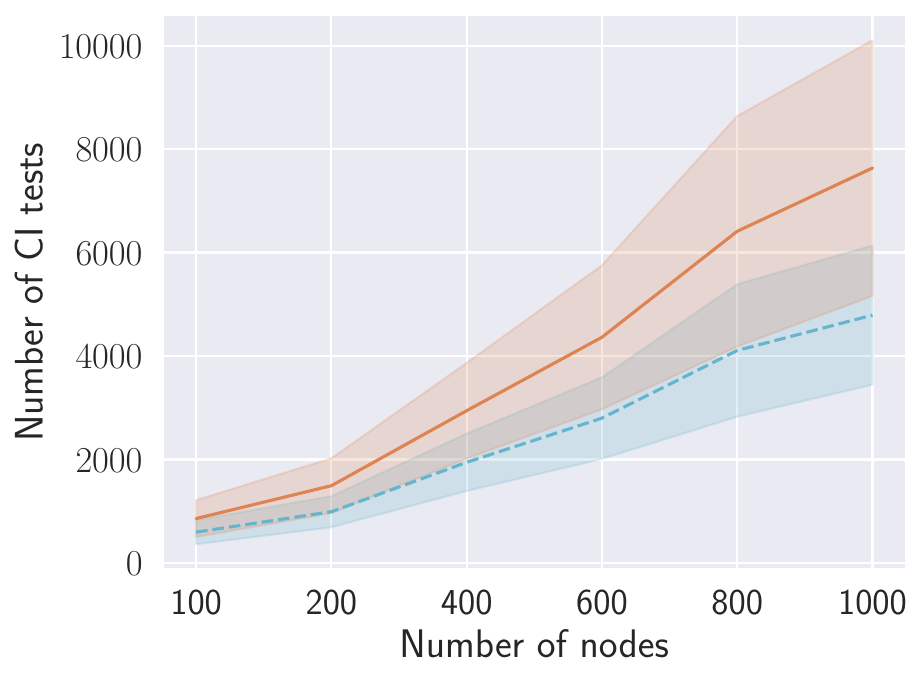}
            \includegraphics[width=\linewidth]{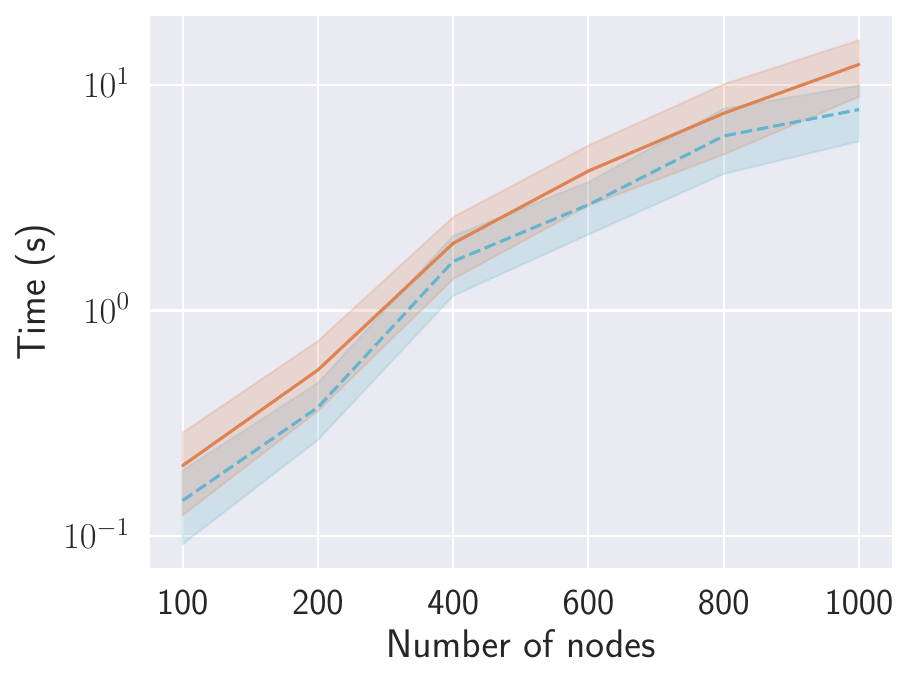}
            \includegraphics[width=\linewidth]{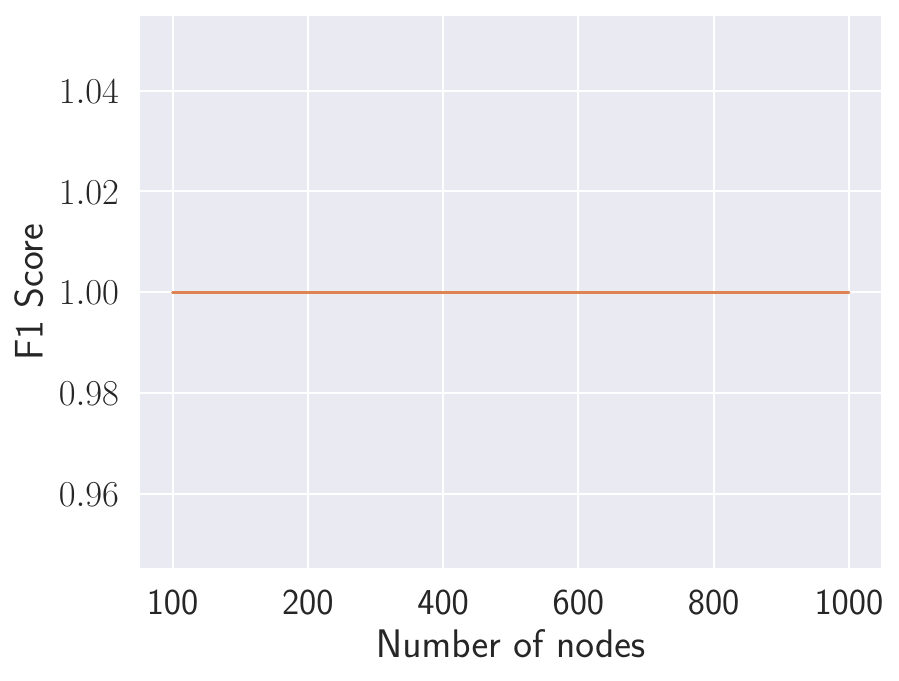}
            \includegraphics[width=\linewidth]{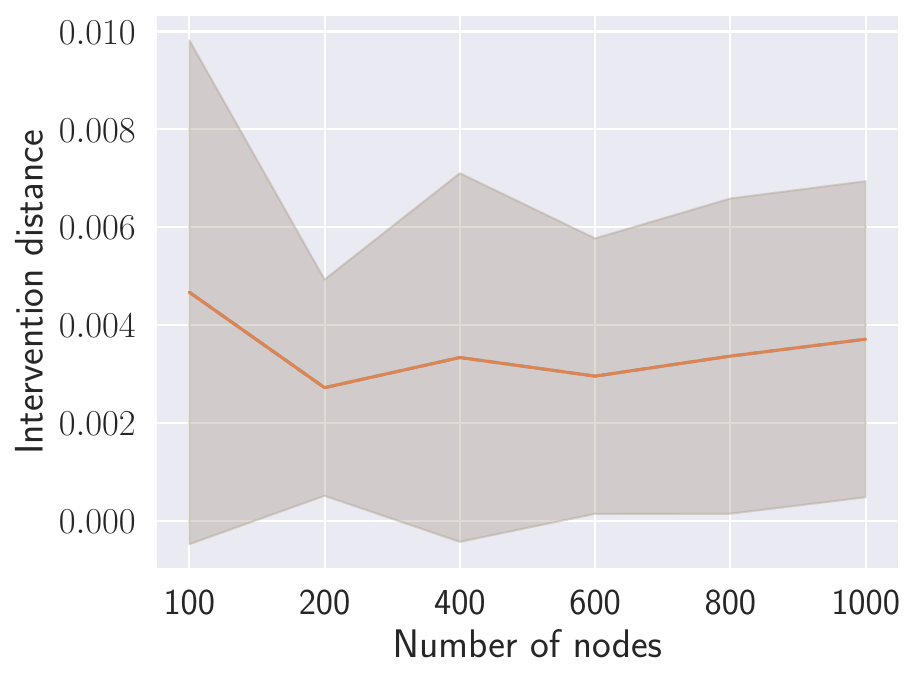}
        \end{subfigure}
        \begin{subfigure}[b]{0.32\linewidth}
            \centering
            \caption*{$\quad$ Fisher-Z tests}
            \includegraphics[width=\linewidth]{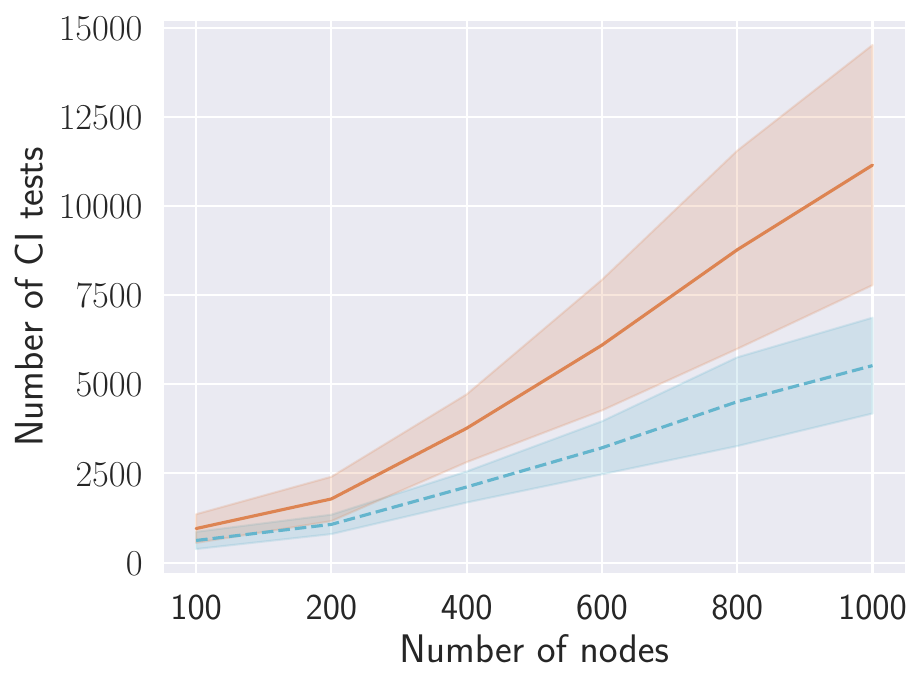}
            \includegraphics[width=\linewidth]{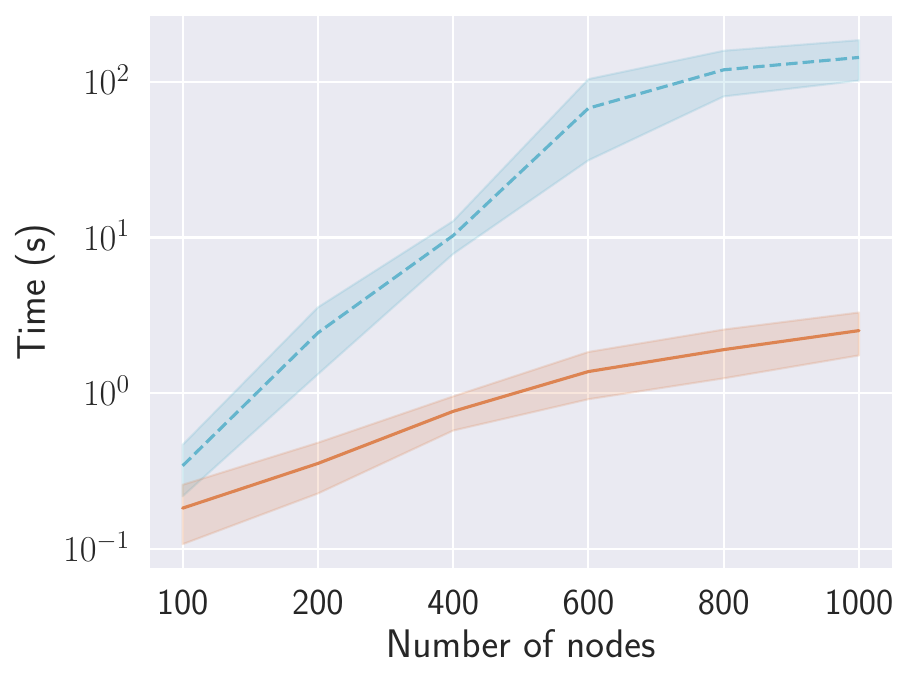}
            \includegraphics[width=\linewidth]{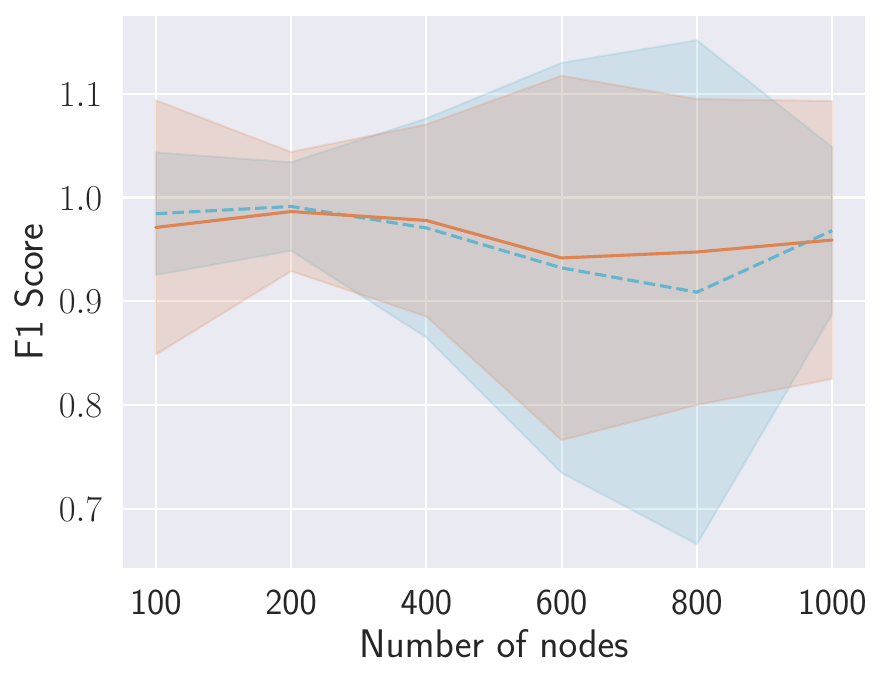}
            \includegraphics[width=\linewidth]{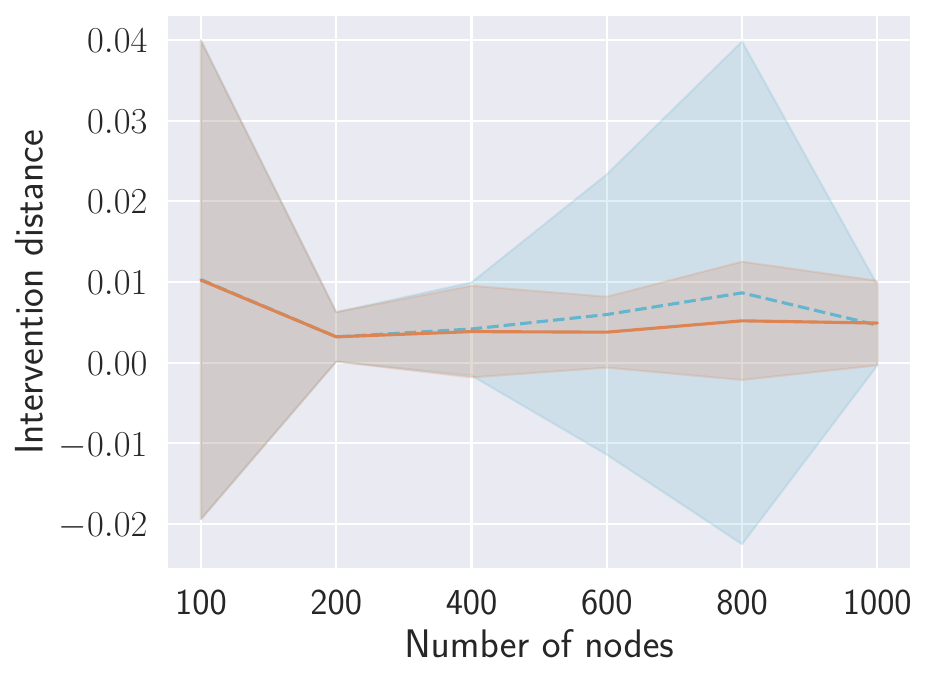}
        \end{subfigure}
        \begin{subfigure}[b]{0.32\linewidth}
            \centering
            \caption*{$G^2$ tests}
            \includegraphics[width=\linewidth]{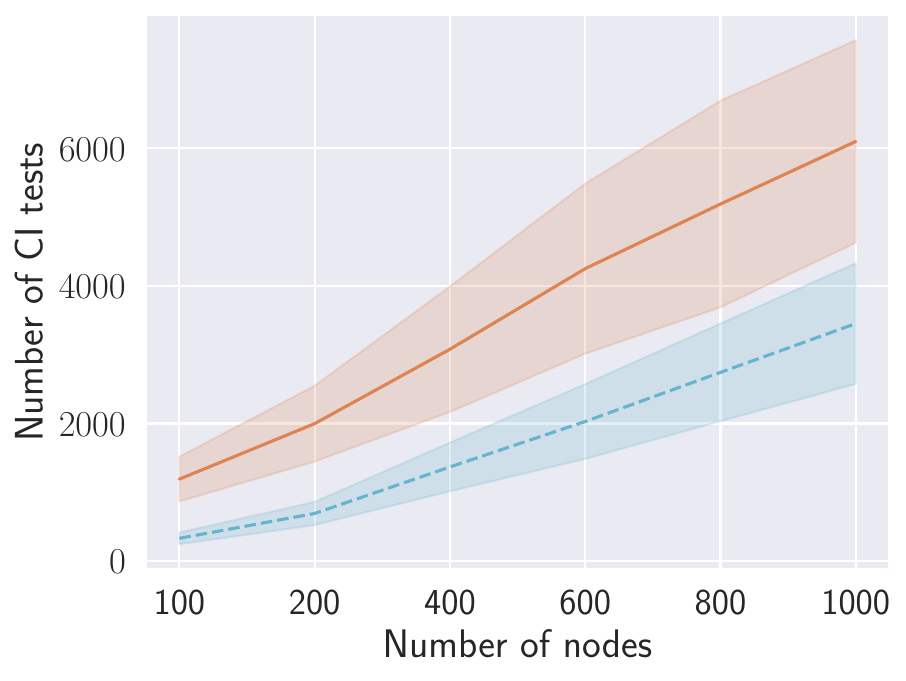}
            \includegraphics[width=\linewidth]{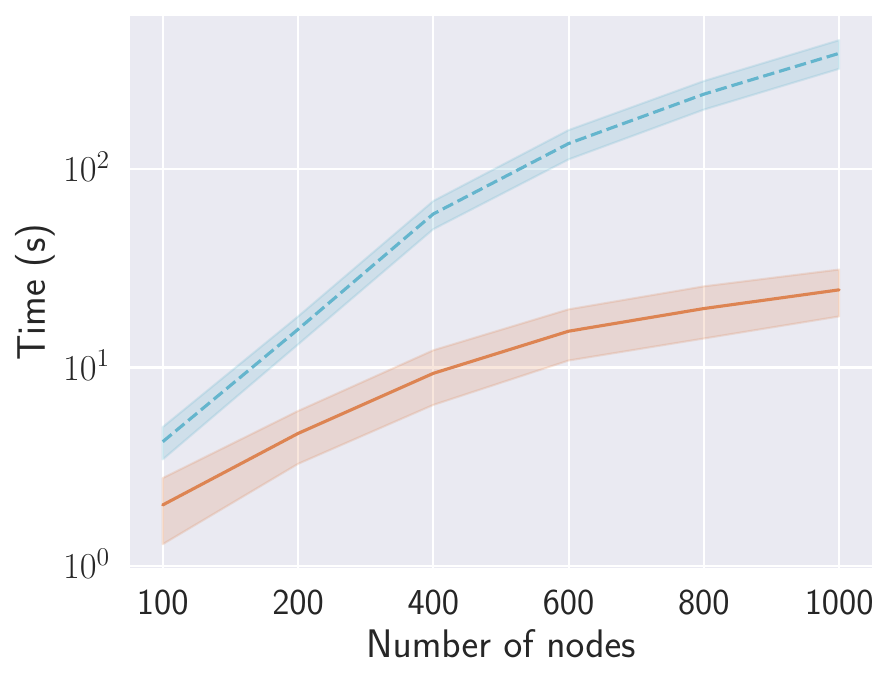}
            \includegraphics[width=\linewidth]{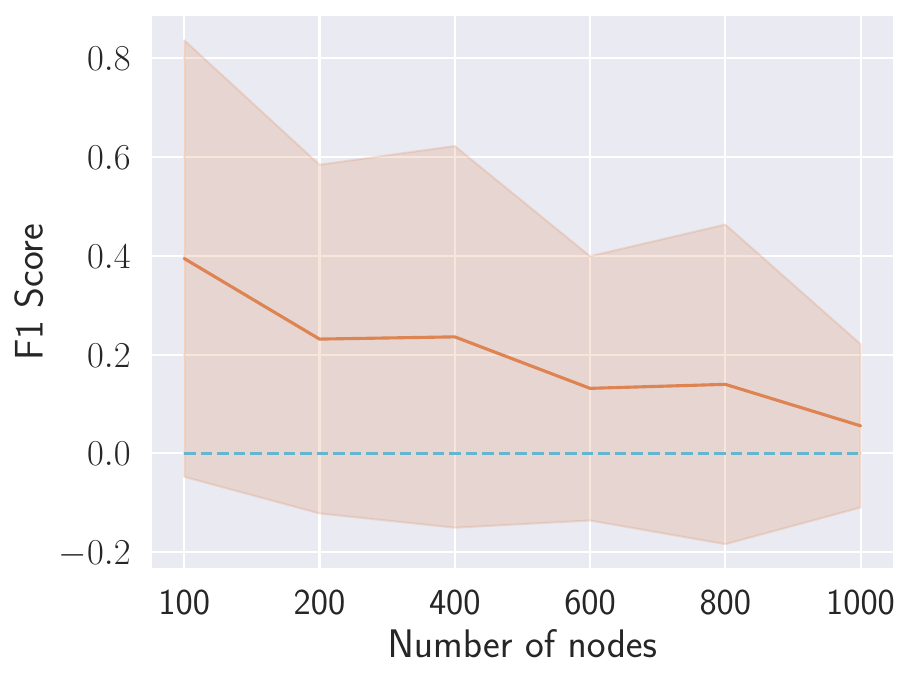}
            \includegraphics[width=\linewidth]{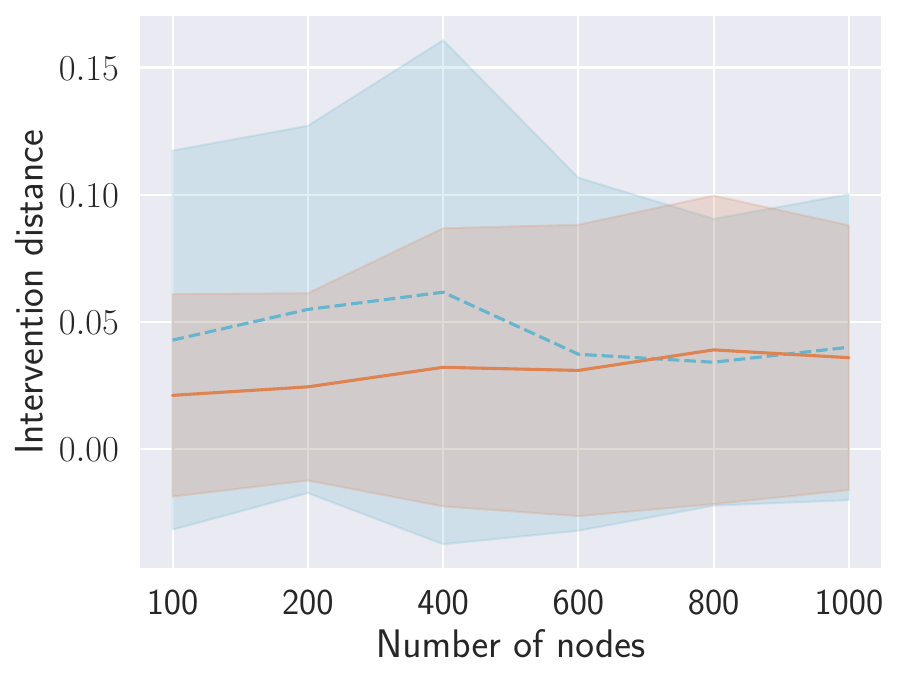}
        \end{subfigure}
    \end{subfigure}
    \caption{Results comparing LOAD using the Grow-Shrink and the Total-Conditioning algorithms for Markov blanket discovery over various number of nodes, $n_{\mathbf{D}} = 10000$, $\overline{d} = 2$ and $d_{\max} = 10$ and target pairs such that one is an explicit ancestor of the other.
    The shadow area denotes the range of the standard deviation.}
    \label{fig:total_conditioning}
\end{figure*}

In \Cref{fig:s2tmb}, we compare to the S$^2$TMB score based Markov blanket discovery algorithm \citep{GAO2017277}.
We use the implementation of \citet{yu2020feature}, which can only handle discrete data.
Thus, we use CI tests on binary data with 10, 15 and 20 nodes under the same other settings as \Cref{fig:main_results}.
Our results show that while S$^2$TMB performs fewer CI tests than when using Grow-Shrink, it takes much more time.
Furthermore, it achieves a worse F1 scores and intervention distances across all graph sizes.

\begin{figure*}[ht!]
    \centering
    \begin{subfigure}[b]{\linewidth}
        \centering
        \includegraphics[width=.4\linewidth]{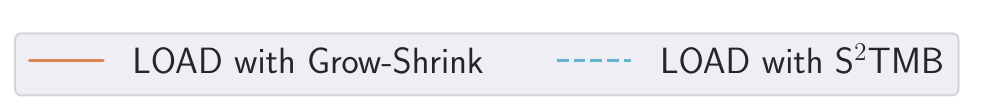}
    \end{subfigure}
        \includegraphics[width=.24\linewidth]{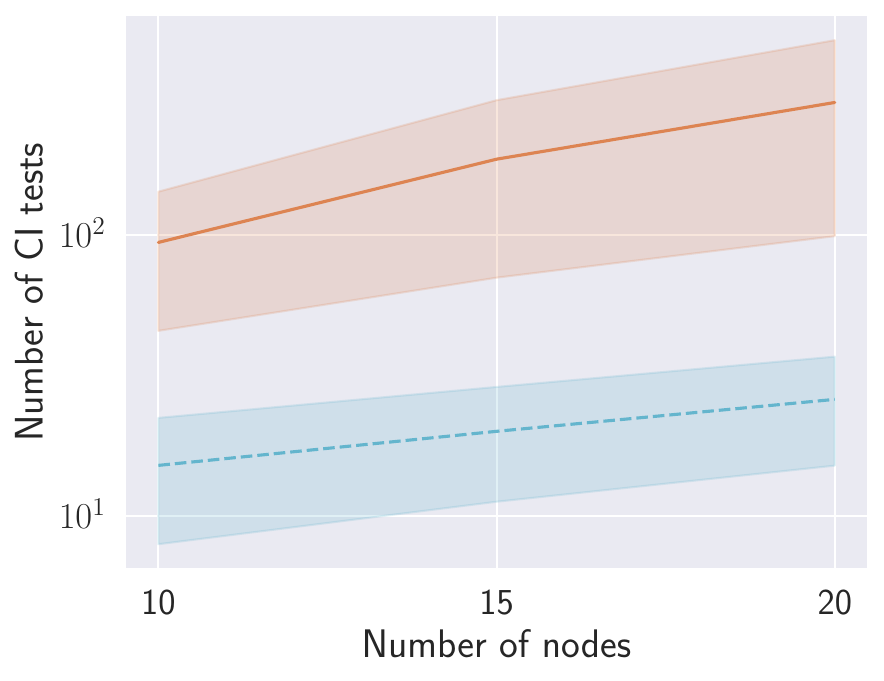}
        \includegraphics[width=.24\linewidth]{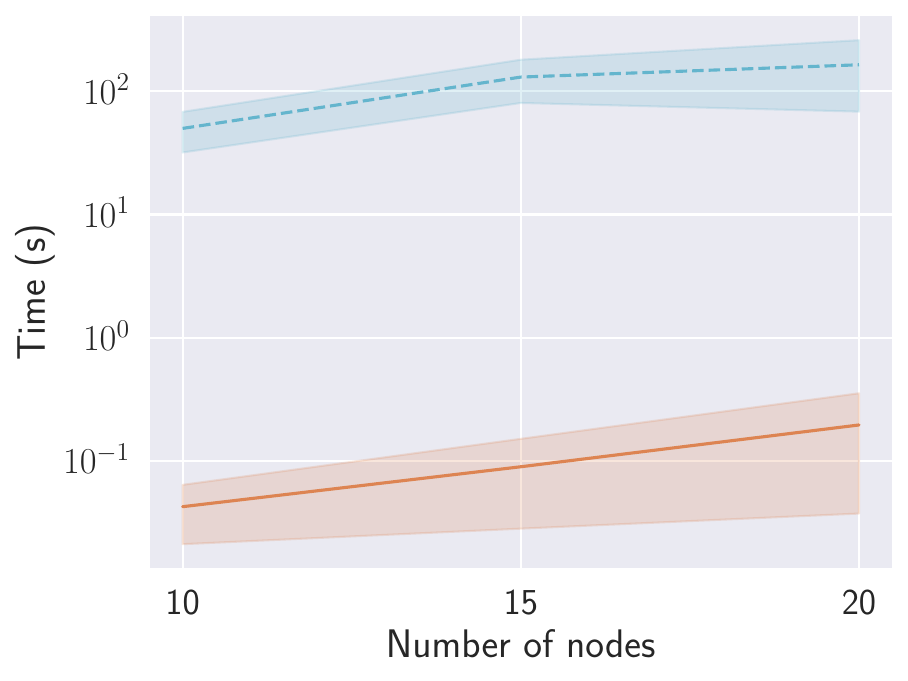}
        \includegraphics[width=.24\linewidth]{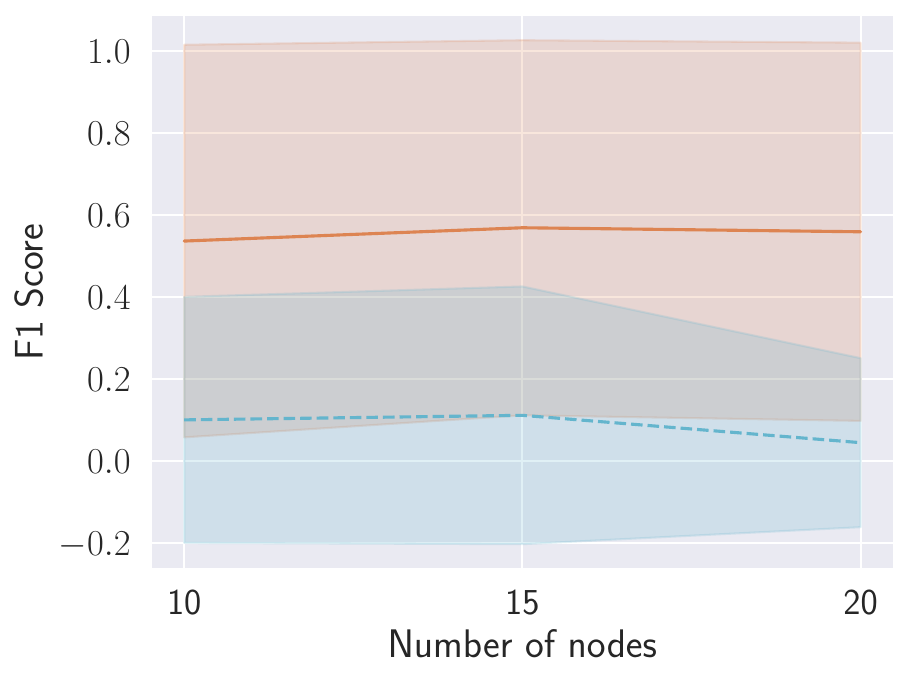}
        \includegraphics[width=.24\linewidth]{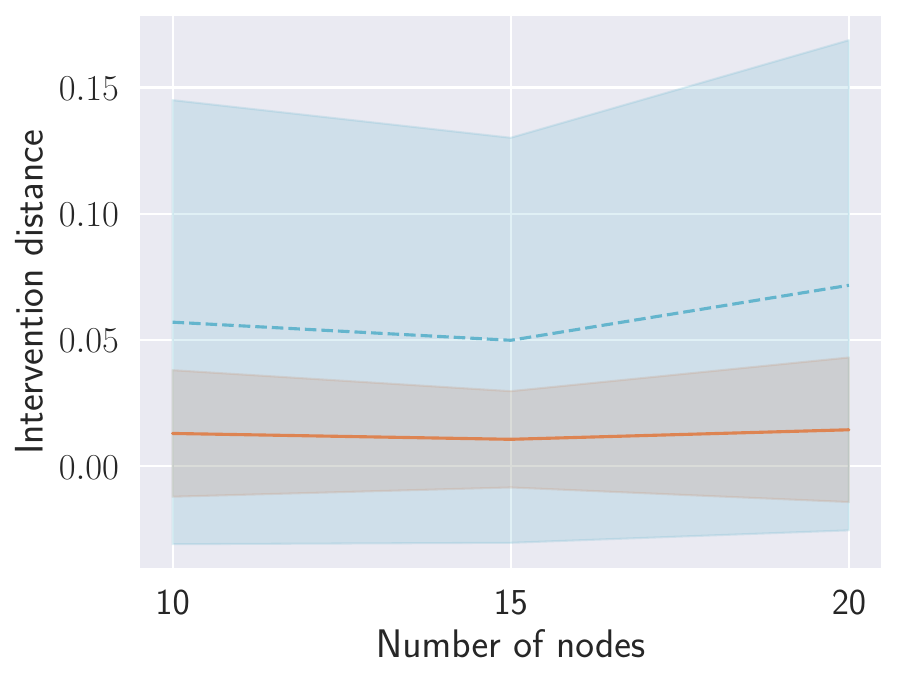}
    \caption{Results comparing LOAD using the Grow-Shrink and the S$^2$TMB algorithms for Markov blanket discovery on binary data sampled from graphs over various number of nodes, $n_{\mathbf{D}} = 10000$, $\overline{d} = 2$ and $d_{\max} = 10$ and target pairs such that one is an explicit ancestor of the other.
    The shadow area denotes the range of the standard deviation.}
    \label{fig:s2tmb}
\end{figure*}

\subsection{Providing Treatment-Outcome Relation as Background Knowledge}
\label{sec:bk_experiments}

In this section we evaluate the scenario where the treatment-outcome relationship is known by background knowledge.
This scenario fits the requirements of most local causal discovery algorithms in literature, such as MB-by-MB \citep{wang2014discovering}, LDECC \citep{gupta2023local} and LDP \citep{maasch2024local}.
Thus, for the experiments in this section, we run the original versions of these algorithms with the background knowledge provided.
Furthermore, we introduce a version of LOAD that can also utilize this background knowledge, denoted as LOAD$^*$, which simply skips the first step of LOAD for determining ancestral relationships.
Global algorithms, such as PC, MARVEL and SNAP remain unchanged.

Otherwise, we use the same data setup as in the main paper, i.e., $n_{\mathbf{D}} = 10000$ data samples, expected degree of $\overline{d} = 2$ and maximum degree of $d_{\max} = 10$, and use a significance level of $\alpha = 0.01$ for all algorithms and CI tests.

Our results are shown in \Cref{fig:bk_results}.
Since LDECC only has to run once on the provided treatment instead of both targets, it was possible to evaluate it on binary data using $G^2$ CI tests.

Results for the number of CI tests time and intervention distance are similar to the case where the background knowledge is not provided.
The number of CI tests and computation time increases steadily with the number of variables for all methods under all settings, except for LDECC where they decrease on binary data using $G^2$ tests.
Since this is not the case for LDECC with d-separation or Fisher-Z on linear Gaussian data, or for any other other method on binary data, we suspect that this arises due to errors in CI testing performed by LDECC.
In general, when the treatment-outcome relationship is known, LDP is the cheapest in terms of CI tests and computation time.

On binary data with 100-400 nodes, LOAD$^*$ is able to outperform PC in terms of F1 score of the estimated Oset, and remains comparable throughout all number of nodes.

\begin{figure*}[ht]
    \centering
    \begin{subfigure}[b]{\linewidth}
        \centering
        \includegraphics[width=.8\linewidth]{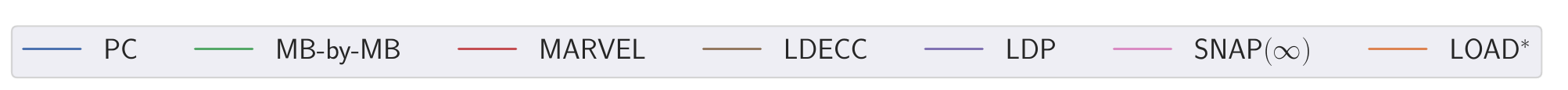}
    \end{subfigure}
    \begin{subfigure}[b]{.8\linewidth}
        \begin{subfigure}[b]{0.32\linewidth}
            \centering
            \caption*{ $\quad$ d-separation tests}
            \includegraphics[width=\linewidth]{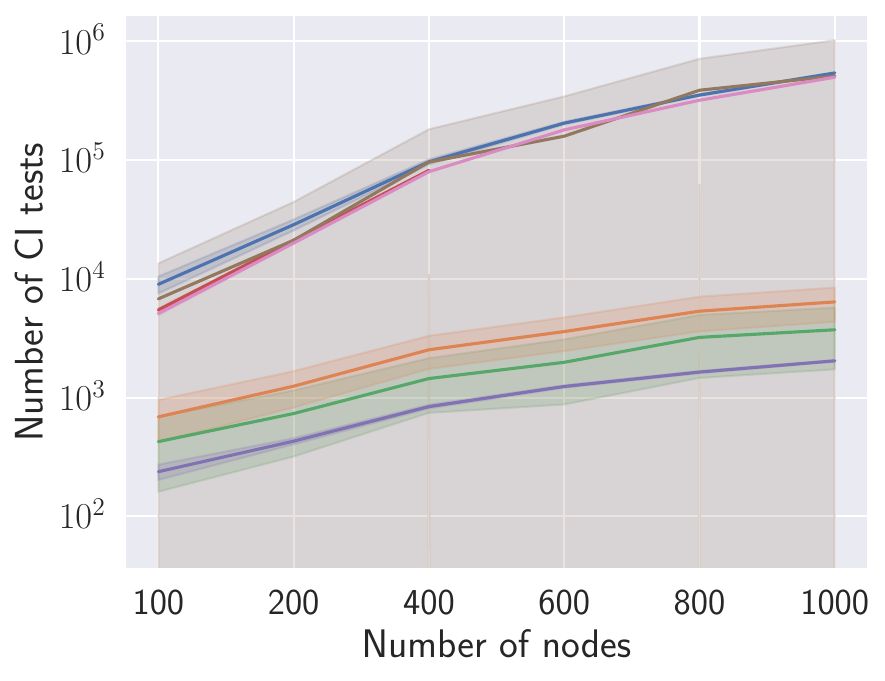}
            \includegraphics[width=\linewidth]{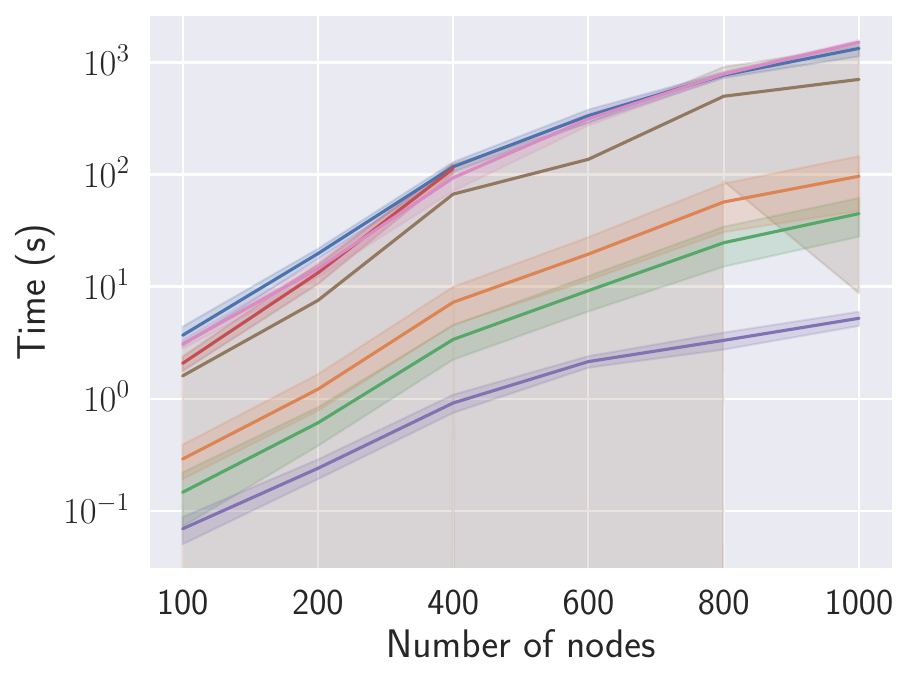}
            \includegraphics[width=\linewidth]{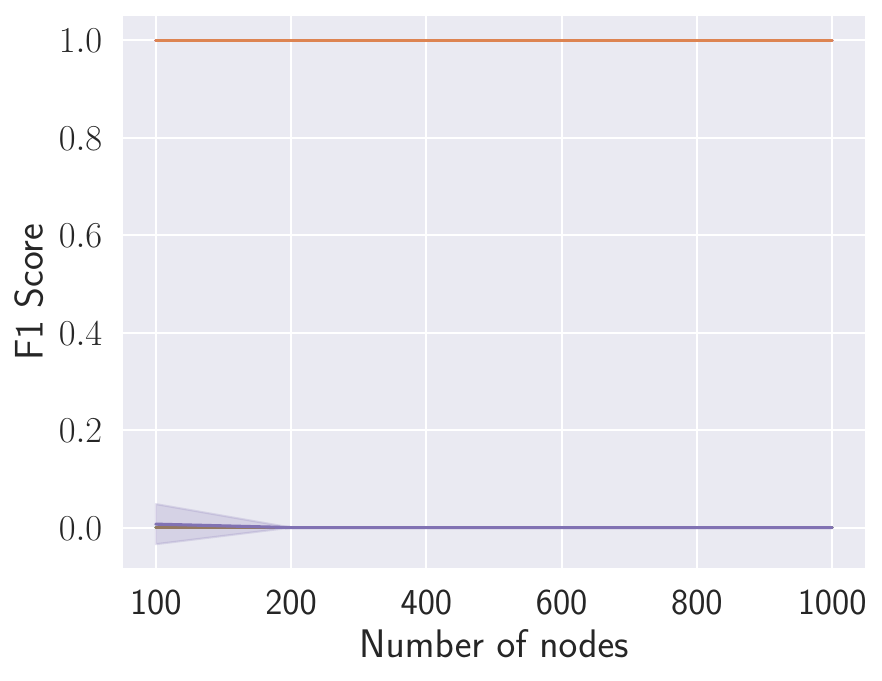}
            \includegraphics[width=\linewidth]{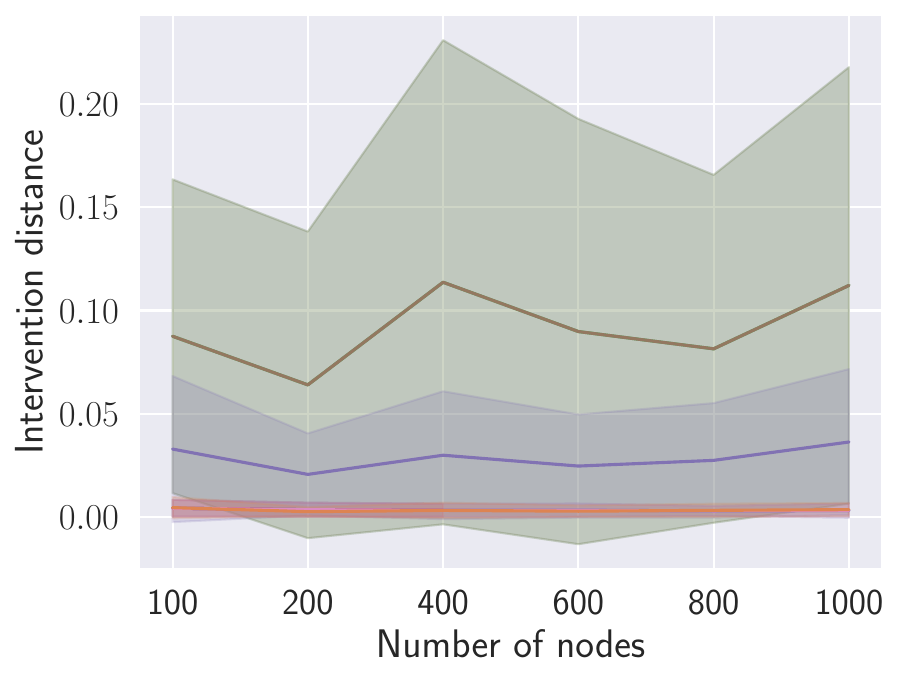}
        \end{subfigure}
        \begin{subfigure}[b]{0.32\linewidth}
            \centering
            \caption*{$\quad$ Fisher-Z tests}
            \includegraphics[width=\linewidth]{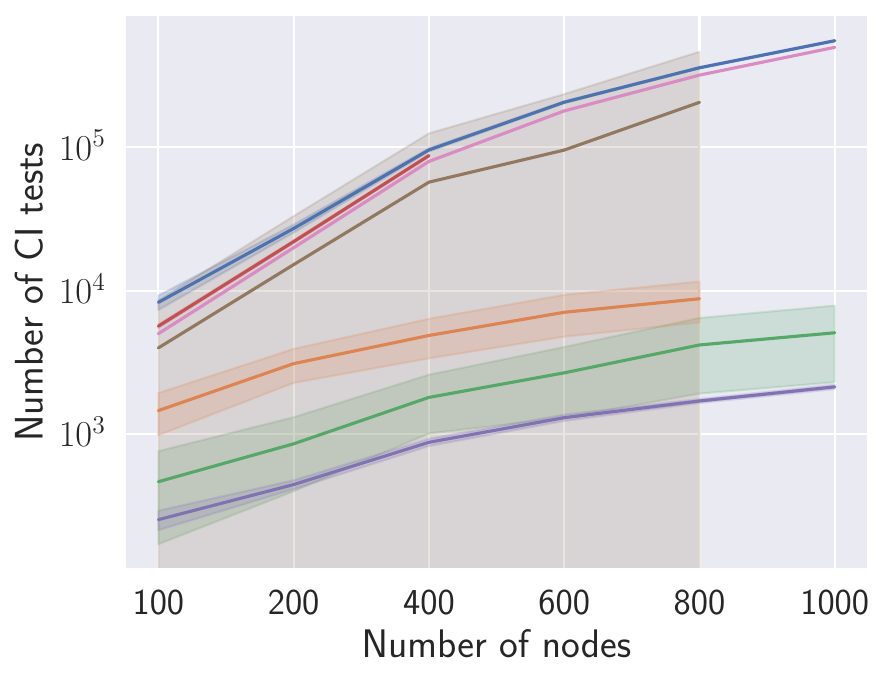}
            \includegraphics[width=\linewidth]{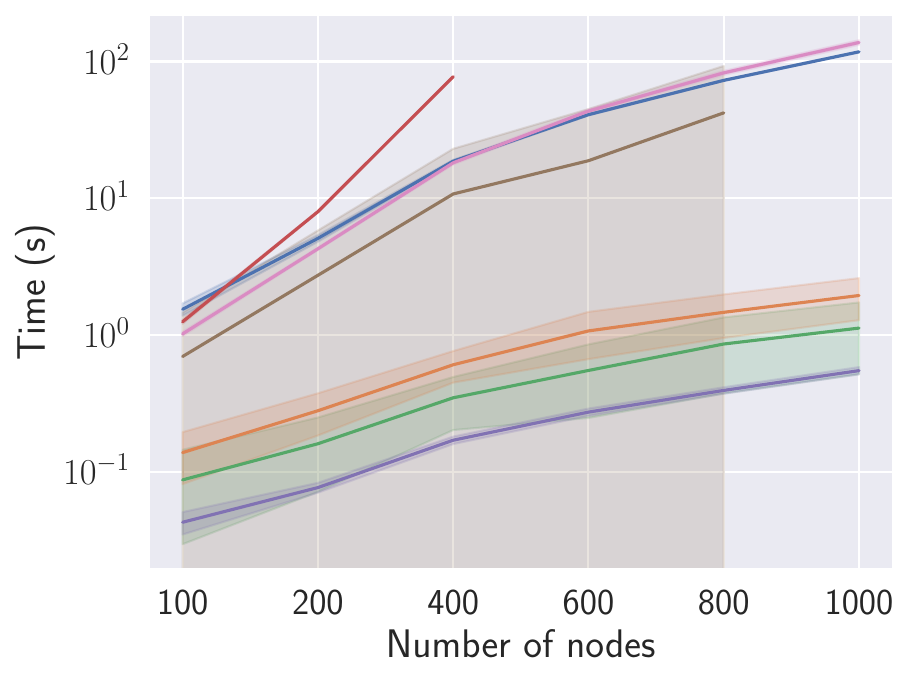}
            \includegraphics[width=\linewidth]{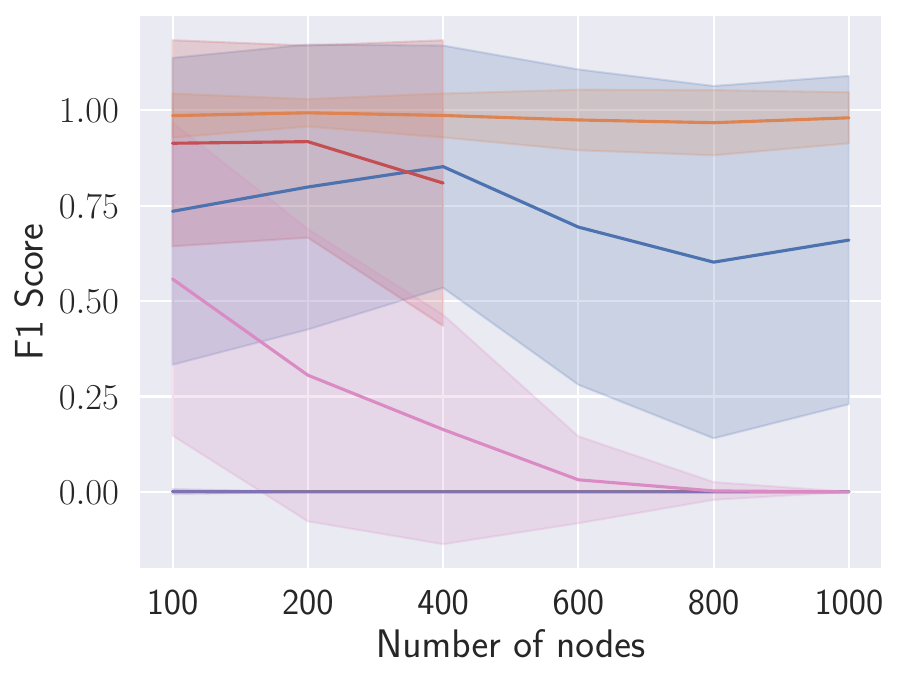}
            \includegraphics[width=\linewidth]{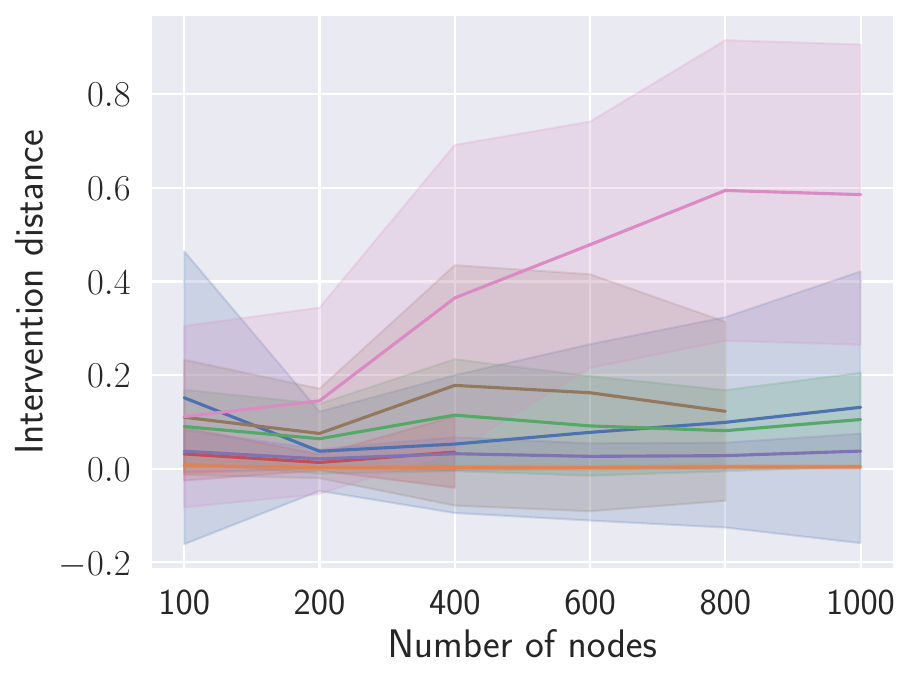}
        \end{subfigure}
        \begin{subfigure}[b]{0.32\linewidth}
            \centering
            \caption*{$G^2$ tests}
            \includegraphics[width=\linewidth]{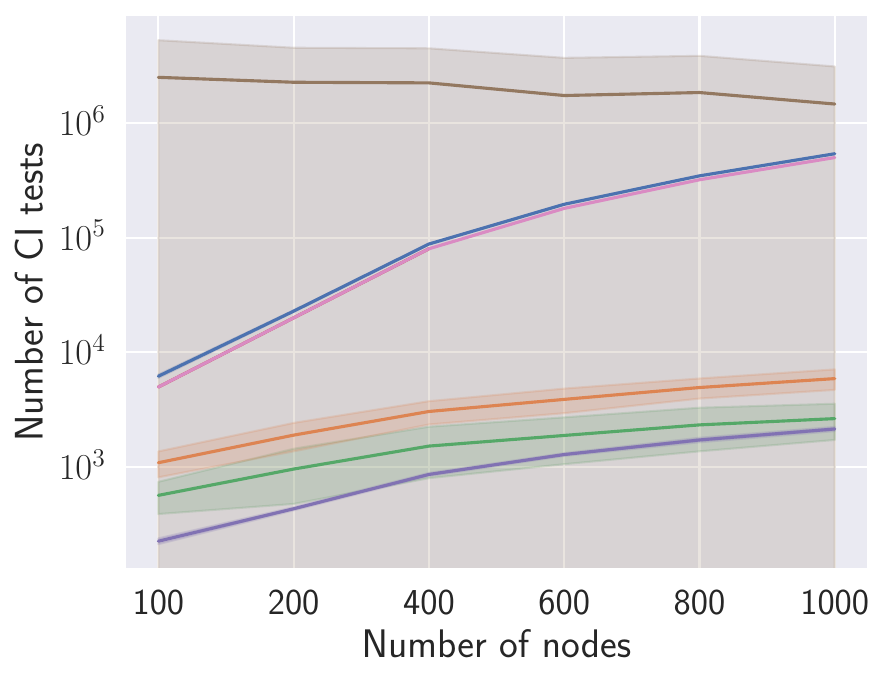}
            \includegraphics[width=\linewidth]{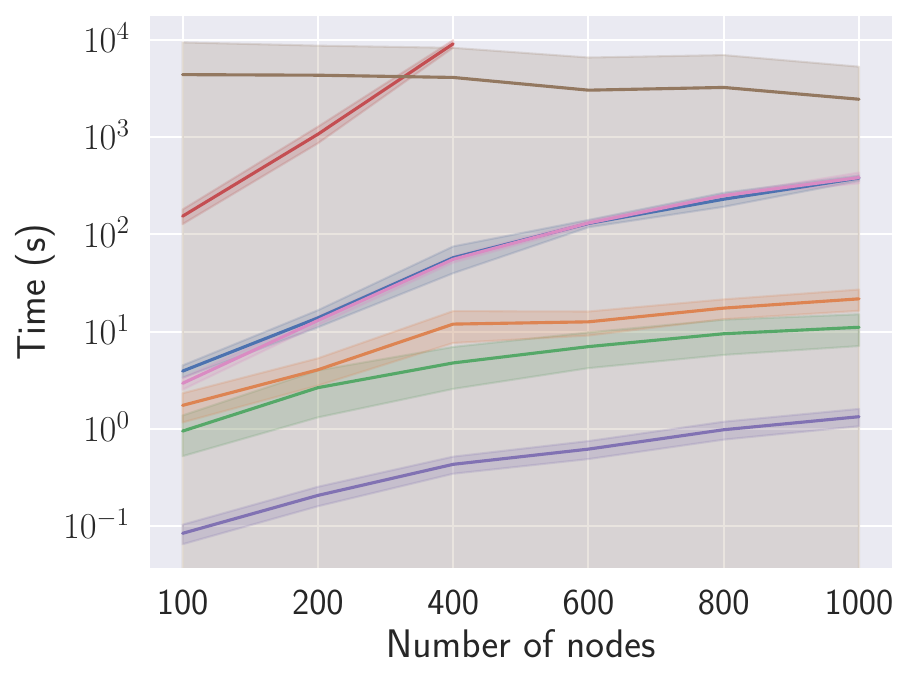}
            \includegraphics[width=\linewidth]{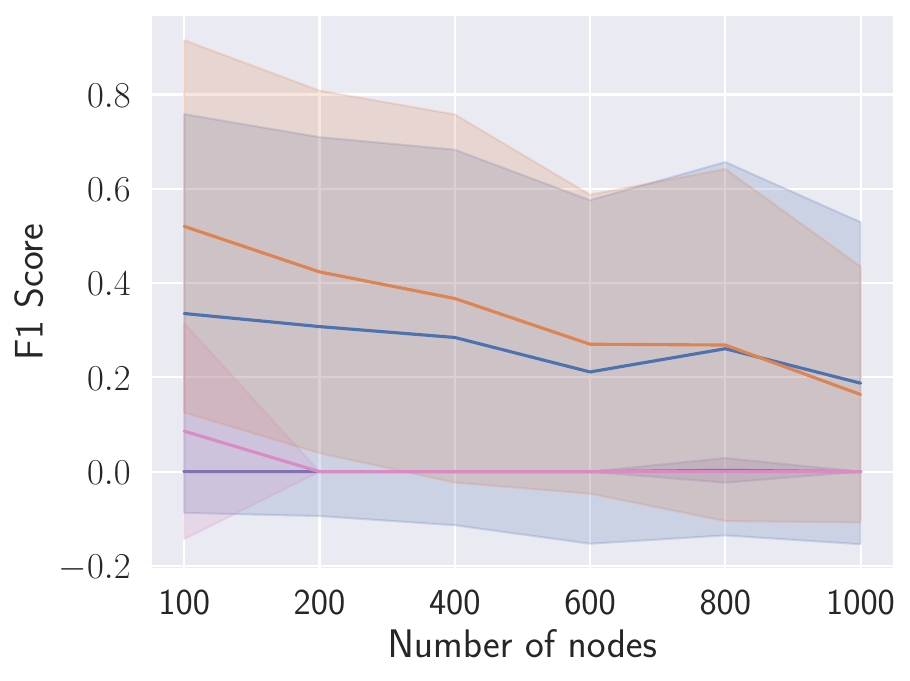}
            \includegraphics[width=\linewidth]{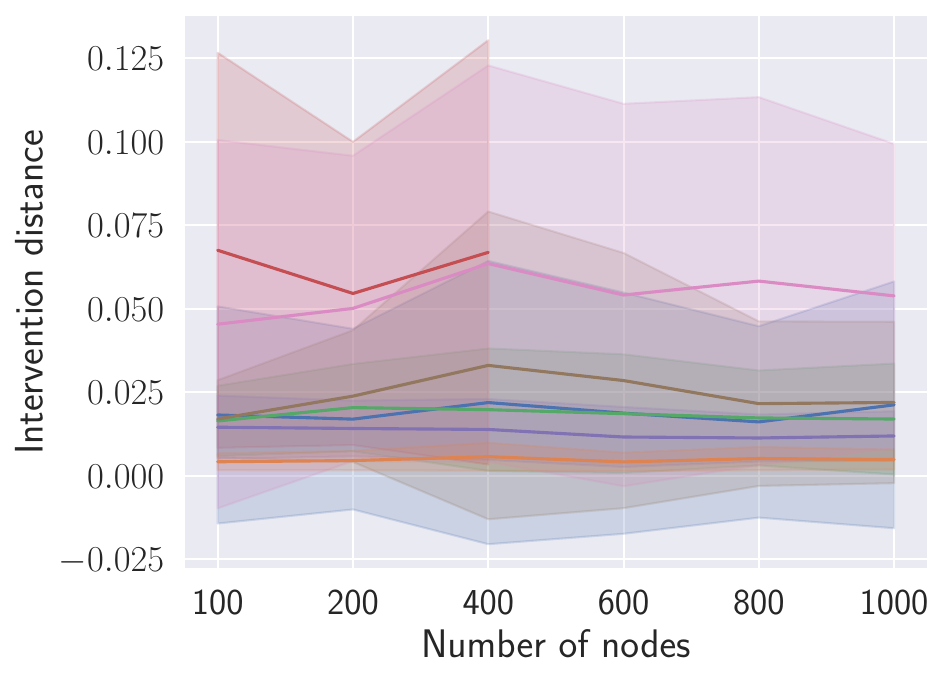}
        \end{subfigure}
    \end{subfigure}
    \caption{Results over number of nodes with $n_{\mathbf{D}} = 10000$, $\overline{d} = 2$ and $d_{\max} = 10$ for the scenario where the treatment-outcome relation is provided as background knowledge for MB-by-MB, LDECC, LDP and LOAD$^*$. The shadow area denotes the range of the standard deviation.}
    \label{fig:bk_results}
\end{figure*}

\subsection{Identifiable Causal Effects}
\label{sec:identifiable_experiments}

Sampling target pairs where one is an explicit ancestor of the other, but its causal effect is not necessarily identifiable often leads to unidentifiable causal effects, making LOAD return early after step 2.
To evaluate the scenario where LOAD should  execute all of its steps, we evaluate all algorithms on target pairs where we ensure that the causal effect is identifiable.

In this setting, we only experiment with node numbers $[100, 200, 300, 400, 500]$ because we encountered time and memory issues when generating the data for larger number of nodes.
In particular, without parallelization, each seed for 500 nodes take around one hour and 15 minutes, and we average results over a 100 seeds.
With larger nodes, parallelization becomes difficult due to the size of the datasets.
Otherwise, we use the same data setup as in the main paper, i.e., $n_{\mathbf{D}} = 10000$ data samples, expected degree of $\overline{d} = 2$ and maximum degree of $d_{\max} = 10$, and use a significance level of $\alpha = 0.01$ for all algorithms and CI tests.

Results for identifiable target pairs are shown in \Cref{fig:ident_results}.
Similarly to the main results report in \Cref{fig:main_results}, the number of CI tests performed by LOAD is similar to local methods, and its computation time is even better than LDP.
The F1 score for optimal adjustment set recovery and the intervention distance of LOAD is one of the best across all settings, while all other baselines fall behind in some setting for one of these metrics.

\begin{figure*}[ht]
    \centering
    \begin{subfigure}[b]{\linewidth}
        \centering
        \includegraphics[width=.8\linewidth]{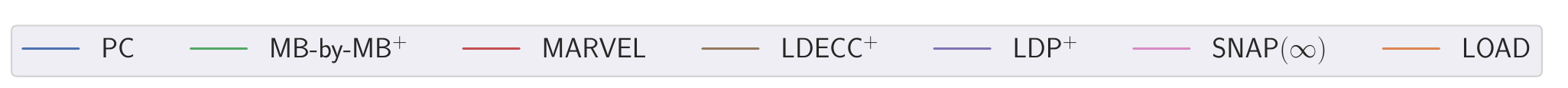}
    \end{subfigure}
    \begin{subfigure}[b]{.8\linewidth}
        \begin{subfigure}[b]{0.32\linewidth}
            \centering
            \caption*{ $\quad$ d-separation tests}
            \includegraphics[width=\linewidth]{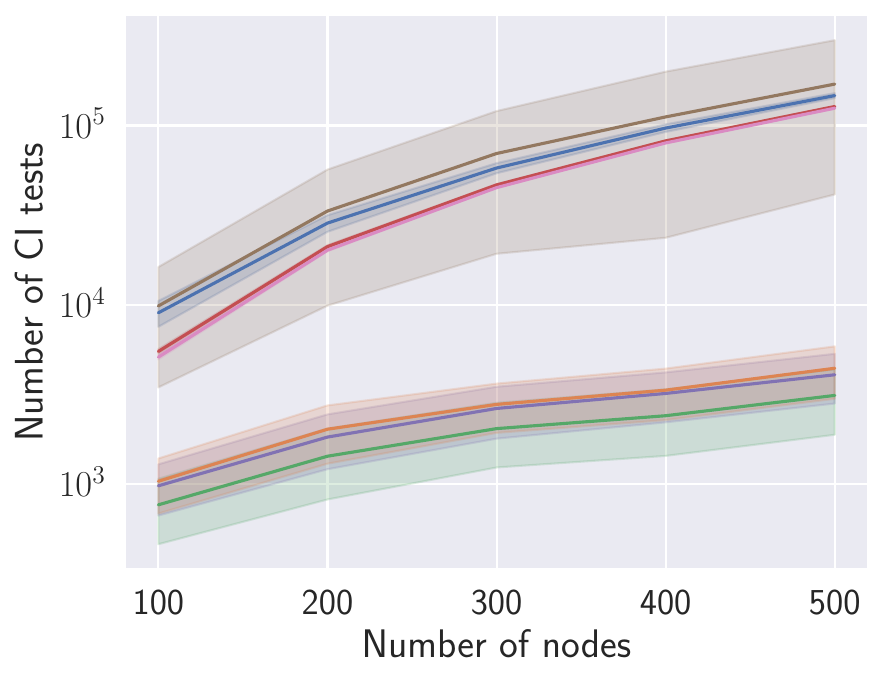}
            \includegraphics[width=\linewidth]{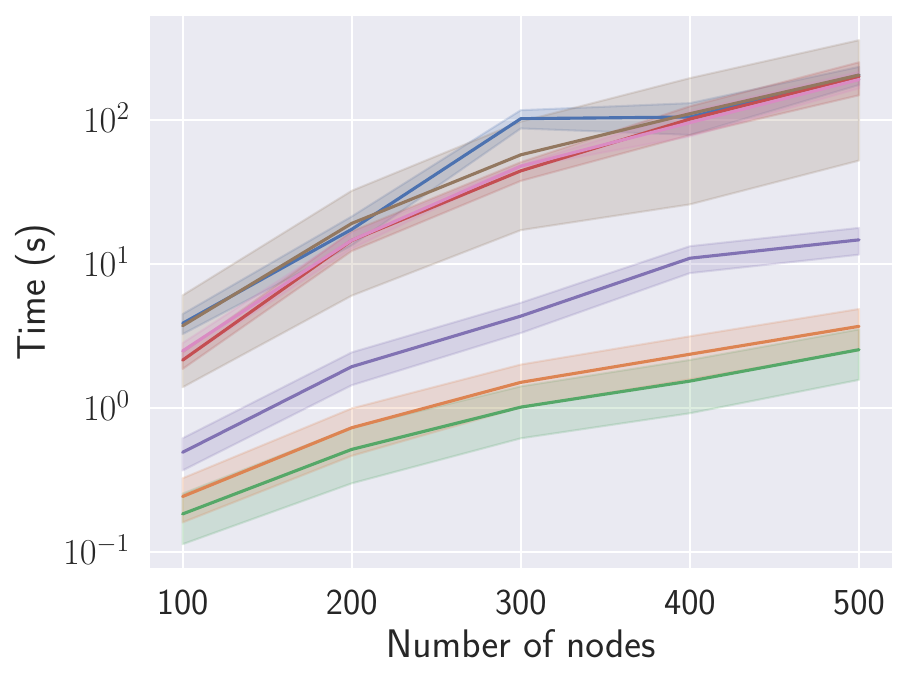}
            \includegraphics[width=\linewidth]{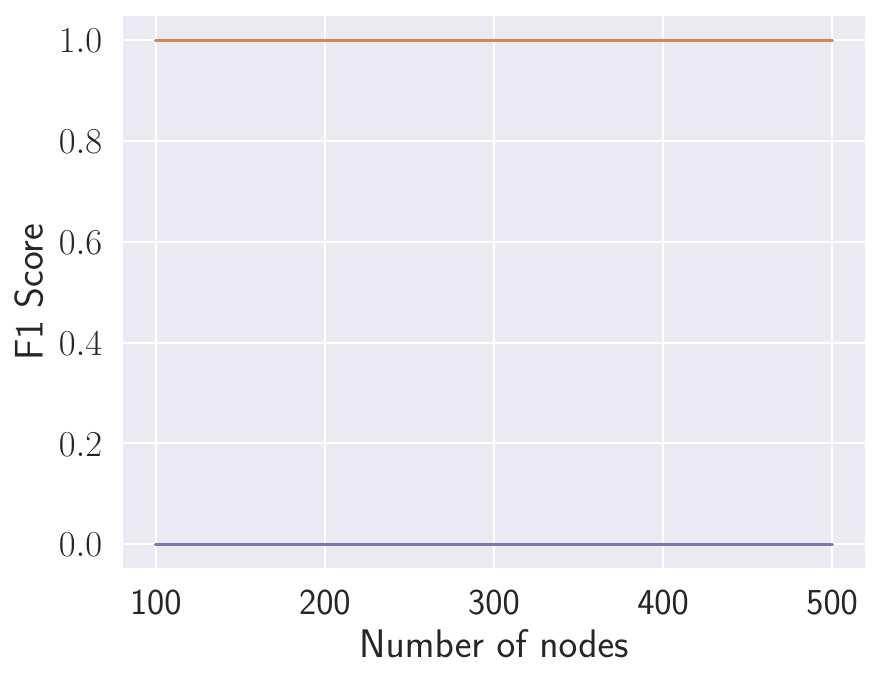}
            \includegraphics[width=\linewidth]{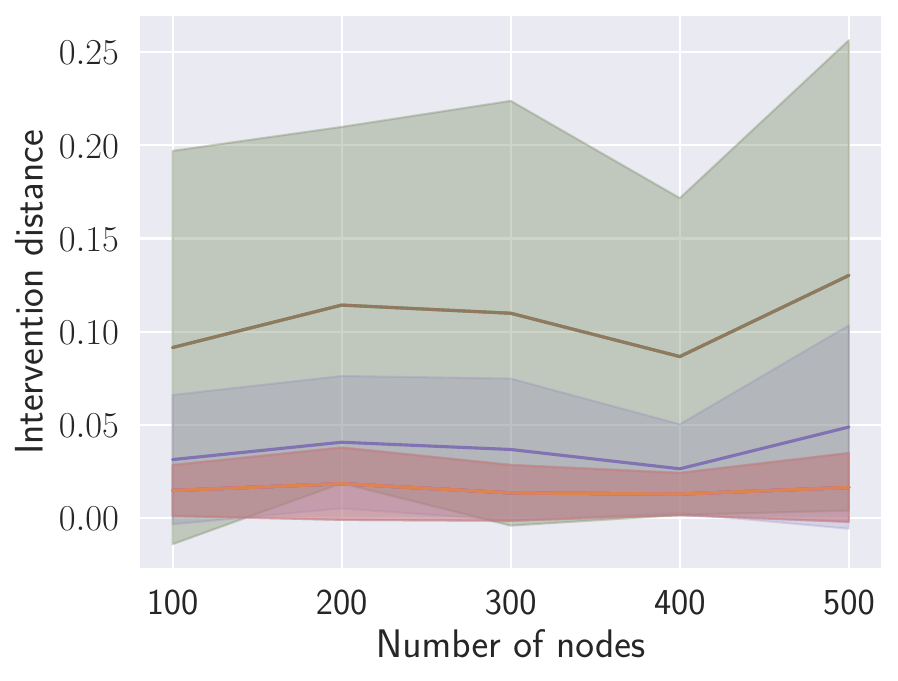}
        \end{subfigure}
        \begin{subfigure}[b]{0.32\linewidth}
            \centering
            \caption*{$\quad$ Fisher-Z tests}
            \includegraphics[width=\linewidth]{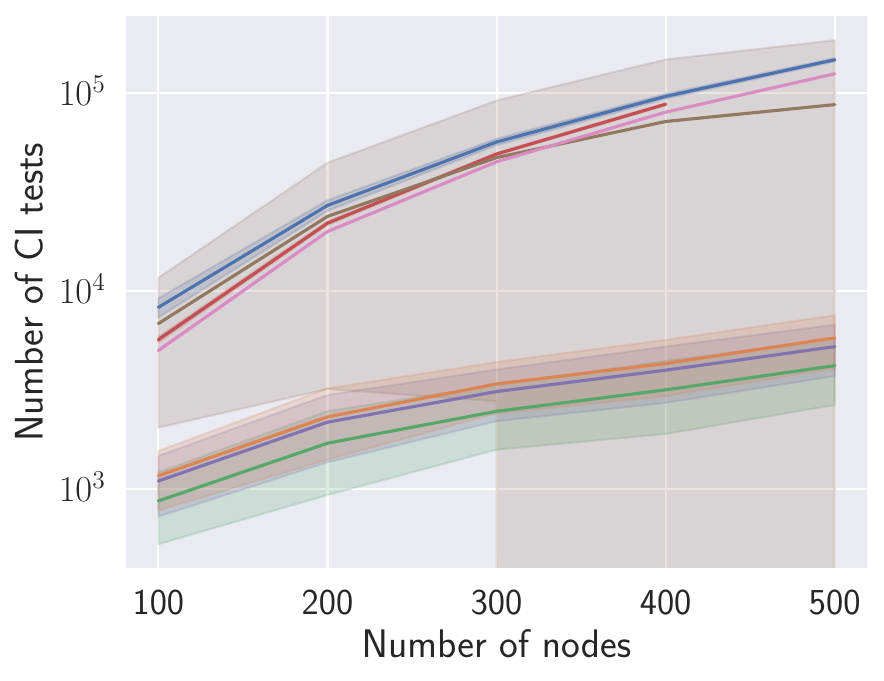}
            \includegraphics[width=\linewidth]{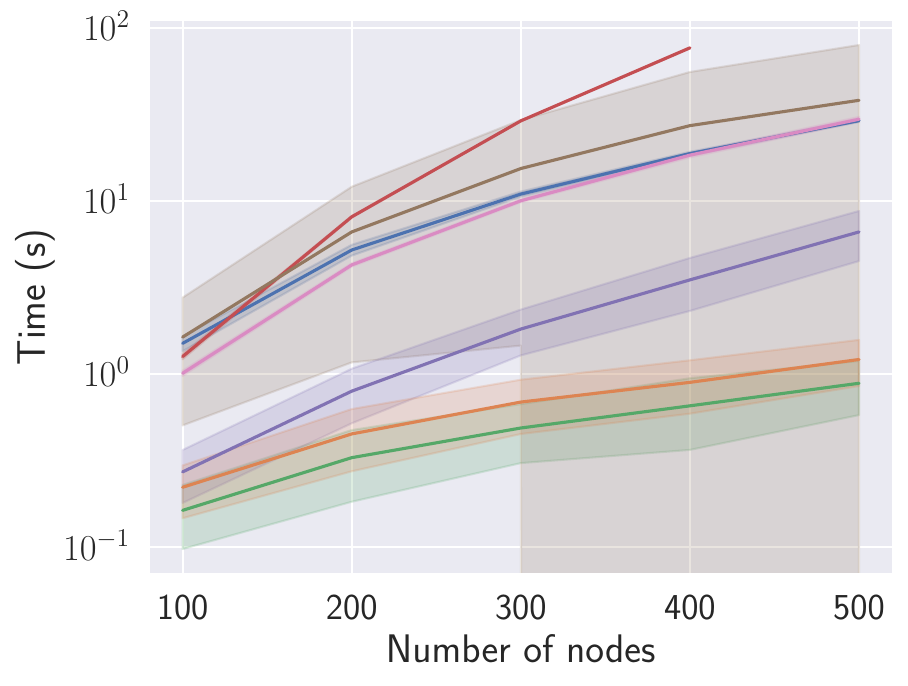}
            \includegraphics[width=\linewidth]{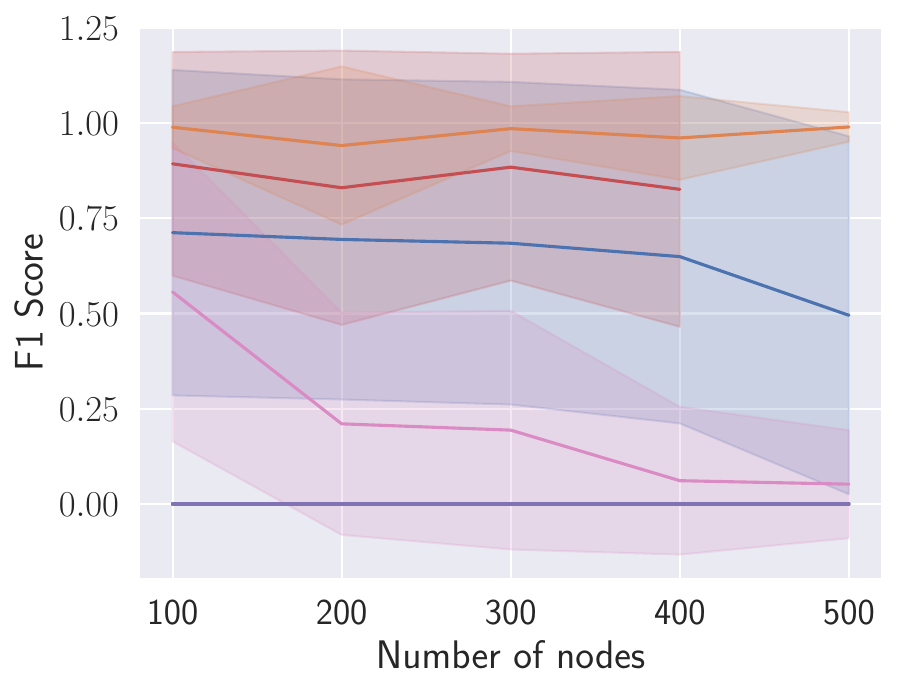}
            \includegraphics[width=\linewidth]{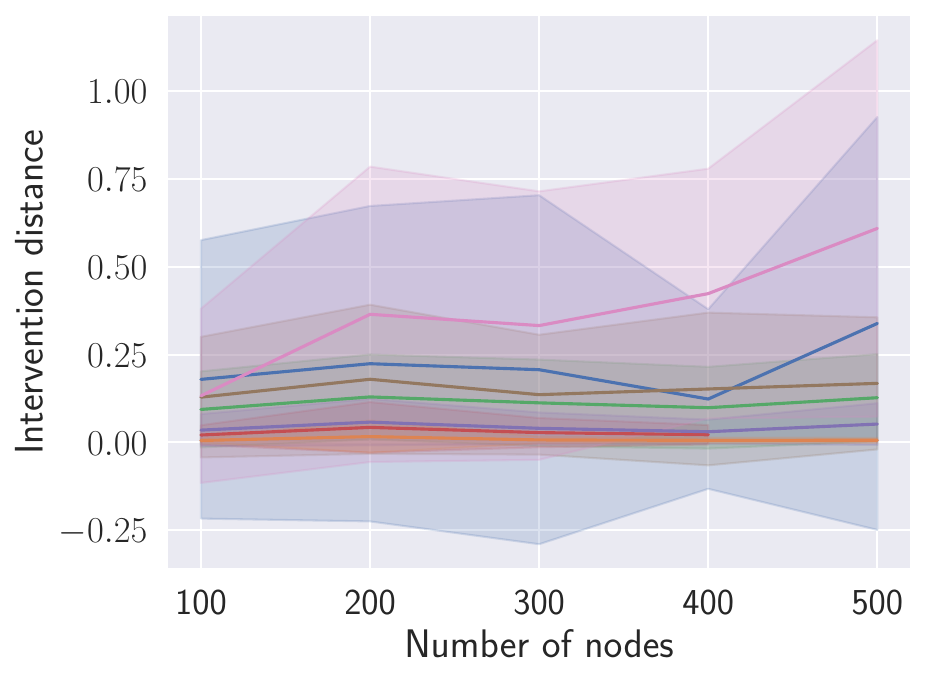}
        \end{subfigure}
        \begin{subfigure}[b]{0.32\linewidth}
            \centering
            \caption*{$G^2$ tests}
            \includegraphics[width=\linewidth]{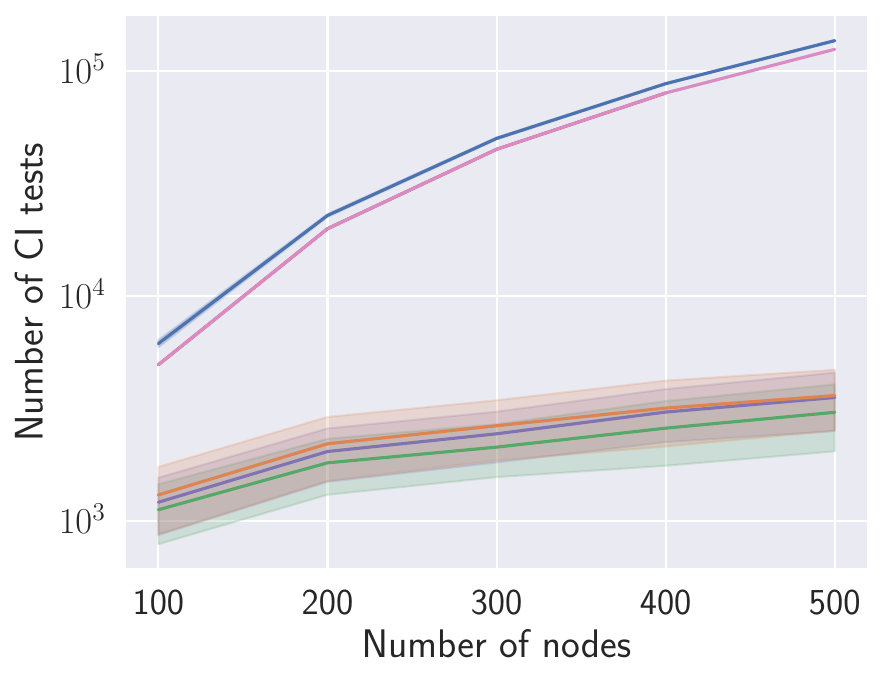}
            \includegraphics[width=\linewidth]{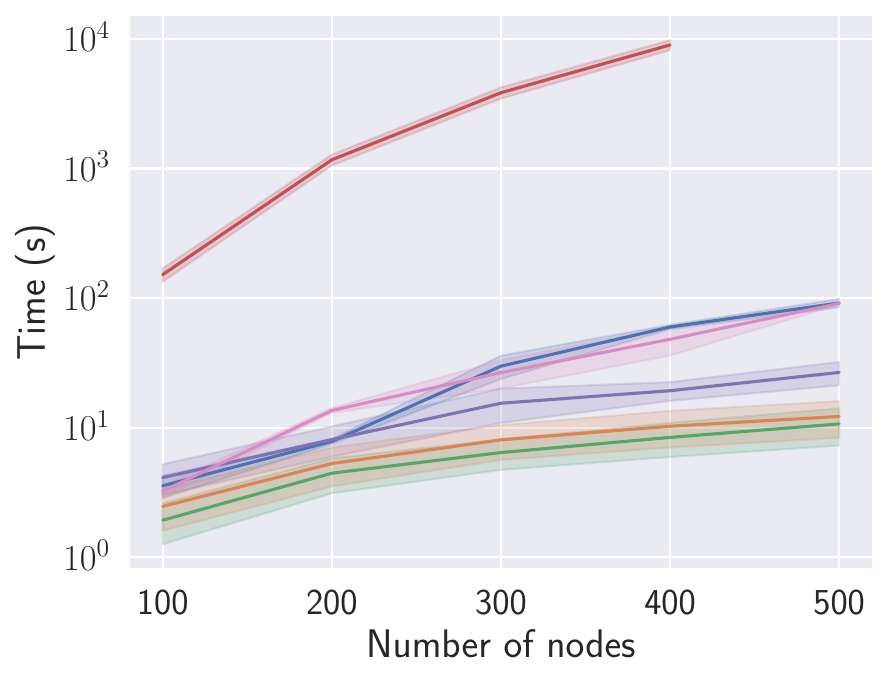}
            \includegraphics[width=\linewidth]{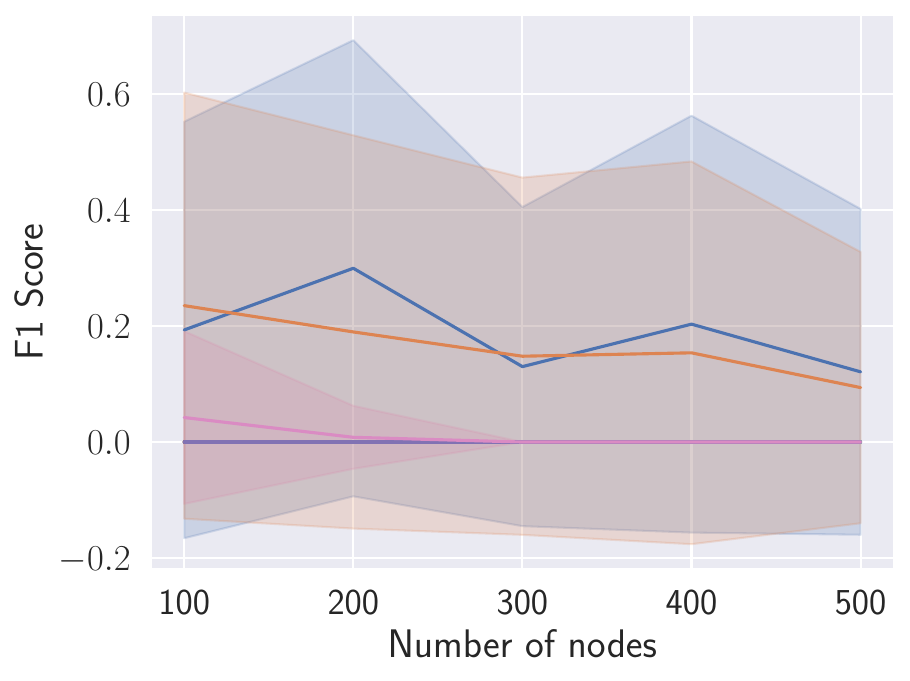}
            \includegraphics[width=\linewidth]{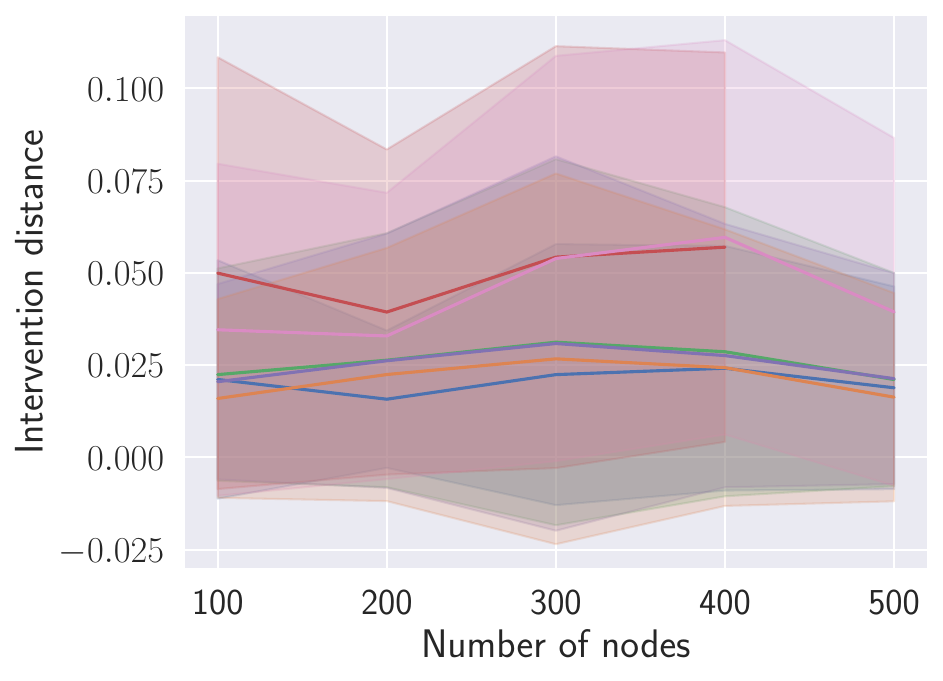}
        \end{subfigure}
    \end{subfigure}
    \caption{Results over number of nodes with $n_{\mathbf{D}} = 10000$, $\overline{d} = 2$ and $d_{\max} = 10$ with identifiable target pairs. The shadow area denotes the range of the standard deviation.}
    \label{fig:ident_results}
\end{figure*}

\subsection{Various Number of Samples}
\label{sec:samples}
In this section with experiment with a fixed number of nodes at $n_{\mathbf{V}} = 200$ and vary the number of synthetic data samples among $n_{\mathbf{D}} \in [500, 1000, 5000, 10000]$.
We maintain every other setting at expected degree of $\overline{d} = 2$ and maximum degree of $d_{\max} = 10$, and use a significance level of $\alpha = 0.01$ for all algorithms and CI tests.
We only evaluate the finite data cases of Fisher-Z CI tests on linear Gaussian data and $G^2$ CI tests on binary data, as oracle d-separation CI tests are not affected by the sample size.

We report our results in \Cref{fig:samples}.
The number of samples has no impact on the number of CI tests and computation time for Fisher-Z tests.
In the case of $G^2$ tests it slightly increases the number of CI tests and computation time of LOAD and other local methods, and drastically increases the computation time of MARVEL.

More samples help in recovering higher quality adjustment sets in both data domains, as LOAD achieves one of the best F1 scores and intervention distance over all samples.

\begin{figure*}[ht]
    \centering
    \begin{subfigure}[b]{\linewidth}
        \centering
        \includegraphics[width=.8\linewidth]{experiments/legend.pdf}
    \end{subfigure}
    \begin{subfigure}[b]{\linewidth}
        \centering
        \caption*{$\quad$ Fisher-Z tests}
        \includegraphics[width=.24\linewidth]{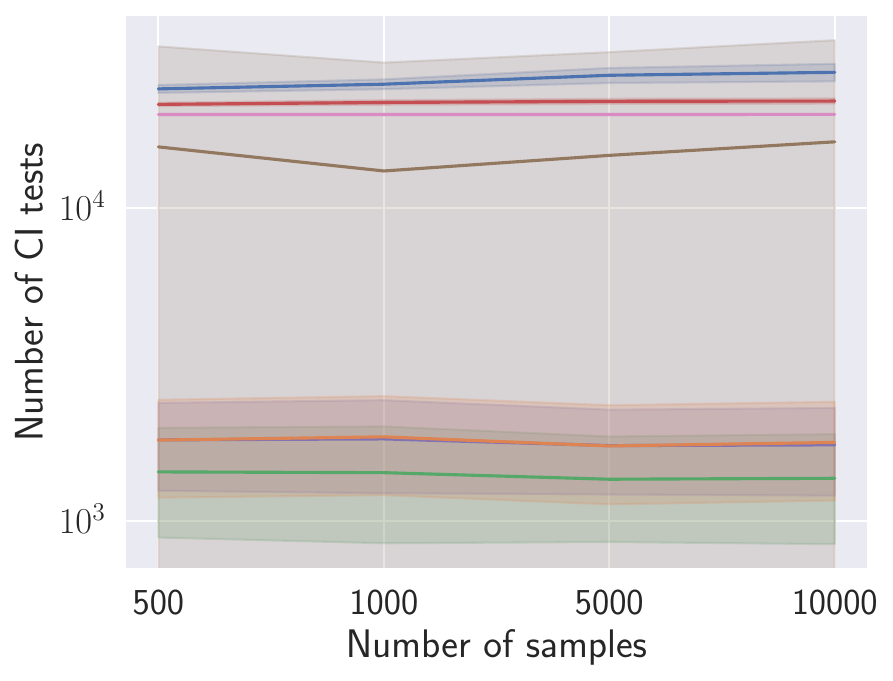}
        \includegraphics[width=.24\linewidth]{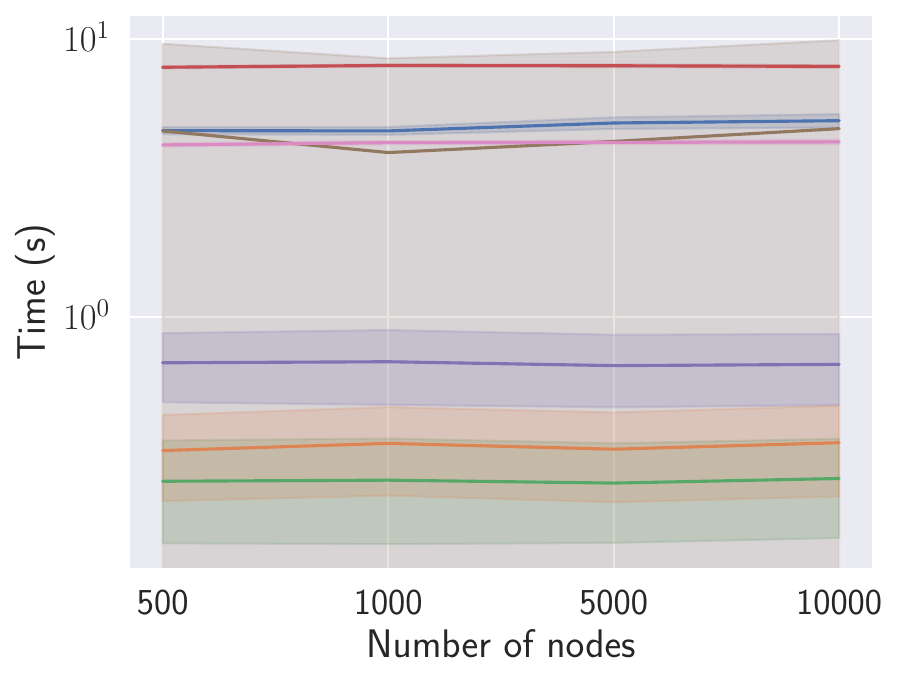}
        \includegraphics[width=.24\linewidth]{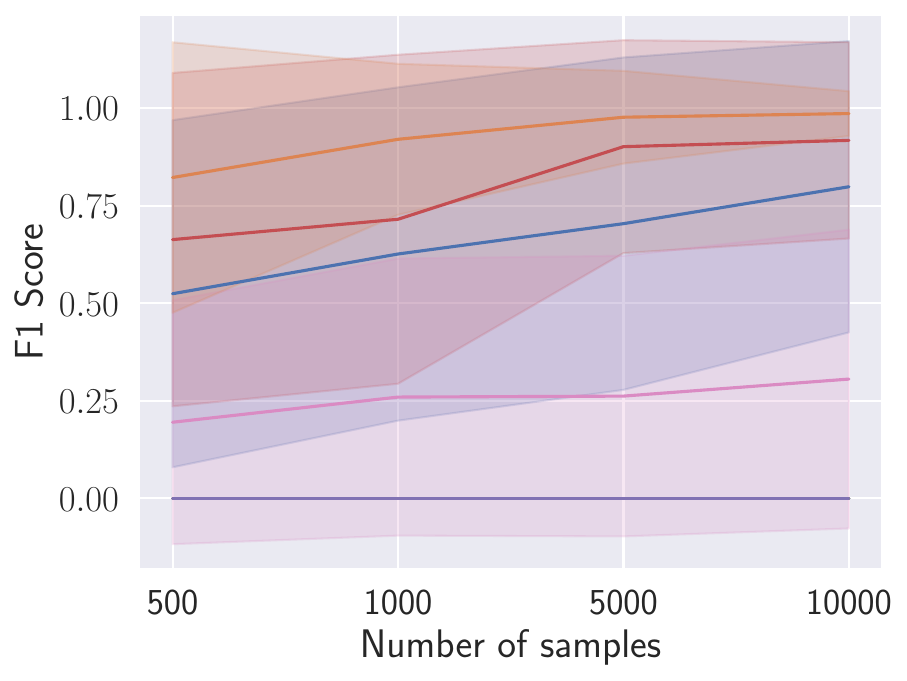}
        \includegraphics[width=.24\linewidth]{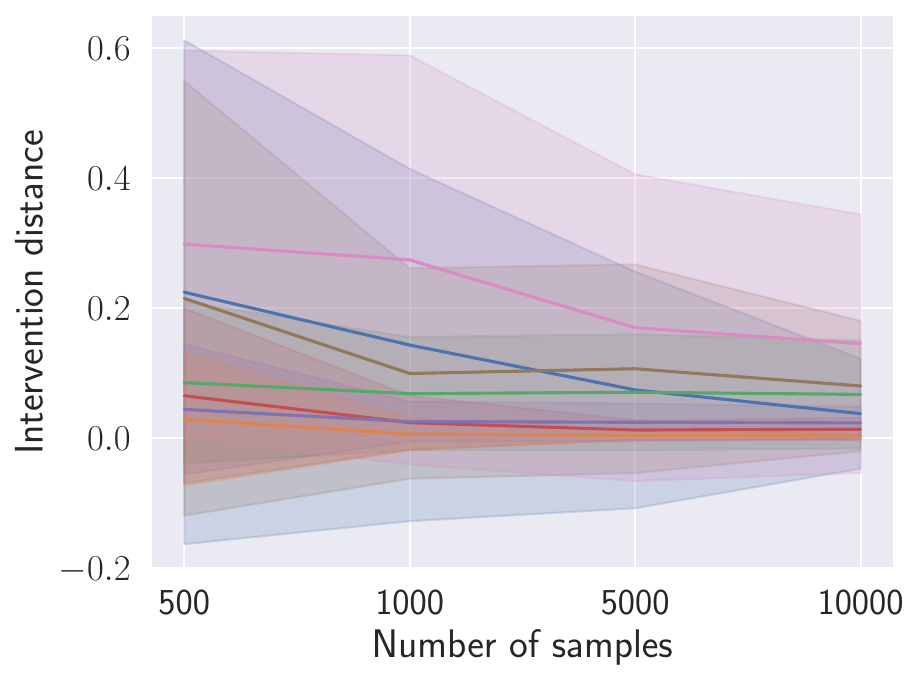}
    \end{subfigure}
    \begin{subfigure}[b]{\linewidth}
        \centering
        \caption*{$G^2$ tests}
        \includegraphics[width=.24\linewidth]{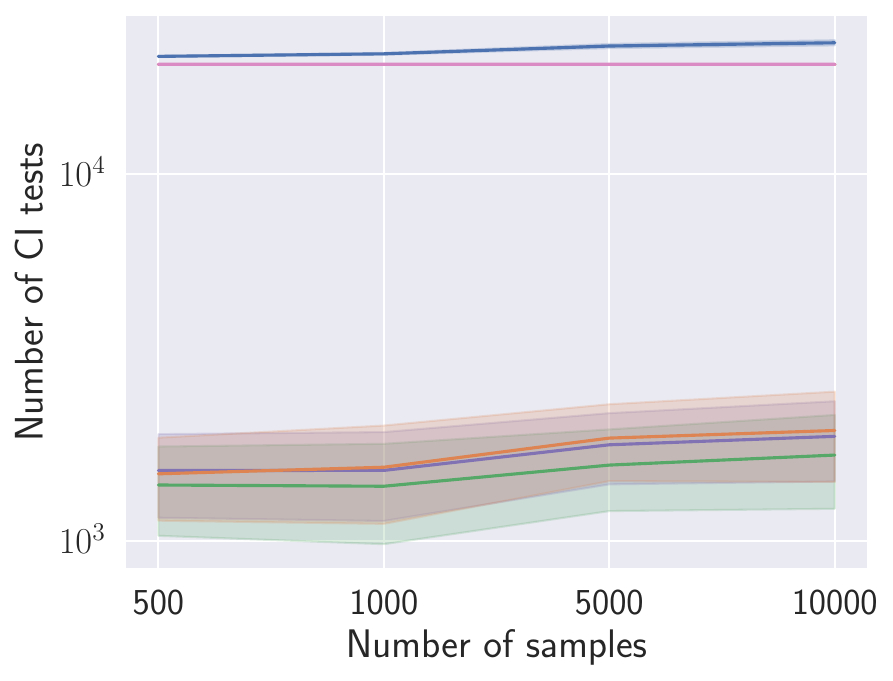}
        \includegraphics[width=.24\linewidth]{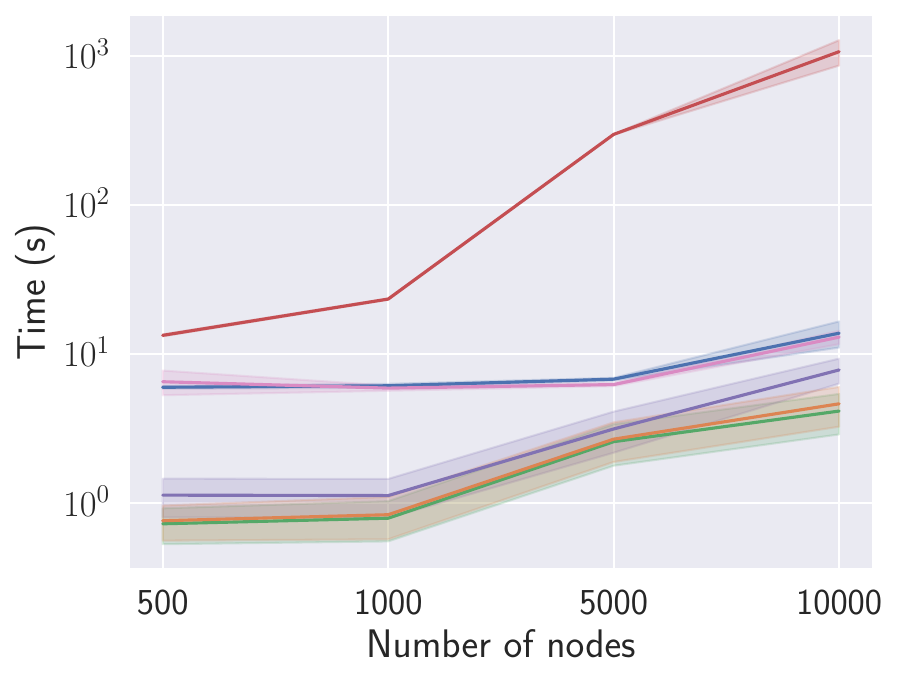}
        \includegraphics[width=.24\linewidth]{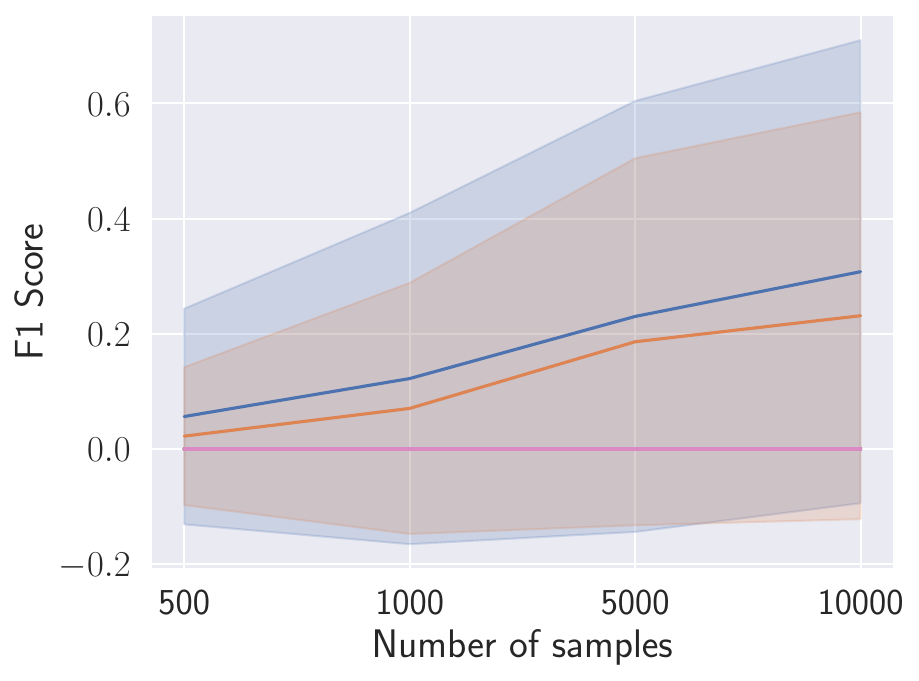}
        \includegraphics[width=.24\linewidth]{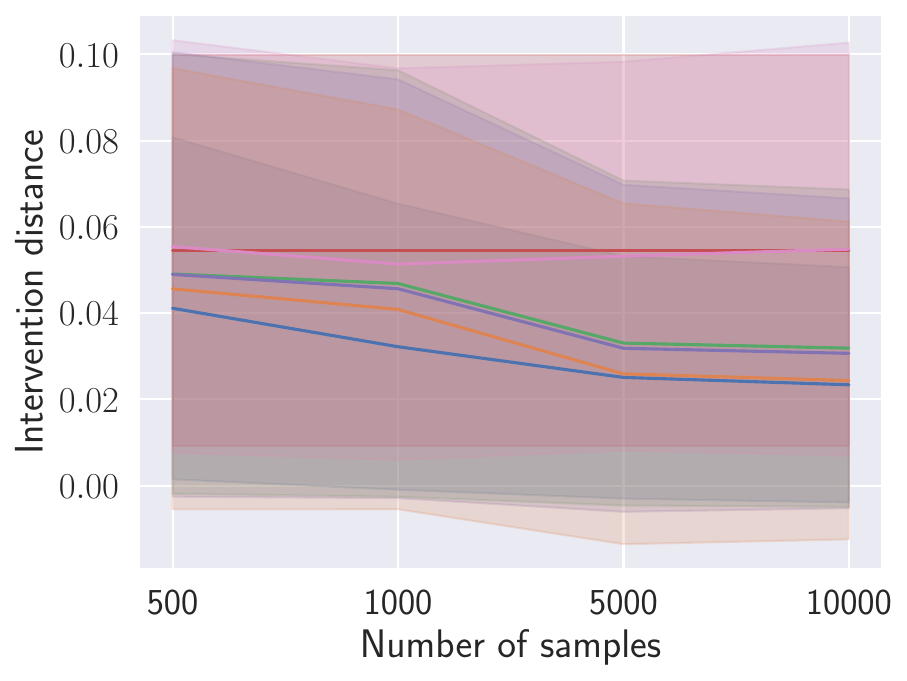}
    \end{subfigure}
    \caption{Results over number of samples with $n_{\mathbf{V}} = 200$, $\overline{d} = 2$ and $d_{\max} = 10$ and target pairs such that one is an explicit ancestor of the other. The shadow area denotes the range of the standard deviation.}
    \label{fig:samples}
\end{figure*}

\subsection{Various Expected Degrees}
\label{sec:degrees}

In \Cref{fig:degrees} we show results for various number of expected degrees $\overline{d} \in [2.0, 2.5, 3.0]$, and a fixed number of nodes at $n_{\mathbf{V}} = 200$,  maximum degree of $d_{\max} = 10$ and  $n_{\mathbf{D}} = 10000$.
We use a significance level of $\alpha = 0.01$ for all algorithms and CI tests.

\Cref{fig:degrees} shows that increasing the expected degree makes the problem setting much more difficult for all algorithms, as computational requirements generally increase, while the quality of the recovered adjustment sets become worse.
Nonetheless, the computational requirements of LOAD remain close to local methods, while still achieving one of the best F1 scores and intervention distances across data domains.

\begin{figure*}[ht!]
    \centering
    \begin{subfigure}[b]{\linewidth}
        \centering
        \includegraphics[width=.8\linewidth]{experiments/legend.pdf}
    \end{subfigure}
    \begin{subfigure}[b]{.8\linewidth}
        \begin{subfigure}[b]{0.32\linewidth}
            \centering
            \caption*{ $\quad$ d-separation tests}
            \includegraphics[width=\linewidth]{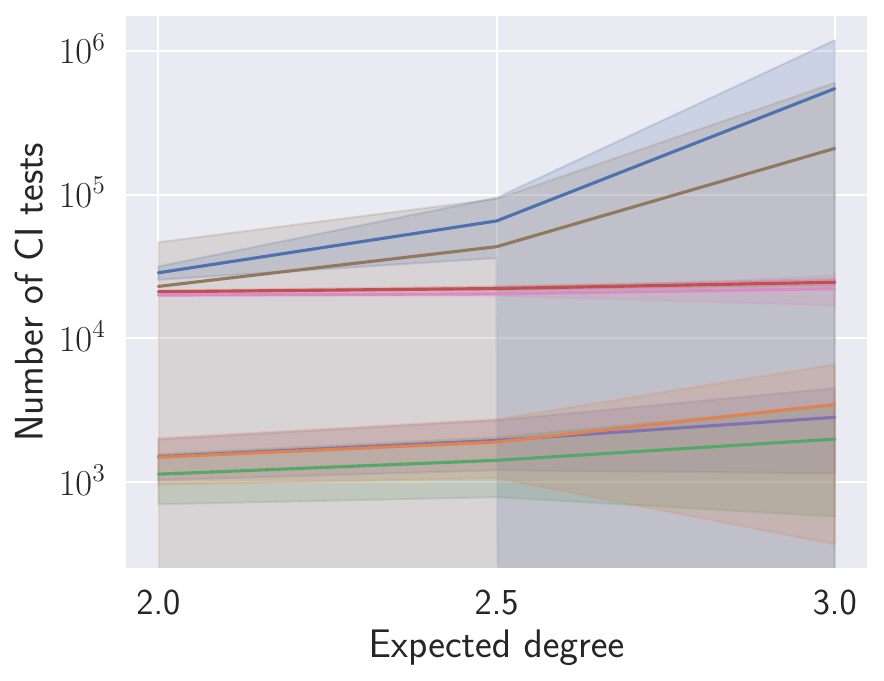}
            \includegraphics[width=\linewidth]{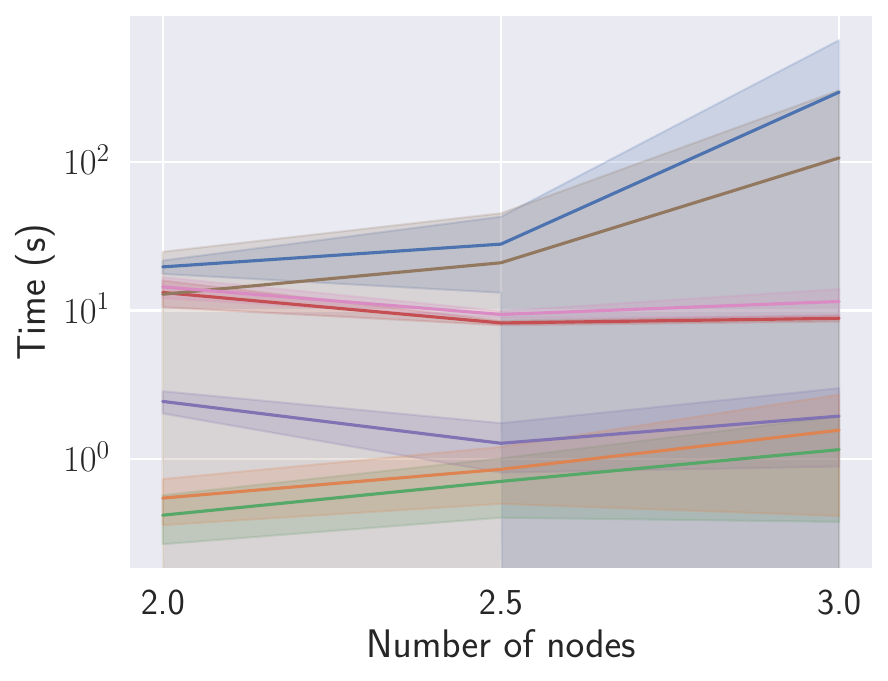}
            \includegraphics[width=\linewidth]{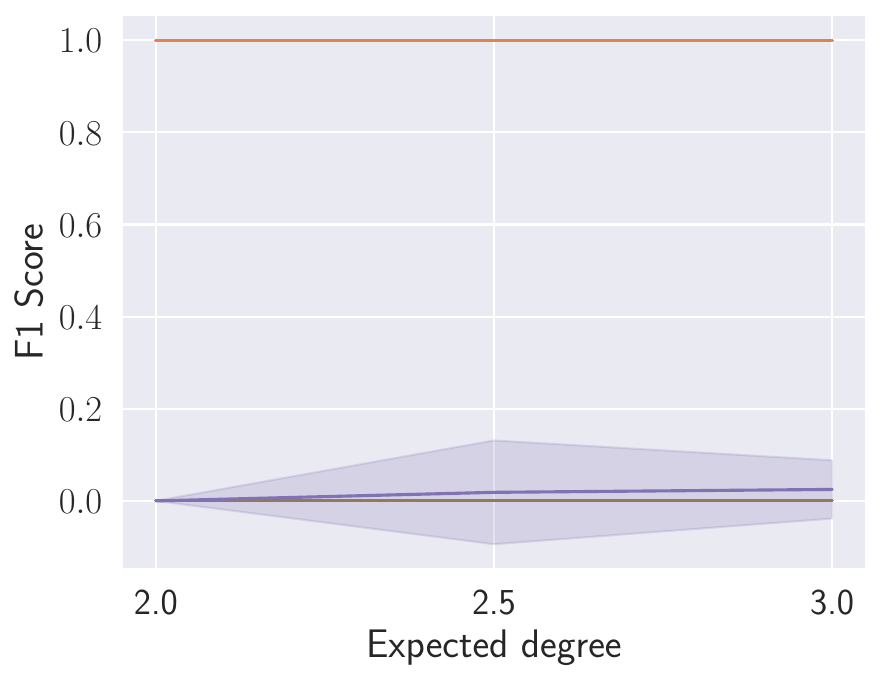}
            \includegraphics[width=\linewidth]{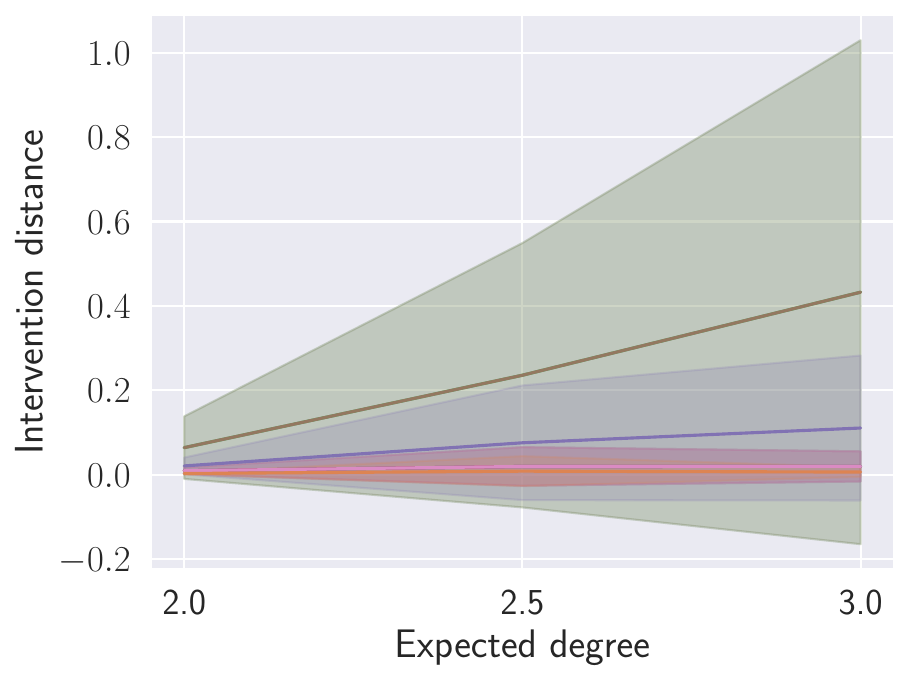}
        \end{subfigure}
        \begin{subfigure}[b]{0.32\linewidth}
            \centering
            \caption*{$\quad$ Fisher-Z tests}
            \includegraphics[width=\linewidth]{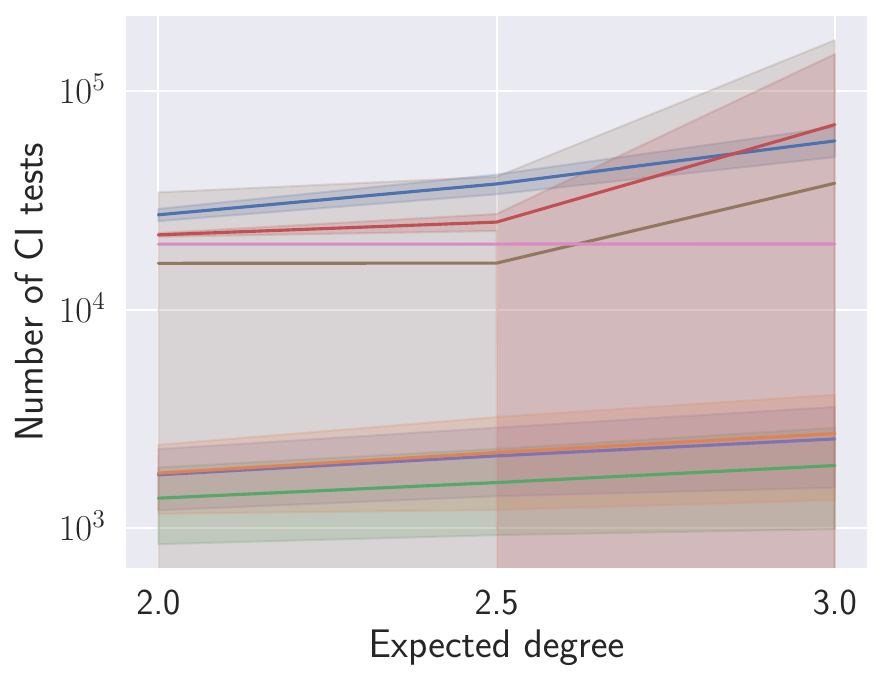}
            \includegraphics[width=\linewidth]{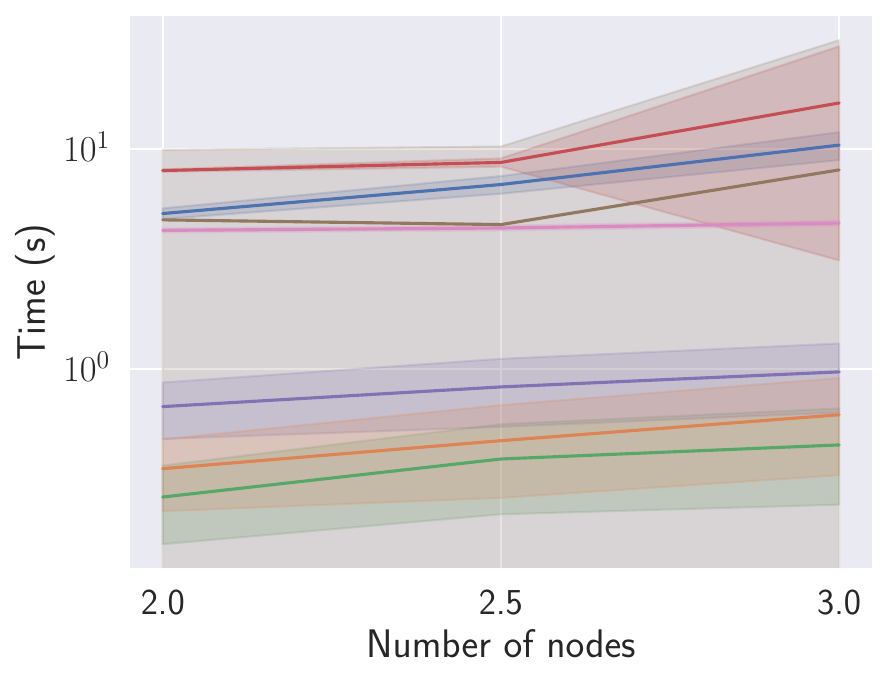}
            \includegraphics[width=\linewidth]{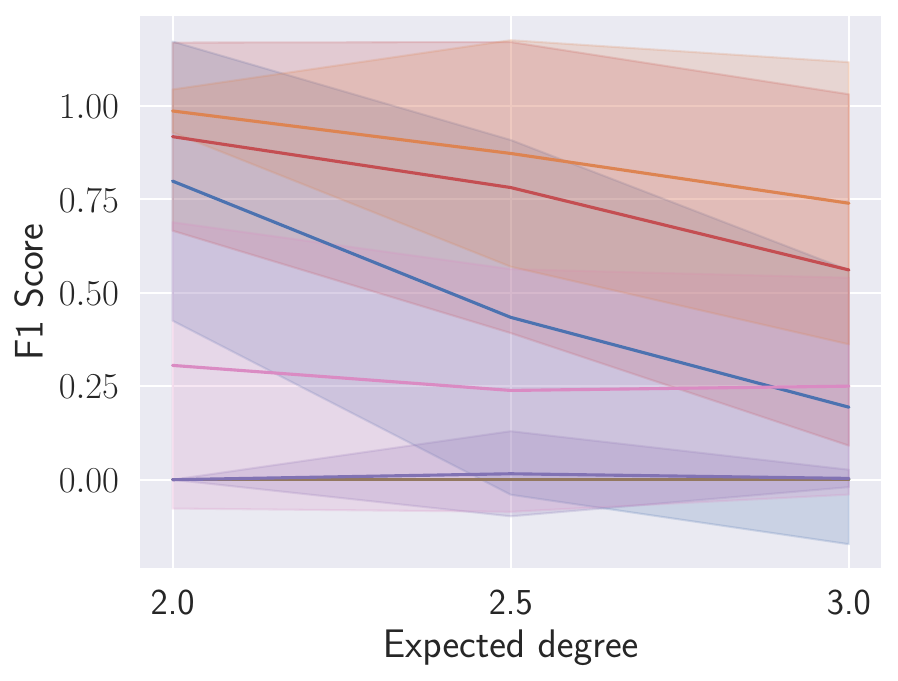}
            \includegraphics[width=\linewidth]{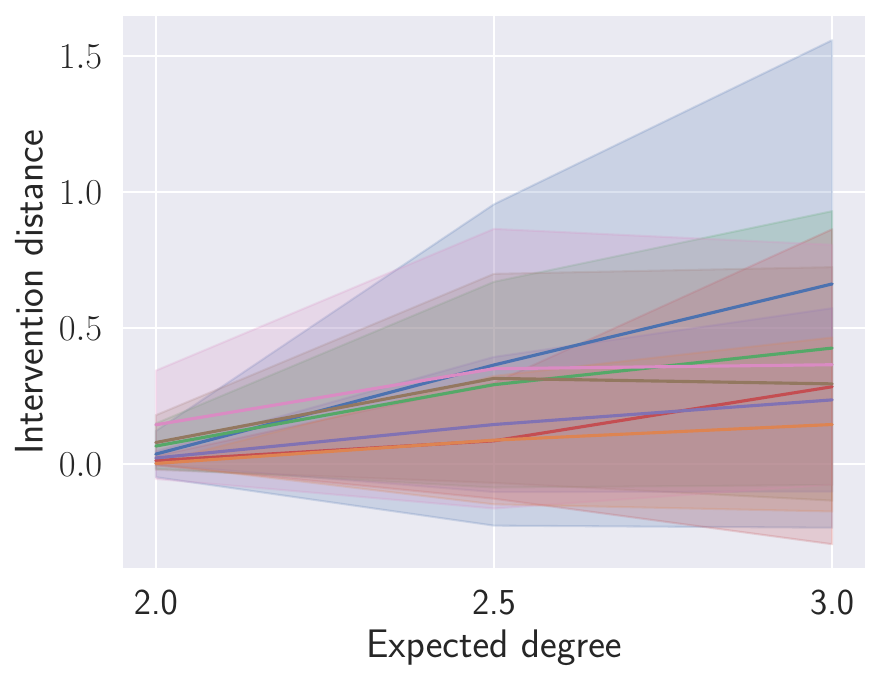}
        \end{subfigure}
        \begin{subfigure}[b]{0.32\linewidth}
            \centering
            \caption*{$G^2$ tests}
            \includegraphics[width=\linewidth]{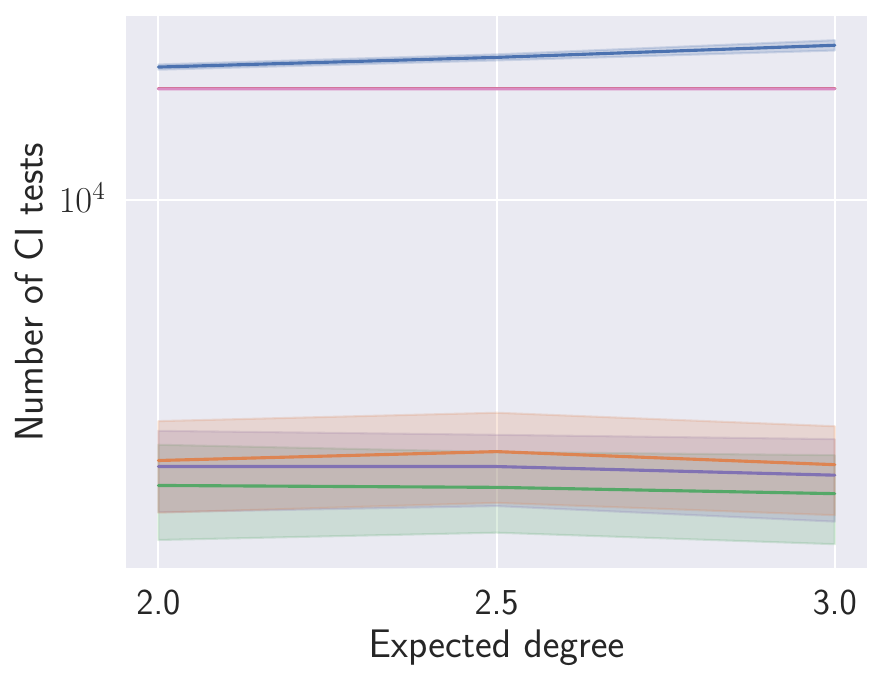}
            \includegraphics[width=\linewidth]{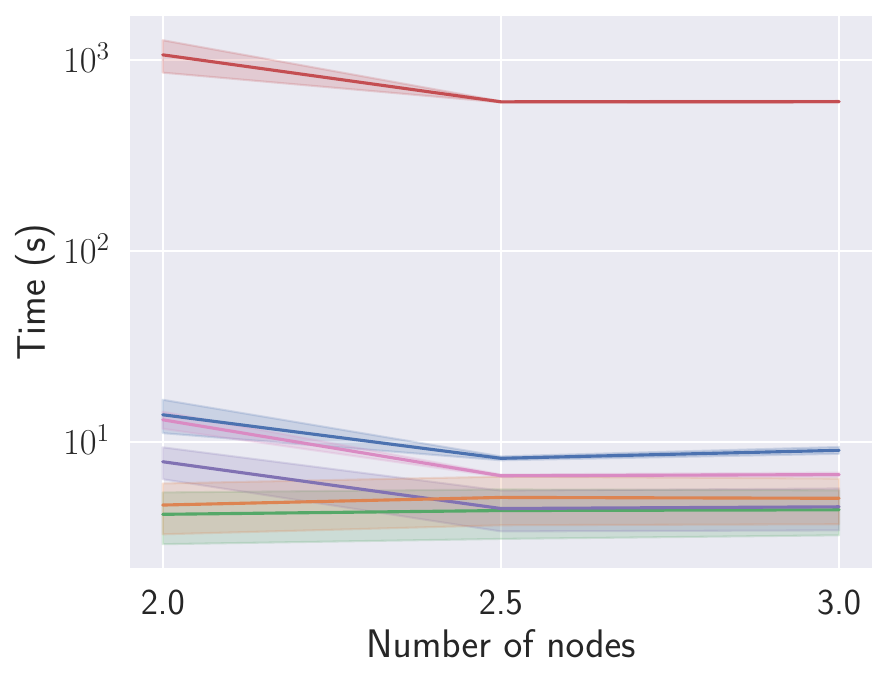}
            \includegraphics[width=\linewidth]{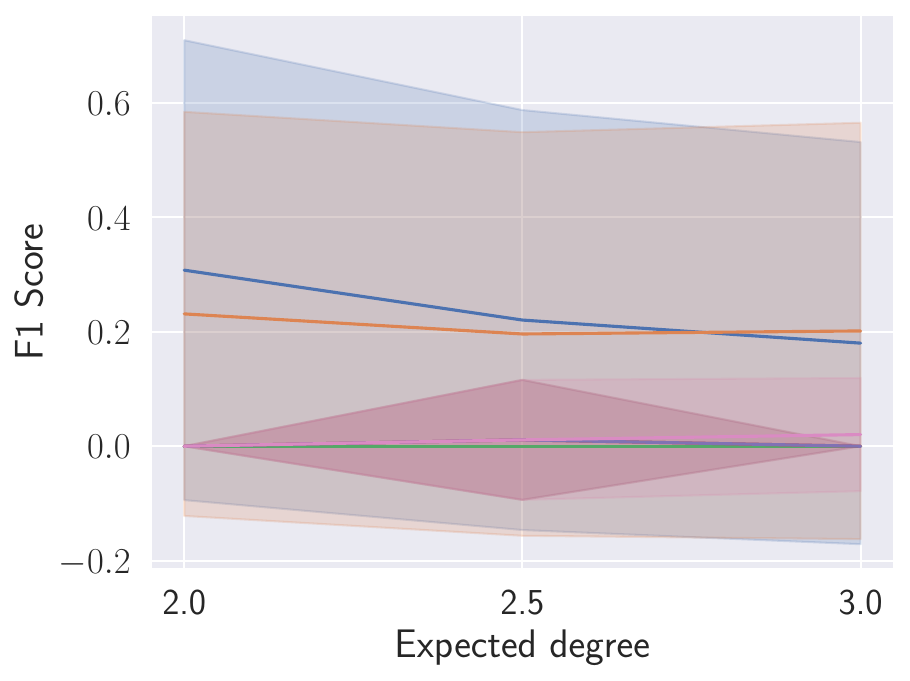}
            \includegraphics[width=\linewidth]{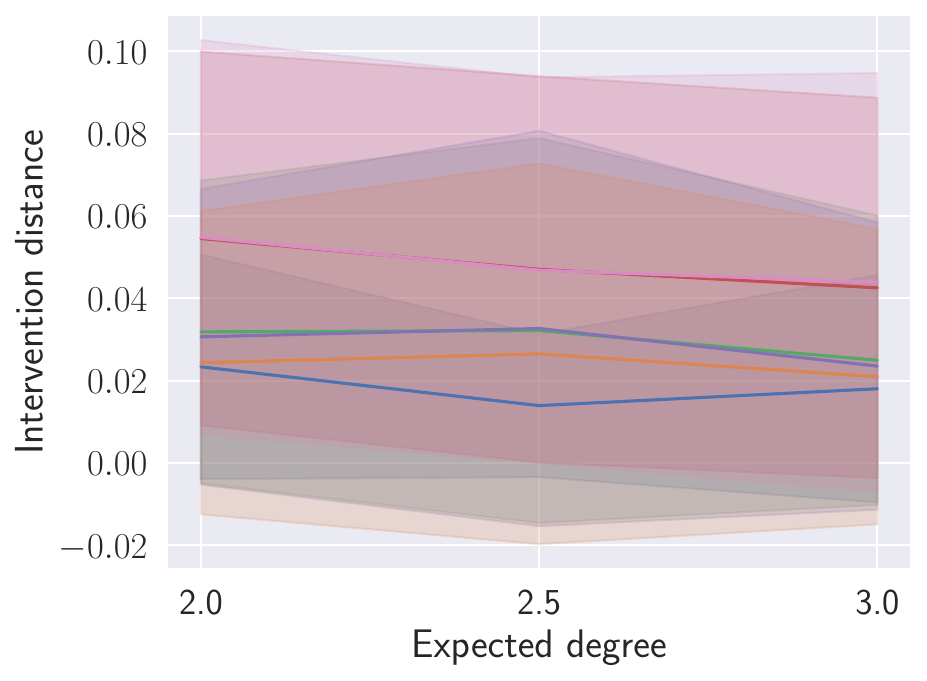}
        \end{subfigure}
    \end{subfigure}
    \caption{Results over number of expected degrees, with $n_{\mathbf{V}} = 200$, $n_{\mathbf{D}} = 10000$, and $d_{\max} = 10$ and target pairs such that one is an explicit ancestor of the other, for different settings in each column.
    The shadow area denotes the range of the standard deviation.}
    \label{fig:degrees}
\end{figure*}

\end{document}